\def\eqref#1{equation~\ref{#1}}
\def\1{\bm{1}}
\def\norm#1{\Vert{#1}\Vert}
\def\ve{{\bm{e}}}
\def\vs{{\bm{s}}}
\DeclareMathAlphabet{\mathsfit}{\encodingdefault}{\sfdefault}{m}{sl}
\SetMathAlphabet{\mathsfit}{bold}{\encodingdefault}{\sfdefault}{bx}{n}
\newcommand{\E}{\mathbb{E}}
\newcommand{\R}{\mathbb{R}}
\newcommand{\alphas}{\sqrt{\bar{\alpha}_t}}
\newcommand{\alphasC}{\sqrt{1-\bar{\alpha}_t}}
\newcommand{\erf}{\mathrm{erf}}
\newcommand{\Loss}{\mathcal{L}}
\newcommand{\sgn}{\mathrm{sgn}}
\newcommand{\bs}{\boldsymbol}
\newcommand{\mc}{\mathcal}
\newcommand{\bx}{\boldsymbol{x}}
\newcommand{\bth}{\boldsymbol{\theta}}
\definecolor{citecolor}{HTML}{0071bc}
\theoremstyle{plain}
\newtheorem{theorem}{Theorem}[section]
\newtheorem{lemma}[theorem]{Lemma}
\theoremstyle{definition}
\newtheorem{assumption}[theorem]{Assumption}
\title{Towards Understanding Text Hallucination of Diffusion Models via Local Generation Bias}
\author{Rui Lu$^1$\thanks{Equal contribution, work done during Rui's visit at Princeton} , Runzhe Wang$^{2*}$, Kaifeng Lyu$^3$, Xitai Jiang$^4$, Gao Huang$^1$, Mengdi Wang$^2$\\
\textsuperscript{1}Department of Automation, Tsinghua University\\
\textsuperscript{2}Electrical and Computer Engineering, Princeton University\\
\textsuperscript{3}Simons Institute, UC Berkeley\\
\textsuperscript{4}Qiuzhen College, Tsinghua University\\
\texttt{\{r-lu21, jiang-xt21\}@mails.tsinghua.edu.cn} \\
\texttt{\{runzhew, mengdiw\}@princeton.edu } \\
\texttt{kaifenglyu@berkeley.edu, gaohuang@tsinghua.edu.cn}\\
}
\begin{document}

\maketitle

\begin{abstract}
Score-based diffusion models have achieved incredible performance in generating realistic images, audio, and video data. While these models produce high-quality samples with impressive details, they often introduce unrealistic artifacts, such as distorted fingers or hallucinated texts with no meaning. This paper focuses on textual hallucinations, where diffusion models correctly generate individual symbols but assemble them in a nonsensical manner. Through experimental probing, we consistently observe that such phenomenon is attributed it to the network's local generation bias. Denoising networks tend to produce outputs that rely heavily on highly correlated local regions, particularly when different dimensions of the data distribution are nearly pairwise independent. This behavior leads to a generation process that decomposes the global distribution into separate, independent distributions for each symbol, ultimately failing to capture the global structure, including underlying grammar. Intriguingly, this bias persists across various denoising network architectures including MLP and transformers which have the structure to model global dependency. These findings also provide insights into understanding other types of hallucinations, extending beyond text, as a result of implicit biases in the denoising models. Additionally, we theoretically analyze the training dynamics for a specific case involving a two-layer MLP learning parity points on a hypercube, offering an explanation of its underlying mechanism.
\end{abstract}

\section{Introduction}
% \textcolor{red}{Introduce diffusion model and hallucination. Raise the question. Directly answer the question: it is related to network's bias to denoise and generate locally. Explain gradcam probe that discover the phenomenon. Interpret how this leads to hallucination. Explain that misunderstanding may attribute this to model's inductive bias, but this bias persists to happen in architectures that do not enforce local dependency and enable global receptive field like transformer and MLP. This indicates such bias originates from the implicit bias of score network during training. It prefers to learn a functionality that focus on local region that is highly correlated. We theoretically study a simplest model to understand such local generation bias.}

Inspired by the diffusion process in physics \citep{sohl2015deep}, diffusion models learn to generate samples from a specific data distribution by fitting its score function, gradually transforming pure Gaussian noise into desired samples. These models ~\citep{song2020denoising, song2019generative, song2021maximum, ho2020denoising} demonstrate remarkable capability in generating high-quality samples with significant diversity, establishing them as the \textit{de facto} standard generative models for various tasks, including image generation, video generation ~\citep{sora}, inpainting~\citep{lugmayr2022repaint}, super-resolution~\citep{gao2023implicit}, and more. However, despite the impressively realistic details produced, diffusion models consistently exhibit artifacts in their outputs. One common issue is the generation of plausible low-level features or local details while failing to accurately model complex 3D objects or the underlying semantics~\citep{borji2023qualitative, liu2023intriguing}. This phenomenon, known as hallucination, occurs when the generated samples either do not exist in real-world distributions or contain content that lacks semantic meaning. In practice, even large generative models like StableDiffusion~\citep{rombach2022stablediffusion}, trained on enormous datasets, still suffer from these issues—often generating hands with extra, missing, or distorted fingers.
\begin{figure}[htbp]
    \vspace{-10pt}
    \centering
    \subfigure[]{
    \begin{minipage}[b]{0.18\textwidth}
        \centering
        \includegraphics[width=\textwidth]{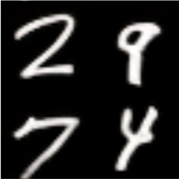} % Replace with your image
        \label{fig:1-1}
        \vspace{-10 pt}
    \end{minipage}}
    \hspace{0.02\textwidth} % Add horizontal space between the figures
    \subfigure[]{
    \begin{minipage}[b]{0.18\textwidth}
        \centering
        \includegraphics[width=\textwidth]{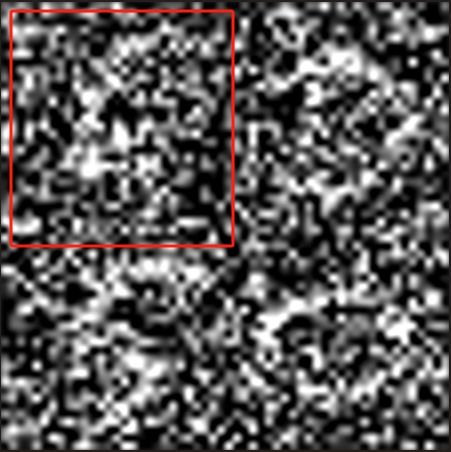} % Replace with your image
        \label{fig:1-1}
        \vspace{-10 pt}
    \end{minipage} }
    % \hspace{0.02\textwidth} % Add horizontal space between the figures
    \subfigure[]{
    \begin{minipage}[b]{0.3\textwidth}
        \centering
        \includegraphics[width=\textwidth]{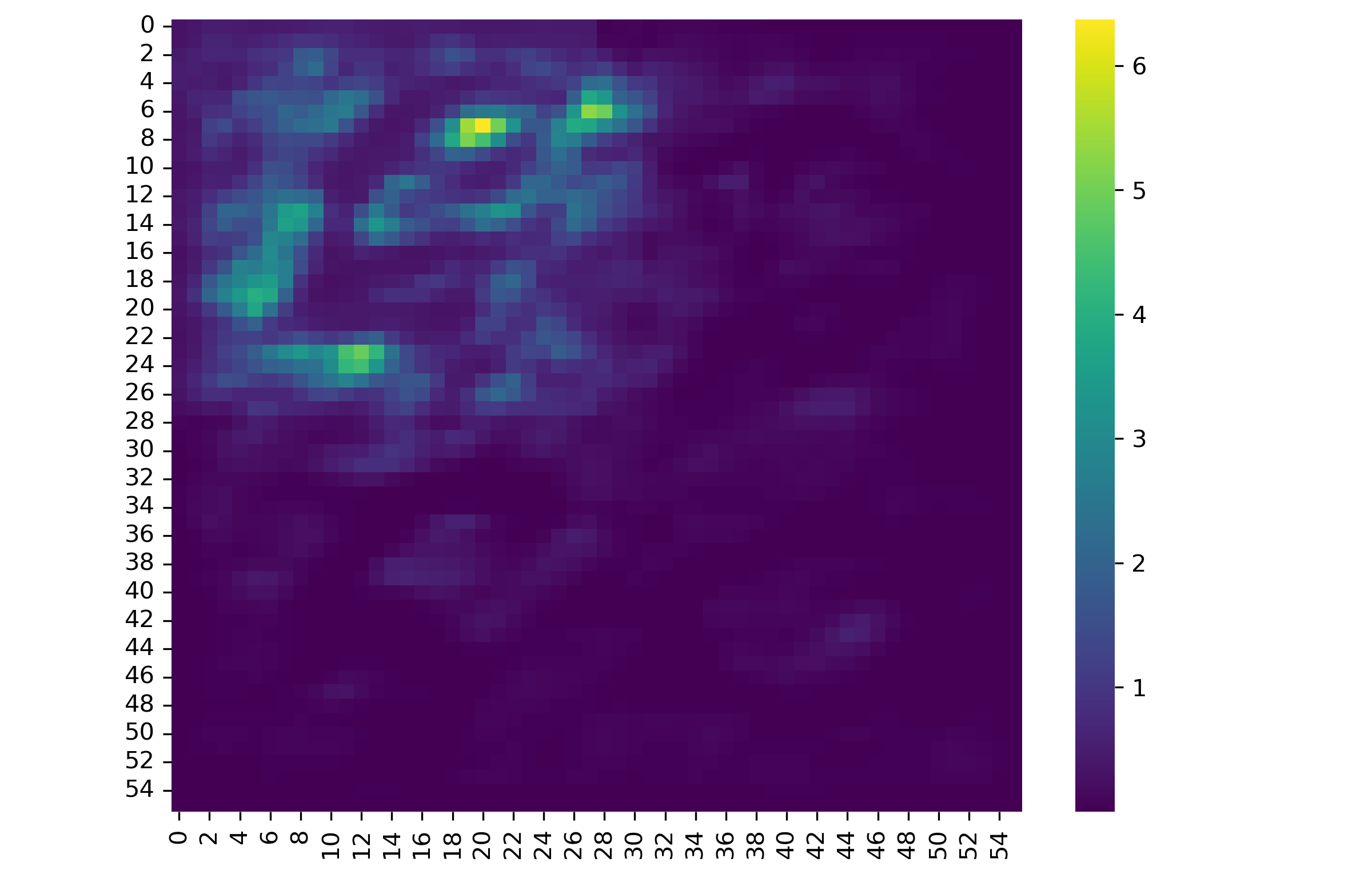} % Replace with your image
        \label{fig:1-2}
        \vspace{-10 pt}
    \end{minipage}}
    % \hspace{0.02\textwidth} % Add horizontal space between the figures
    \subfigure[]{
    \begin{minipage}[b]{0.18\textwidth}
        \centering
        \includegraphics[width=\textwidth]{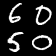} % Replace with your image
        \label{fig:1-3}
        \vspace{-10 pt}
    \end{minipage}}
    \caption{An illustration for local generation bias. We construct a synthetic dataset (a) that all images satisfy the rule that sum of first row equals second row, i.e. 2+9=7+4. Diffusion model starts from noise $x_t$ (b) and using denoising network to generate digit images in four quarters. We found that the top-left region's denoising primarily depends on its own data, depicted by saliency map (c). This means the diffusion model independently generates each digit without caring any other digits, ends up with $x_0$ (d) failing to capture the relation between four digits. }
    \label{fig:all_three}
\end{figure}
In this work, we primarily focus on a special type of artifacts called text hallucinations, where generative model can correctly generate individual symbol in syllabus but assemble them in nonsensical manner. This naturally raises the following question.

\textbf{\textit{Why do diffusion models typically struggle with generating images that include text content? How do they learn these distributions and end up generating hallucinated samples?}}

% In this work, we take initial steps towards understanding these problems. We find text-form hallucination is closely related to an implicit bias of score networks when trained with score matching objective which we term as \textbf{\textit{Local Generation Bias}}. This means that the denoising network tends to produce outputs based primarily on local regions of the input. Consequently, the denoising and generation process for each local part operates independently on its own data, end up with a list of symbols uncorrelated. Using synthetic data, we investigate the mechanisms behind this bias, which appears across various denoising architectures and distributions. We observe this phenomenon using saliency map, where gradients are computed with respect to a certain symbol's pixel region to examine its dependence on the input. To measure the degree of local operation, we propose a probe called the local Dependency Ratio (LDR), which quantifies the gradient magnitude within the same local region compared with the entire input. A higher LDR indicates a stronger local generation bias. Interestingly, we discover that a high LDR emerges early in training and persists throughout extensive training steps. This implies that the diffusion model generates text samples in a simplified manner, factorizing the entire distribution into a product of marginal distributions for individual tokens and sampling each token independently, neglecting the subtle connections between tokens and the underlying rules. LDR thus becomes a good indicator for the strength of local generation bias.

This study identifies a critical limitation in diffusion models termed \textbf{Local Generation Bias}, where score networks trained via score matching exhibit a tendency to generate outputs based predominantly on localized input regions. This bias leads to uncorrelated symbol generation, as denoising processes for individual tokens operate independently, disregarding inter-token dependencies and underlying rules. To measure the degree of local operation, we propose a probe called the \textbf{Local Dependency Ratio (LDR)}, which quantifies the gradient magnitude within the same local region compared with the entire input. A higher LDR indicates a stronger local generation bias. Interestingly, we discover that a high LDR emerges early in training and persists throughout extensive training steps. LDR thus becomes a good indicator for the strength of local generation bias. Moreover, we find this bias is intrinsic in training rather than architectural limitation. Even for models like transformers~\citep{peebles2023scalable, vaswani2017attention} or MLPs that are designed to capture global dependencies, the local generation bias persists. 
% One might suspect that this bias originates from the model's architectural design—for instance, basic operations like convolution introduce such inductive biases and cause hallucinations. However, our further experiments and theoretical analysis refute this hypothesis, demonstrating its intricacy. Empirically, even when the network architecture is designed to have a global receptive field to model long-range dependencies, such as transformer~\citep{peebles2023scalable, vaswani2017attention} and MLP, the same phenomenon persists. The model continues to build its output by relying solely on local information. This suggests that the local generation bias arises from the fundamental training dynamics of score matching for certain distributions, instead of its architecture.

To gain a deeper understanding of this phenomenon, we probe into a simple case, providing insights into its underlying mechanism. Specifically, we analyze a two-layer ReLU network learning a distribution supported on the vertices of a hypercube $\{\pm 1\}^d$. This distribution can be among the vertices that satisfy a parity constraint, where the product of all $\bs{x}$ entries is 1. When fitting the target denoising function for this distribution, we find that the network has certain training bias, inducing it to separately learn $d$ univariate target function for marginal distributions on each dimension, sampling independently over $\{\pm 1\}$ for each entry. Eventually, the generation process samples uniformly over the entire hypercube rather than parity subset, where hallucination happens. This introduces an instance for how training bias may result in hallucinatory generations, and offers insights into hallucinations across other domains and modalities.

In summary, this paper contributes in three key folds.

\textbf{Identification of Local Generation Bias}: We define and analyze the phenomenon of local generation bias in diffusion models, which leads to artifacts like text hallucinations.

\textbf{Mechanistic Explanation:} We provide a detailed theoretical and empirical investigation into the causes of hallucinations, revealing that they stem from the implicit bias in the training process.

\textbf{New Analytical Tools:} We introduce the Local Dependency Ratio (LDR) as a measure of local bias and apply it to explore the diffusion model’s behavior across  training stages.

\begin{figure}[htbp]
    \centering
    \vspace{-10 pt}
    \begin{minipage}[b]{0.25\textwidth}
        \centering
        \includegraphics[width=\textwidth]{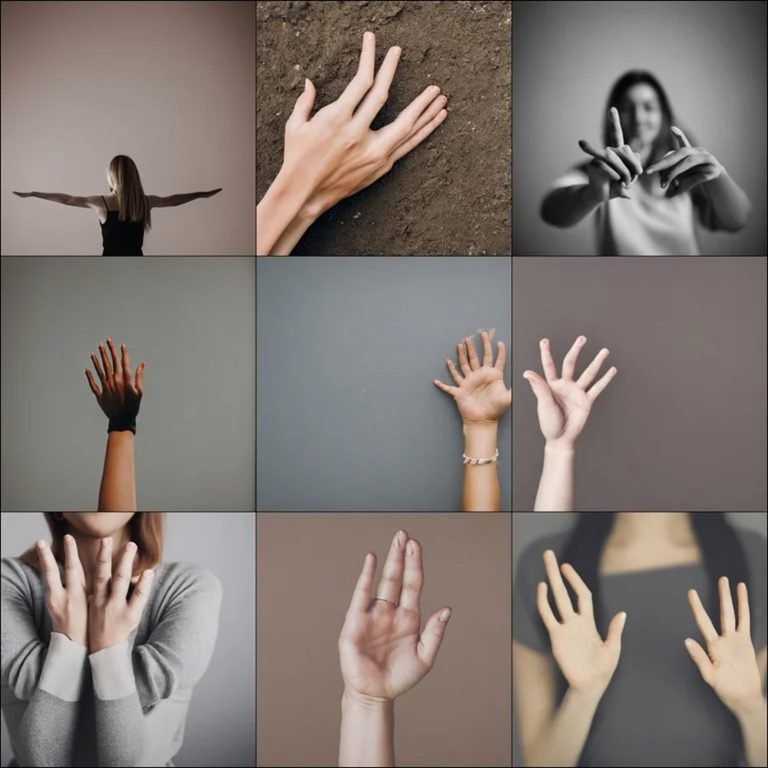} % Replace with your image
        \label{fig:2-1}
    \end{minipage}
    \hspace{0.05\textwidth} % Add horizontal space between the figures
    \begin{minipage}[b]{0.25\textwidth}
        \centering
        \includegraphics[width=\textwidth]{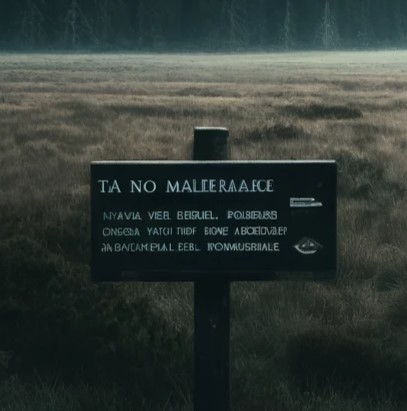} % Replace with your image
        \label{fig:2-2}
    \end{minipage}
    \hspace{0.05\textwidth} % Add horizontal space between the figures
    \begin{minipage}[b]{0.25\textwidth}
        \centering
        \includegraphics[width=\textwidth]{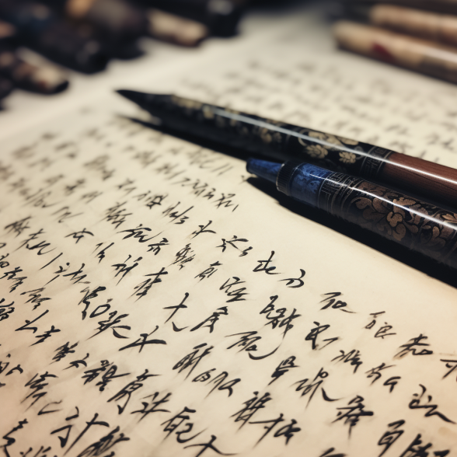} % Replace with your image
        \label{fig:2-3}
    \end{minipage}
    \vspace{-12 pt}
    \caption{Some examples of deformed hands artifacts and text hallucination in images generated by StableDiffusion~\cite{rombach2022stablediffusion} and Midjourney. Images from prompting ``woman showing her hands", ``a road sign in a grassland" and ``a Chinese traditional calligraphy art".}
    \vspace{-10 pt}
    \label{fig:all_three}
\end{figure}
\section{Related Work}
\textbf{Diffusion Model.} Diffusion models, initially introduced by ~\cite{sohl2015deep}, are probabilistic generative models that iteratively add and remove noise from data. Early work ~\cite{ho2020denoising} laid the foundation and proposed Denoising Diffusion Probabilistic Models (DDPM) ~\cite{ho2020denoising}, which significantly improved sample quality and stability. ~\cite{song2020score} also proposed Score-Based Generative Models (SGMs), unifying diffusion models with other generative frameworks. To address efficiency, ~\cite{song2021maximum} introduced Denoising Diffusion Implicit Models (DDIM), reducing sampling steps without quality loss.  Diffusion models have since been applied beyond image generation, including video generation ~\cite{sora}, text-to-image models~\cite{rombach2022stablediffusion}, and audio synthesis ~\cite{kong2020diffwave}. Despite advancements, challenges remain, particularly in improving sampling speed and generalization to unseen data, as highlighted by recent experiments in video generation and physics-informed modeling. 

\textbf{Hallucination in Language Generative Models.} Hallucinations in large language models (LLMs) are a significant challenge, particularly in safety-critical applications, where factually incorrect or logically inconsistent outputs can have severe consequences ~\cite{ye2023cognitive, zhang2023siren}. LLMs may generate erroneous facts, misinterpret instructions, or introduce entirely new information not present in the input, a phenomenon known as input-conflicting hallucination ~\cite{zhang2023siren}. Mitigating these hallucinations has become a focus of research, with strategies such as enhancing models with factual data ~\cite{gunasekar2023textbooks} and integrating retrieval-based mechanisms to ground responses in external knowledge ~\cite{ram2023context}. 

\textbf{Hallucination in Diffusion Models.} One common artifact of diffusion models is the generation of distorted or deformed body parts, such as hands and legs, which is frequently observed in models like Stable Diffusion ~\cite{rombach2022stablediffusion} and Sora \cite{sora}. Additionally, diffusion models struggle with learning rare concepts, particularly those with fewer than 10,000 samples in the training set \cite{samuel2024generating}. Other common failure modes include models neglecting spatial relationships or confusing attributes, as discussed in prior research ~\cite{borji2023qualitative, liu2023intriguing}. These issues highlight the limitations of diffusion models when tasked with generating realistic, complex scenes, especially when dealing with rare data or intricate spatial compositions. Recent work ~\cite{aithal2024understanding} explains the hallucination of diffusion model via the perspective of mode interpolation, arguing that the improper interpolation between modes yields non-zero density between them, which is the main cause for hallucination.

% \textbf{Implicit Bias and Early Training Dynamics.}

\section{Notations}
Denote set $\{0,1,2,\hdots,n-1\}$ as $[n]$. To compute the cardinality of a set $S$ we write $|S|$. For a vector $\bx$, we use $\bx^{(i)}= \bx^\top \ve_i$ to denote its $i_{th}$ dimension, and we use $\ve_i$ to denote the unit vector along the $i$-th dimension. $\mc{N}(\bs{\mu}, \bs{\Sigma})$ means a Gaussian distribution with mean $\bs{\mu}$ and covariance $\bs{\Sigma}$, $\mc{N}(\bx; \bs{\mu}, \bs{\Sigma})$ denotes its density at position $\bx$. Sampling $x$ from distribution $\mc{D}$ is denoted as $x \sim \mc{D}$. Asymptotic notation follows the common practice where $f = O(g)$ means there exists a constant $C>0$ and $x_0$ such that $f(x) < C\cdot g(x)$ for any $x>x_0$. Similarly we write $f=\Omega(g)$ when $f(x) > C\cdot g(x)$ for any $x>x_0$ and $f(x) = \Theta(g(x))$ if $f=\Omega(g)$ and $f=O(g)$. And $*$ stands for convolution operation between two distributions, $f*g(t)=\int_\Omega f(\tau)g(t-\tau)\mathrm{d} \tau$. We use $\Delta(S)$ to denote the set of valid probability distributions over a compact set $S$. We use $\sgn(x)=1[x>0]-1[x<0]$ to denote the sign function.

% \begin{itemize}
%     \item Diffusion SDE
%     \item Mean-field
% \end{itemize}

\section{Experiment Study}
% \textbf{Outline}
% \begin{itemize}
%     \item Formalization of text distribution. How to synthesize such distribution. Introduce 3 grammar rule we employ (parity, MNIST, para matching)
%     \item Show experimental result for each distribution, including (i) the visualization of generated samples (ii) the diagram for the invalid, in-data, extrapolate. The point is no matter UNet or DiT cannot learn text distribution well.
%     \item Define LDR metric, explain its intuition "measure the local dependency". Draw the trend of LDR for UNet-parentheses, (UNet, DiT)-MNist, MLP-embedding vector. All show very high value, and as the model overfit to the training dataset, the LDR goes down. Also draw the heatmap to show the region it attends to.
%     \item (Discussion) LDR decrease synchronize with increasing accuracy, but not necessarily means that.
%     \item Test on real word text data, observe very similar phenomenon, test independent generation hypothesis.
% \end{itemize}

In this section, we introduce the experimental setup and results of our study on text hallucination in diffusion models. We first reproduce text hallucination phenomenon across different modalities and text rules in our simple synthetic setting. To understand how it originates, a key probe called \textit{\textbf{Local Dependency Ratio (LDR)}} is introduced to quantitatively measure the denoising function's input dependency on local regions. With LDR as a probing tool, we discover the following important observations.

\begin{itemize}[leftmargin=8pt]
    \item \textbf{High LDR value is always observed when hallucination happens.} This indicates that the denoising model predicts noise by each symbol's region itself, therefore conducting denoising and generation iteration respectively with almost zero entanglement between different symbols. Since the starting Gaussian distribution is also isotropic, the entire generation process for different symbols becomes independent, resulting in incorrect assembly and hallucination.
    \item \textbf{Such phenomenon is ubiquitous across different distributions and architectures}, even for those models with global receptive field such as MLP and DiT\citep{peebles2023scalable}. This indicates such bias is related to ubiquitous implicit bias in training dynamics rather than architectural limitation.
    \item \textbf{As training progresses, LDR decreases and the denoising model starts to overfit}. After extensive training, denoising network overfit to training dataset. This requires it to coordinate different symbols to exact replicate training data, resulting in a drop in LDR.
\end{itemize}
% , and find it closely related to the text hallucination phenomenon
\subsection{Formulation of Text Distribution}
% The paradigm for constructing a synthetic text-like distribution is as follows. We first define a set of discrete symbols $\mc{S}=\{s_1, s_2,\hdots, s_K \}$ as syllabus. Define symbol index list $\mc{I}=(i_1, i_2, \hdots, i_L)\subseteq [K]^L$ which represents a list of token symbol $(s_{i_1}, s_{i_2}, \hdots, s_{i_L})$. A spelling/grammar rule is a probability distribution $P_G$ deciding the validity of a symbol sequence by its density $P_G(\mc{I})$. Such list of symbol tokens are further rendered into ambient space by a function $h: \mc{S} \mapsto \mathbb{R}^d$ which maps each symbol to a vector in ambient space like image pixels or a single scalar. The full signal is obtained by concatenating these vectors. Some examples of the signal are images with texts (where the pixels for different letters do not overlap), time series, text sequences, etc. We wish to learn the distribution of the signals for generation purposes. With a little abuse of notation, we use $h(\mc{I}): \mc{S}^L \mapsto \mathbb{R}^{d\times L}$ to denote the rendering process for a list of tokens transformed into input space, which means we can first sample a list of token $\mc{I}\sim P_G$ then apply the rendering function $h$. For simplicity, we fix $L$ throughout all the experiment for the same symbolic system. 

The process of constructing a synthetic text-like distribution is as follows. First, we define a set of discrete symbols, $\mc{S}={s_1, s_2, \dots, s_K }$, as the syllabus. Next, we define a symbol index list, $\mc{I}=(i_1, i_2, \dots, i_L)\subseteq [K]^L$, which represents a sequence of symbols $(s_{i_1}, s_{i_2}, \dots, s_{i_L})$. A grammar rule is a probability distribution $P_G$ that defines the validity of a symbol sequence by its probability density, $P_G(\mc{I})$. The symbol sequence is then mapped to an ambient space by a function $h: \mc{S} \mapsto \mathbb{R}^d$, which assigns each symbol to a vector, such as an image pixel or a scalar. The full signal is formed by concatenating these vectors. Examples of such signals include images with text, time series, and text sequences. We aim to learn the distribution of these signals for generation purposes. For simplicity, we use $h(\mc{I}): \mc{S}^L \mapsto \mathbb{R}^{d\times L}$ to represent the rendering process, where a list of tokens is transformed into the input space. We can first sample a token list $\mc{I}\sim P_G$ and then apply the rendering function $h$. Throughout the experiments, we fix $L$ to maintain a consistent symbolic system.

In this paper, we mainly test two synthetic symbol assembling rules, including \textbf{\textit{(i) Parity Parenthesis}}, each sample image contains $L$ parenthesis where left symbol ``('' and right ``)'' both have even numbers; \textbf{\textit{(ii) Quarter MNIST}}, each sample image consists of four MNIST digits in the corners and the sum of first row equals the second. More details are in \hyperref[section:appendix]{appendix}.

\subsection{Text Hallucination Results}\label{sec:parity}
After constructing synthetic text distributions $h(P_G)$. The denoising model is trained to fit the score function of these distributions in the ambient space. For embedding vector ambient space, we employ MLP to learn the score function. For image sample, we use modern denoising network including UNet~\cite{ronneberger2015u} and DiT~\citep{peebles2023scalable}. Note that both MLP and DiT have global receptive field in its function, enabling them to model long range correlations. UNet also embraces attention module in its pipeline.
\begin{figure}[ht]
    \centering
    \vspace{-10 pt}
    % \begin{minipage}[b]{0.2\textwidth}
    %     \centering
    %     \includegraphics[width=\textwidth]{} % Replace with your image
    %     \label{fig:3-1}
    % \end{minipage}
    \begin{minipage}[b]{0.45\textwidth}
        \centering
        \includegraphics[width=\textwidth]{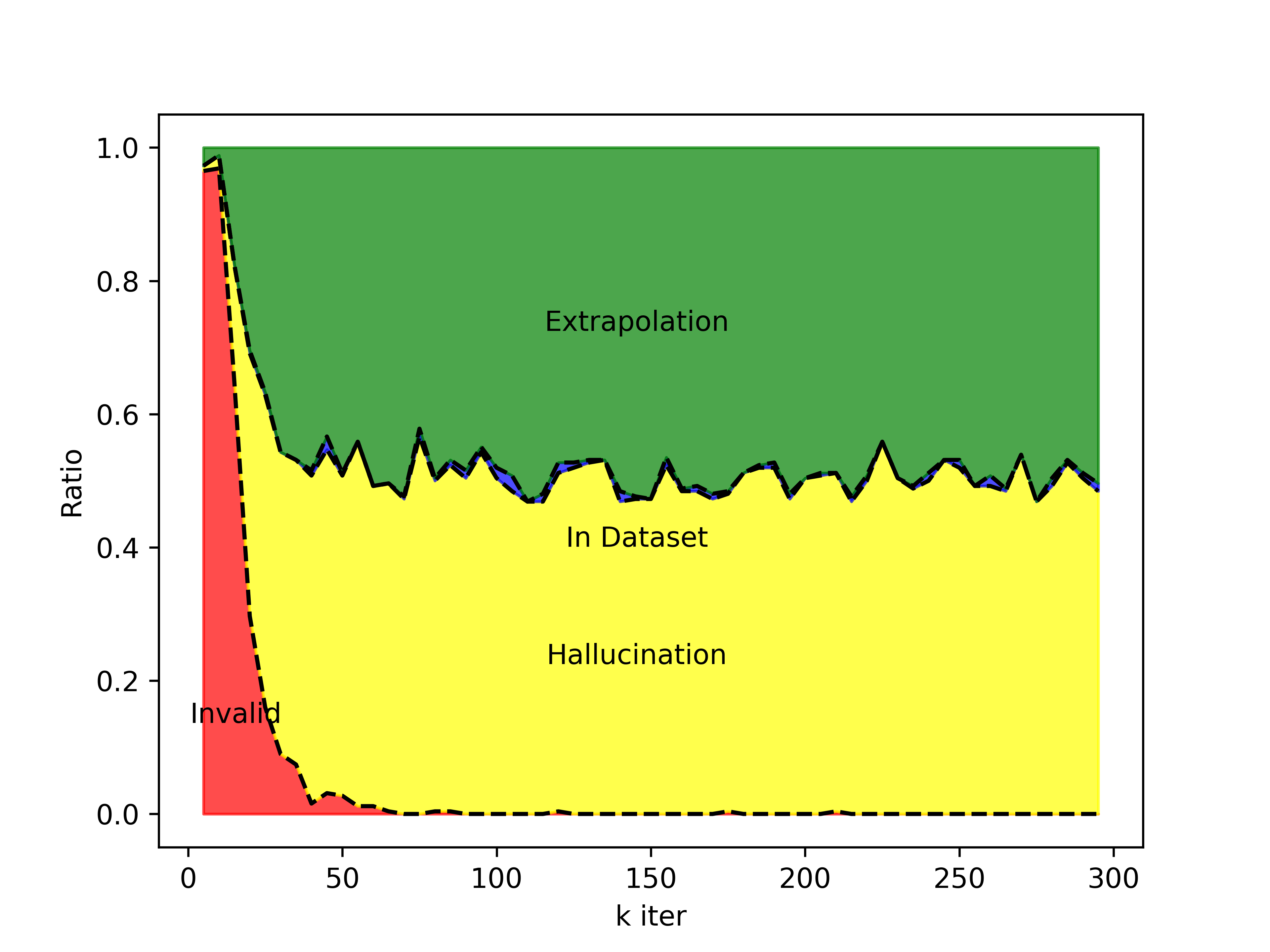} % Replace with your image
        \label{fig:4-1}
    \end{minipage}
    \begin{minipage}[b]{0.45\textwidth}
        \centering
        \includegraphics[width=\textwidth]{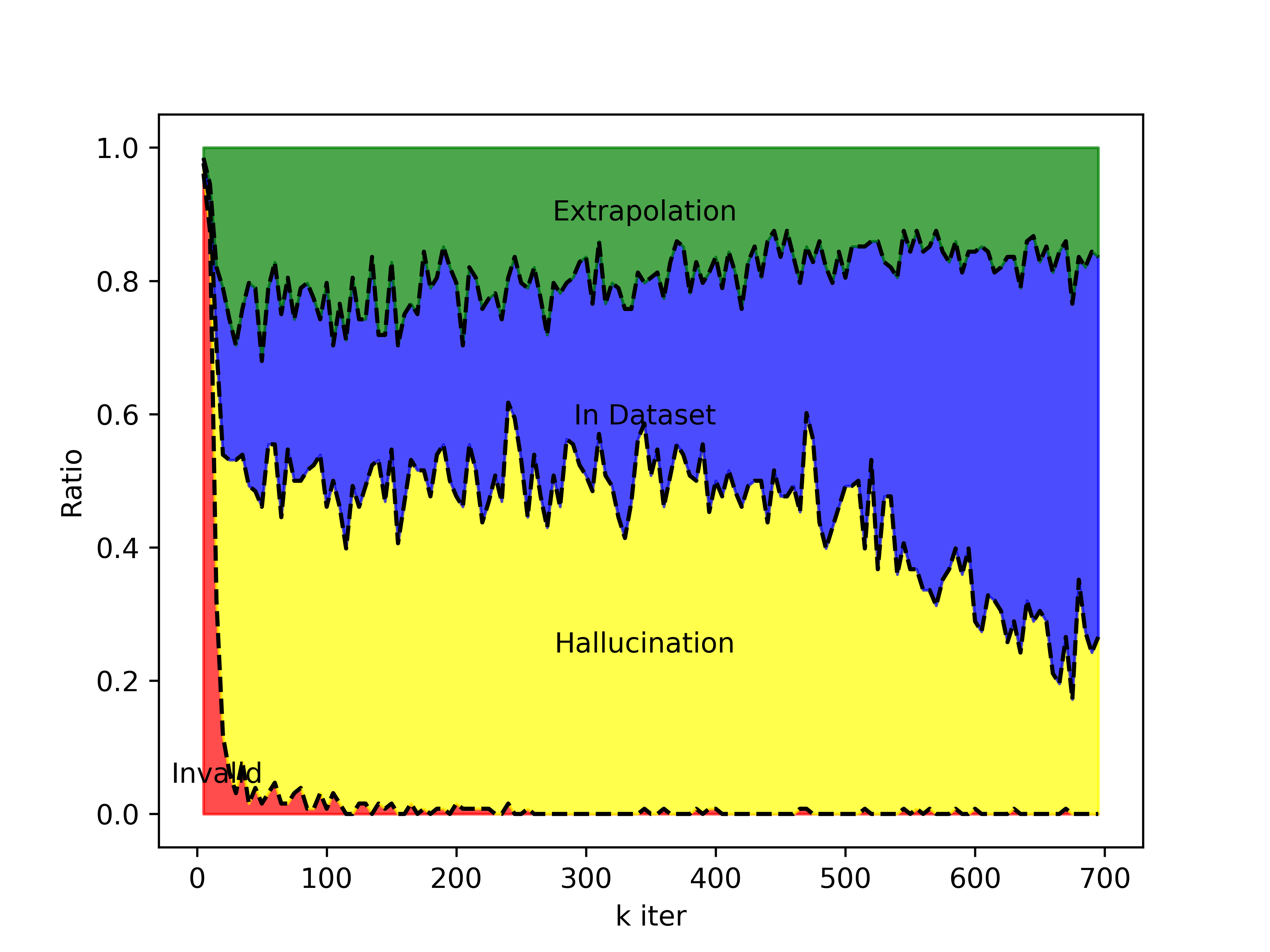} % Replace with your image
        \label{fig:4-2}
    \end{minipage}
    \vspace{-15 pt}
    \caption{Experimental Result for UNet learning parity parenthesis $L=16$ (left) and $L=8$ (right).}
    \label{fig:4}
\end{figure}

\textbf{Parity Parenthesis.} Our initial attempt starts with parity rule with parenthesis symbol. We fix $L=8, 16$ and use image of left and right parenthesis to represent symbol $1$ and $-1$. A UNet model (with attention) is trained on this image distribution and learn to generate samples. The details of model architecture is in the appendix. We are interested in whether diffusion model can find clues about parity rule and faithfully reproduce it. An OCR function is utilized to transform the generated image into binary sequences and test whether it satisfies the parity rule. For $L=8$ we use half fraction of the valid parity images and $5\%$ for $L=16$. The generated images are categorized into four types, including \textit{\textbf{(i) Invalid}, the low level detail for each symbol is ambiguous and fuzzy hence OCR fails; \textbf{(ii) Hallucination,} each symbol is clear but the overall combination does not fit in rules; \textbf{(iii) In Dataset,} the model exactly reproduce dataset images; \textbf{(iv) Extrapolation,} the model generates data sample that satisfy the rule while not presented in the dataset.}

The diagram for different categories' proportion is in \hyperref[fig:3]{figure 3}. Note that random guess has $50\%$ chance of satisfying parity requirement. We can see the model quickly learn to generate individual symbol's appearance, and the proportion of \textit{invalid} drops immediately. However, the diffusion model fails to capture the parity rule, half of whose generated images are hallucination. The situation diverges according to the sequence length. In $L=8$ case the model eventually successfully overfits to the training dataset, but still generates $25\%$ hallucinated samples. For $L=16$, The model continues to generate correct samples only by chance till the end of training. This simple experiment demonstrates the difficulty for pure-vision based model to learn underlying rule unconditionally. Detailed generated samples is left in appendix.\\

\textbf{Quarter-MNIST:} We also test another symbol system, where four MNIST digit images are assigned in four quarters of an image and satisfy simple arithmetic relations. To achieve low divergence between generation distribution and real distribution, the diffusion model not only needs to generate reasonable digits, but also understands the global relations between these digits. 

Simple combinatorics tells there are total $670$ combination of symbols $(s_1,s_2,s_3,s_4)\in \mc{S}^4$ satisfying $s_1+s_2=s_3+s_4$. We randomly leave out $200$ combinations as test set and render the images of the rest. Both UNet and DiT  undergo a phase that most of its generated samples do not satisfy the addition requirement, which means hallucination. As the training progresses, both models gradually learn to reproduce sample within dataset. DiT performs better accuracy ($\sim 90\%$) in generating samples satisfying addition relations compared with UNet ($20.6\%$). However, \textit{none of them} is able to generate valid symbol tuple beyond the training dataset, with a fraction only less than $0.5\%$. Therefore there is no extrapolation region in \hyperref[fig:4-mnist]{figure 4}. In other words, for such text distribution, diffusion model can only struggle between hallucination and overfitting, if no prior knowledge is provided.
\begin{figure}[ht]
    \centering
    \vspace{-10 pt}
    \begin{minipage}[b]{0.48\textwidth}
        \centering
        \includegraphics[width=\textwidth]{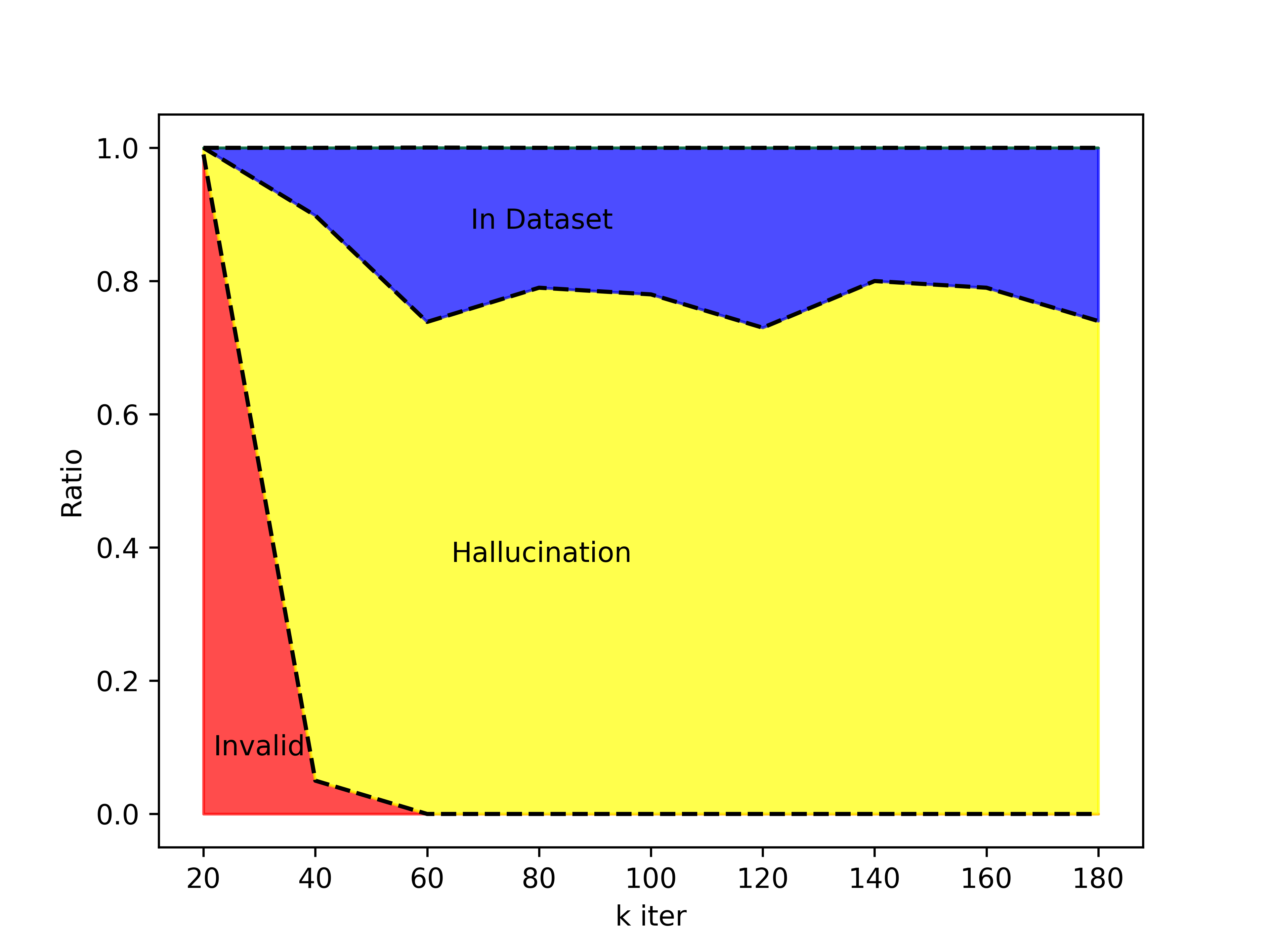} % Replace with your image
        \label{fig:4-1}
    \end{minipage}
    \begin{minipage}[b]{0.48\textwidth}
        \centering
        \includegraphics[width=\textwidth]{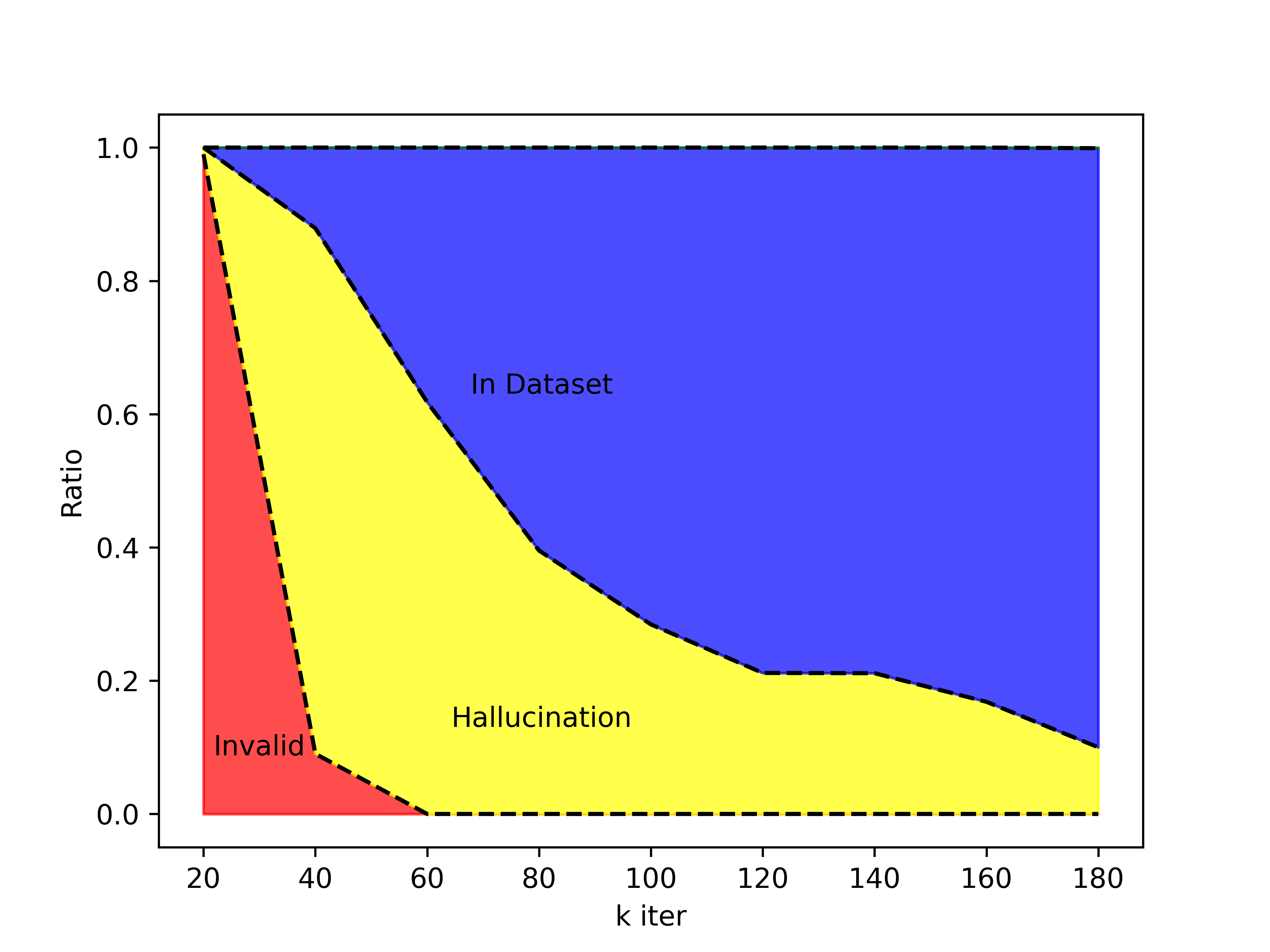} % Replace with your image
        \label{fig:4-2}
    \end{minipage}
    \vspace{-15 pt}
    \caption{Experimental Result for learning Quarter-MNIST using UNet (left) and DiT (right).}
    \label{fig:4-mnist}
\end{figure}

\subsection{local Dependency Ratio Analysis}
\begin{figure}[ht]
    \vspace{-10 pt}
    \centering
    \begin{minipage}[b]{0.48\textwidth}
        \centering
        \includegraphics[width=\textwidth]{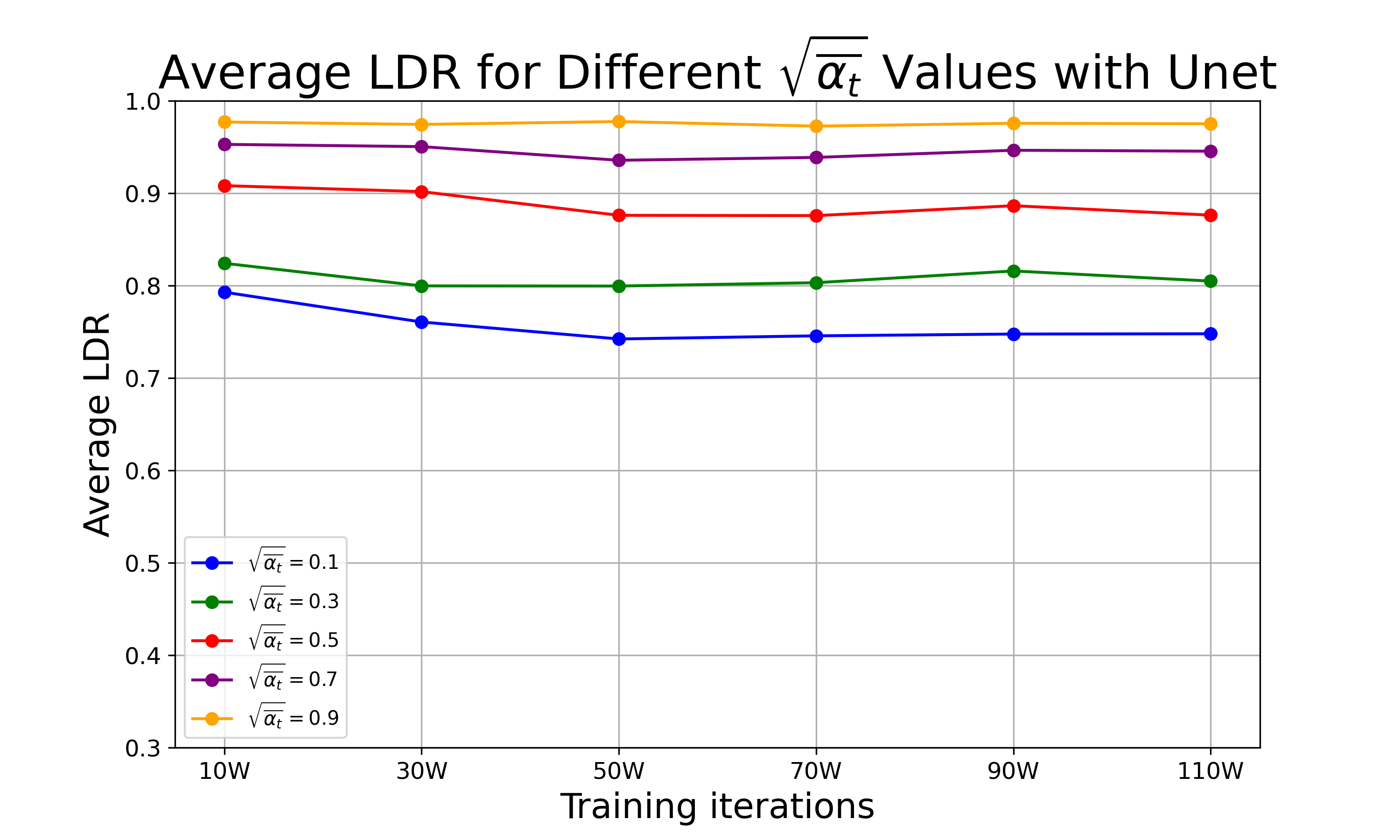} % Replace with your image
        \label{fig:5-1}
    \end{minipage}
    \begin{minipage}[b]{0.48\textwidth}
        \centering
        \includegraphics[width=\textwidth]{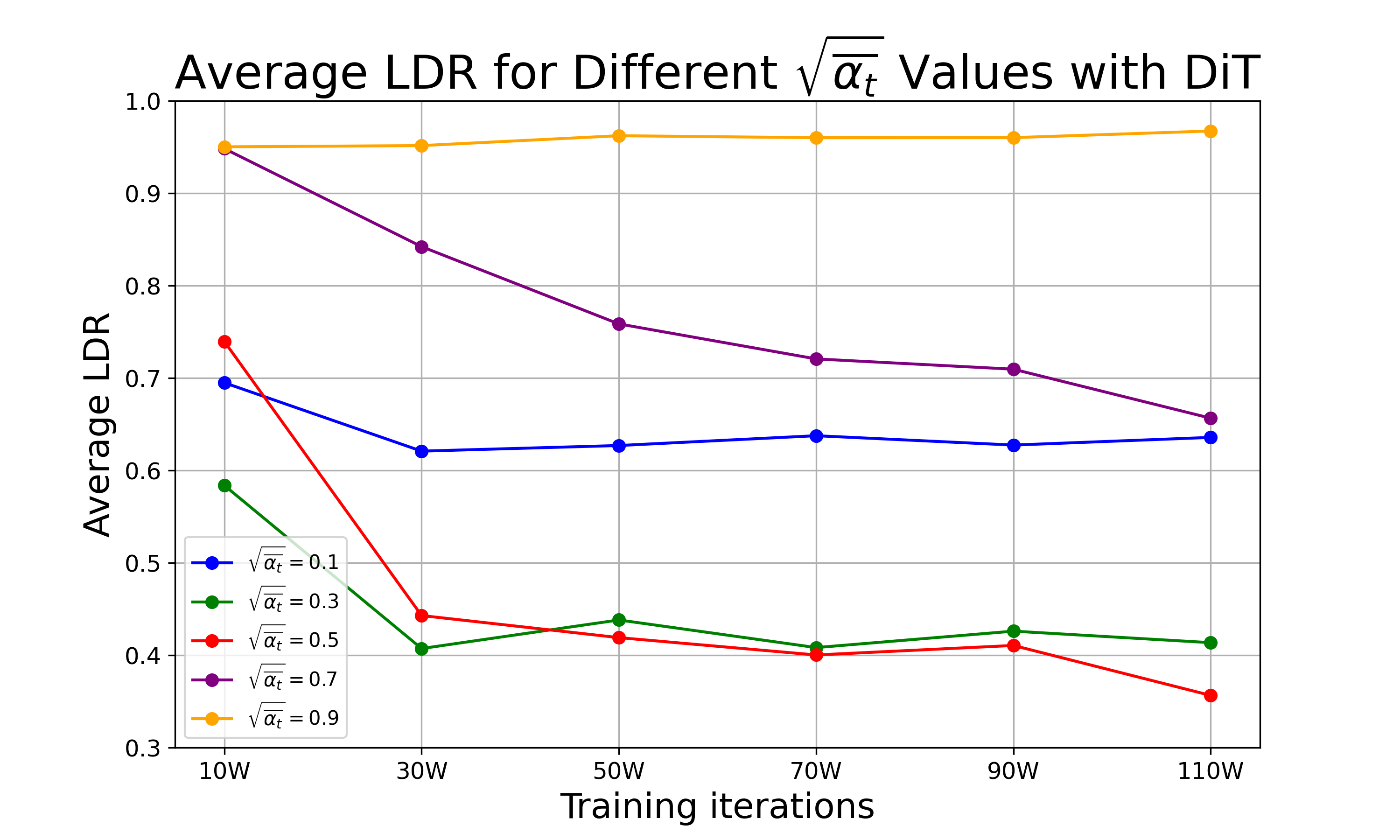} % Replace with your image
        \label{fig:5-2}
    \end{minipage}
    \vspace{-20 pt}
    \caption{LDR trend for UNet (left) and DiT (right) at different denoising timestep and training iterations. The LDR for UNet remains high throughout the training, therefore it stucks with hallucination. While DiT successfully progress to reduce the LDR, meaning it starts to overfit and memorize the dataset. We select timestep $t$ corresponding to $\sqrt{\bar{\alpha}_t} \approx 0.1,0.3,0.5,0.7,0.9$.}
    \label{fig:5-LDR-mnist}
\end{figure}
To investigate the mechanism behind text hallucination. We propose a novel probe called \textbf{Local Dependency Ratio}, or LDR in abbreviation. LDR quantitatively measures the degree of the diffusion network that performs denoising and generation locally. Given a trained network $\vs_{\bth}(\cdot)$ and certain fixed timestep $t$ with corresponding parameter $\bar{\alpha}_t$. In the total input space $\R^{d\times L}$, denote the region of interest as $\mc{R}\subseteq [d\times L]$ referring to the set of entries corresponding to one (or few) symbol's area. Define indicator matrix $\bs{P}_{\mc{R}}= [\bs{e}_i]_{i\in\mc{R}}$ that filters out entries of $\mc{R}$. We compute gradient of the filtered entries $f_{\mc{R},\bth}(\bx) := \bs{P}_{\mc{R}}^{\top} \vs_{\bth}(\bx)$ and get Jacobian matrix
\begin{align}
    \bs{J}_{\mc{R},\bth}(\bx) :=& \frac{\partial f_{\mc{R},\bth}}{\partial \bx} \in \mathbb{R}^{|\mc{R}|\times d}.
\end{align}
Define local Dependency Ratio (LDR) function for model $\vs_{\bth}$ and region $\mc{R}$ as 
\begin{align}
    LDR(\bth, \mc{R}) = \mathbb{E}_{\bx\sim p_t} \left[ \frac{\operatorname{Tr}(\bs{P}_{\mc{R}}^{\top} \bs{J}_{\mc{R},\bth}(\bx)^{\top} \bs{J}_{\mc{R},\bth}(\bx) \bs{P}_{\mc{R}})}{\operatorname{Tr}(\bs{J}_{\mc{R},\bth}(\bx)^{\top} \bs{J}_{\mc{R},\bth}(\bx))} \right].
\end{align}
Intuitively, matrix $\bs{J}$ measures dependency of each output's entry in $\mc{R}$ with respect to the input, which is commonly known as \textit{saliency map} ~\citep{simonyan2013deep}. The difference to conventional saliency map is that each input dimension $\bx^{(i)}$ receives a gradient vector $\bs{g}_i\in \mathbb{R}^{|\mc{R}|}$ rather a single scalar. It records the sensitivity of output region $\mc{R}$ with respect to a certain input entry $\bx^{(i)}$.

Therefore, $\operatorname{Tr}(\bs{J}^{\top} \bs{J})$ computes the Frobenius norm of $\bs{J}$, which is the total sum of all gradient vectors' squared norm. Meanwhile, $ \bs{J} \bs{P}_{\mc{R}}$ filters dependency gradient within $\mc{R}$ itself, thus $\operatorname{Tr}(\bs{P}_{\mc{R}}^{\top} \bs{J}^{\top} \bs{J} \bs{P}_{\mc{R}})$ measures the total summation of squared gradient norms within the same local region $\mc{R}$. The LDR is thus within range $[0, 1]$, where a higher value indicates a more local denoising and generation manner. 

With LDR, we can probe the model trained on different datasets at various checkpoints. Here we mainly present our probing result for Quarter-MNIST dataset. More visualization and other experimental detail is left to appendix. We select $\mc{R}$ to be the top left region, namely the first digit's position, and compute LDR for this region at different denoising steps and training iterations. As shown in \hyperref[fig:5-LDR-mnist]{figure 5}, UNet's LDR remains more than $0.75$ throughout the entire training process, which means it highly focuses on region $\mc{R}$ itself to conduct denoising and generation. This could explain why UNet ends up with a much lower accuracy. DiT also presents similar trend, showing a high LDR value at initial stage of training, therefore generating hallucinated samples. However, due to strong approximation power of transformer architecture, its LDR decreases at 30k to 50k iteration, and this synchronizes with the rapid increase of the generated sample's accuracy (see \hyperref[fig:4-2]{figure 4}). 

This result provides evidence for the local generation bias. Despite the capacity to modeling global long-range relations, both attention version UNet and DiT appears to rely on information confined within local regions. This is reasonable because in these symbolic systems $P_G$, any two symbol's distribution is independent, namely $P_G(s_i=a, s_j=b)=P_G(s_i=a)P_G(s_j=b)$. Such independence leads denoising network to treat symbols as uncorrelated. As a consequence of such local generation preference, the denoising network separately learns and samples from each symbol token's marginal distribution at early training stage, resulting in text hallucination.

This finding is consistent and universal across different denoising architectures and grammar rules. More experiment details for different distributions are left in appendix. We also visualize $\bs{J}$ as heatmap of each pixel's gradient magnitude and verify its concentration near the selected region $\mc{R}$. For real-world distributions that do not satisfy independent condition, please refer to discussion section.

% \begin{itemize}
%     \item Matching Parenthesis Image
%     \item Binary Vectors Parity
%     \item Simple Shape 3x3
%     \item MNIST addition
%     \item Chinese Character Symbol
% \end{itemize}

% Gradient Backward Analysis.

\section{Distributions On A HyperCube: A Theoretical Case Study}
In this section, we give an instance that illustrates why high LDR values predict hallucinatory generations, and why the learned score network is biased towards high LDR scores. We consider the case of $|\mathcal{S}|=2$, e.g. the generation of distributions on the hypercube $\{\pm 1\}^d$, or equivalently on a sequence of $d$ binary tokens.
We find that in early training stage neural networks prefer to learn the tokens in their marginal distributions and fails to learn correlations that mark the semantic rules. The theoretical analysis explains why high LDR can be a signal for bad model and defective training process. While the set of score functions with LDR value 1 have restricted generation capacity, under certain grammar rules they form an invariant set of the optimization process. Namely, the gradient component that lower the LDR value vanishes when LDR approaches 1. Therefore in the view of optimization the network will struggle in escaping a high LDR region. Moreover we show that in the instance of a two-layer ReLU network, early training dynamic actually biases the network to a LDR level close to 1 with small initialization, exhibiting the presence of such a struggle in the training process. As a result, the network creates hallucinatory generations.

\iffalse
When marginals are nearly independent, the gradients primarily reflect single-dimension features, biasing the network towards functions that depend only on individual coordinates $x^{(i)}$. The core proof leverages gradient flow on a two-layer network, $s_{\theta}(x, t)$, initialized with small weights. It shows that early training amplifies weights $w_{i,j,t}$ aligned with specific dimensions exponentially faster than others, as demonstrated by the growth rate $K_{i,t}$ in Theorem~\ref{thm:bias}. Fourier expansion of the target distribution $p_0(x)$ reveals that under proper assumptions~\ref{ass:data}, symmetries in $p_0$ enforce saddle points in the loss landscape. Theorem~\ref{thm:saddle} identifies an invariant set, $M$, where weights satisfy $a_{i,j,t}(I - e_i e_i^\top)w_{i,j,t}=0$, forcing $s_{\theta}^{(i)}$ to depend only on $x^{(i)}$. Training dynamics inherently bias the model towards this set, where it approximates $p_0$’s marginals but fails to capture correlations across dimensions. This result explains why neural networks first prioritize marginal distributions in denoising tasks, requiring prolonged training to escape saddles and learn global dependencies, often leading to hallucination-prone outputs in the early stages.
\fi

\subsection{Preliminaries}
\subsubsection{Probability on the hypercube}
A spelling/grammar rule $P_G$ over binary sequences specifies a probability distribution function $p_0$ on the hypercube $\{\pm 1\}^d$. We can express $p_0$ in terms of the Fourier expansion on the hypercube, as the indicator functions $x_I=\prod_{i\in I} x_i$ for all possible $I\subset [d]$ forms an orthonormal basis over the uniform distribution $\mathcal{D}$ over the hypercube (namely, $\E_{x\sim \mathcal{D}} x_I x_J = 1[I=J]$). Then we can expand $p_0:\{\pm 1\}^d\to \R$ in the Fourier basis as
\begin{align*}
    p_0(x) = \sum (\bar{p}_0)_I x_I(x),\quad\quad (\bar{p}_0)_I = \E_{x\sim \mathcal{D}} p_0(x) x_I(x).
\end{align*}
As $p_0$ is a probability function, there is $(\bar{p}_0)_\emptyset=2^{-d}$. The set $P_S=\{(\bar{p}_0)_I:I\subset S\}$ gives the marginal distribution of $(x_i:i\in S)$. For instance, when $(\bar{p}_0)_{\{i,j\}}=(\bar{p}_0)_{\{i\}}(\bar{p}_0)_{\{j\}}$, $x_i$ and $x_j$ are independent in their marginal distribution.

\subsubsection{Diffusion models}

%Since different step's noise is independent, we can directly sample from $p_t$ given $\bx_0$
%\begin{align}
%    p(\bx_t \mid \bx_0) = \mc{N}(\bx_t; \sqrt{\bar{\alpha}_t}\bx_0, (1-\bar{\alpha}_t)\bs{I}).
%\end{align}
We adopt the conventions for diffusion models from \citet{ho2020denoising}. Let $p_0$ be the distribution density we wish to sample from. While direct sampling from $p_0$ is computationally hard, We define a forward process $x_t$ where the signal gradually shrinks and Gaussian noise is added at each timestep for total $T$ steps. 
\begin{align}
    x_0 \sim p_0(x), \quad p(x_t \mid x_{t-1}) = \mc{N}(\sqrt{1-\beta_t} x_{t-1}, \beta_t \bs{I}),\ t=1,2,\hdots,T.
\end{align}
Here $0<\beta_t<1$ is a scale schedule of adding noise. Denote the distribution of $x_t$ as $p_t=\sqrt{\bar{\alpha}_t} p_0*\mc{N}(0,(1-\bar{\alpha}_t)\bs{I})$, $\alpha_t=1-\beta_t$, $\bar{\alpha}_t = \prod_{t=1}^T \alpha_t$. We choose the schedule so that $\bar{\alpha}_t\to 0$ as $t\to T$, so $p_t\to \mc{N}(0,\bs{I})$. If we can undo the forward noising process, then we can sample from $p_0$ by applying the reverse process on the samples from $p_T = \mc{N}(0,\bs{I})$, which is more computationally efficient. The reverse process is well-approximated by the following \citep{ho2020denoising}.
\begin{align}
    x_T \sim \mc{N}(0, \bs{I}),\quad  x_{t-1} \sim \mc{N} \left(\frac{1}{\sqrt{\alpha_t}}\left(x_t-\frac{1-\alpha_t}{\sqrt{1-\bar{\alpha}_t}} s\left(x_t, t\right)\right), \tilde{\beta}_t \bs{I} \right)
\end{align}
where $s(x_t, t) = \frac{x_t - \sqrt{\bar{\alpha}_t} \E (x_0|x_t)}{\sqrt{1-\bar{\alpha}_t}}$ predicts the backwards direction.

\textbf{Score Learning Setting.} 
The backwards direction $s(x_t,t)$ requires the  computation of the term $\E(x_0|x_t)$ which may not be tractable in practice. Instead, we approximate the direction with a neural network $s_\theta(x_t,t)$.
For any $t>0$, the training data is drawn from $p_t$ as $x_t=\sqrt{\bar{\alpha}_t}\cdot x_0 + \sqrt{1-\bar{\alpha}_t} \cdot \xi$, where $x_0\sim p_0$ and $\xi\sim\mc{N}(0,\bs{I})$, and the network is trained by minimizing
\begin{align*}
     \mc{L}_t(\theta) &= \mathbb{E}_{x_0 ,\xi} \left\| \xi - s_{\theta} (x_t, t) \right\|^2 =\mathbb{E}_{x_0 ,\xi} \left\| \frac{x_t-\sqrt{\bar{\alpha}_t} x_0}{\sqrt{1-\bar{\alpha}_t} } - s_{\theta} (x_t, t) \right\|^2\\
     & =\mathbb{E}_{x_0 ,\xi} \left\| \frac{x_t-\sqrt{\bar{\alpha}_t} \E (x_0|x_t)}{\sqrt{1-\bar{\alpha}_t} } - s_{\theta} (x_t, t) \right\|^2+\mathbb{E}_{x_0 ,\xi}\left\| \frac{\sqrt{\bar{\alpha}_t} }{\sqrt{1-\bar{\alpha}_t} } (x_0-\E (x_0|x_t)) \right\|^2
\end{align*}

Therefore the loss can be viewed as the square loss on the target score vector $y_t(x) = \frac{x_t-\sqrt{\bar{\alpha}_t} \E (x_0|x_t)}{\sqrt{1-\bar{\alpha}_t} }$. In practices like natural images generation where $p_0$ is approximated by the empirical distribution of the training dataset, if the network $s_\theta(x,t)$ overfits by recovering the empirical target, then the diffusion model reproduces the training dataset. In this way the model memorizes the training data but cannot generate anything outside the training set. While ideal generation require the network to generalize properly, improper generalization causes hallucinatory samples. In the remaining part of the section, we present an example of such a bad generalization, which is manifested by the discrepancy of the LDR values.

\subsection{LDR of an actual neural network }
In the following theoretical analysis, we use a two-layer neural network of hidden dimension $m$ as our score network. The $i$-th entry of the network output is $s^{(i)}_{\theta}(x,t) = \sum_{j\in [m]} a_{i,j,t} \sigma(w_{i,j,t}^\top x + b_{i,j,t})$ where $\sigma(x)=\max(x,0)$ is the ReLU function, and the parameters $\theta=(a_{i,j,t},w_{i,j,t},b_{i,j,t})$ are updated via gradient flow (GF) as $ \frac{d}{ds}\theta_s =- \nabla_{\theta}\Loss_t(\theta_s)$. 

Notice that for $t>0$, $\Loss_t$ is a smooth function as the data $x$ has a smooth probability density function over the space. For a starting point $\theta_0$, we use $\Phi(\theta_0,s)$ to denote the endpoint $\theta_s$ of gradient flow at time $s$, which is unique given the smoothness of the loss function. We will show that starting from a small initialization, the gradient flow boosts the LDR of the network, crossing the ground-truth LDR, and drives the network parameters close to an invariant set of LDR 1. Thereby, the resulting diffusion model of high LDR value creates hallucinatory generations. 

\subsection{High LDR Is Hard To Lower For Optimization}
 The experimental evidence hints a pattern of grammar rules that complicate the training process. We summarize such a pattern in \Cref{ass:data}.
 By Fourier transformation on the hypercube, we can expand the target probability $p_0(x) = \sum_{I\subset [d]}\bar{p}_0(I) x_I$ where $x_I = \prod_{i\in I} x_i$ are the Fourier basis. 

\begin{assumption}\label{ass:data}
    The grammar rule $p_0$ satisfies, for any $i,j\in [d]$, $\bar{p}_0(i)=0$ and $\bar{p}_0(i,j)=0$.
\end{assumption}
This means that the marginal distribution for any digit in the sequence is uniform, and for any pair of digits is independent. An example of such distributions are valid sequences uniformly drawn from a parity rule (e.g. satisfies $\prod_{i\in I} x_i = 1$ for any $I$ that $|I|>2$). When the pairwise correlation disappears, we observe that the network is no longer able to escape an invariant set that has LDR 1.
\iffalse
We observe that such $p_0$ will induce a diffused distribution $p_t$ and learning target $y_t(x)$ for any $t>0$. Thought the target may depend on all the coordinates of its input $x$, we observe in the experiments that the neural network often learns a function that has much lower variance for the off-diagonal entries compared with the true target, and it often takes a much longer training process for the network to recover from such low variance. This hints a potential saddle point that the network may stuck into during training. Please refer to appendix for more observed evidence if interested.
\fi
\begin{theorem}\label{thm:saddle}
    Under \Cref{ass:data}, let $M=\{\theta:a_{i,j,t}(I-e_i e_i^\top)w_{i,j,t}=0\}$, then $M$ is an invariant set under gradient flow. Namely, from any $\theta\in M$, gradient flow $\Phi(\theta,t)\in M$, $\forall t>0$. For any $\theta\in M$, there is $LDR(\theta,\mc{R})=1$ for any $\mc{R}\subset [d]$.
\end{theorem}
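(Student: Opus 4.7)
The plan is to handle the two assertions separately. The LDR $=1$ claim is immediate from the structure of $M$: for each $(i,j)$, either $a_{i,j}=0$ so neuron $j$ contributes nothing to $s^{(i)}_\theta$, or $w_{i,j}=c_{i,j}e_i$ so its contribution $a_{i,j}\sigma(c_{i,j}x^{(i)}+b_{i,j})$ depends only on $x^{(i)}$. Summing over $j$ gives $s^{(i)}_\theta(x)=g_i(x^{(i)})$, so the full Jacobian $\partial s_\theta/\partial x$ is diagonal. Consequently $\bs{J}_{\mc{R},\theta}(x)$ has nonzero entries only in the columns indexed by $\mc{R}$, so $\bs{J}_{\mc{R},\theta}\bs{P}_{\mc{R}}$ has the same Frobenius norm as $\bs{J}_{\mc{R},\theta}$ itself, and LDR $=1$ follows directly from its definition.

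For the invariance I would track $\frac{d}{ds}(a_{i,j}w_{i,j}^{(k)})$ for each pair $(i,j)$ and each $k\neq i$. Because $\partial_{w_{i,j}}\mc{L}_t$ carries an overall factor of $a_{i,j}$ (and likewise $\partial_{b_{i,j}}\mc{L}_t$), the substantive case is the active branch $a_{i,j}\neq 0$ with $w_{i,j}\parallel e_i$, where invariance reduces to $\partial_{w_{i,j}^{(k)}}\mc{L}_t=0$. A direct computation gives
\begin{align*}
\tfrac{1}{2}\partial_{w_{i,j}^{(k)}}\mc{L}_t = a_{i,j}\,\mathbb{E}_{x_t}\!\left[\bigl(g_i(x_t^{(i)})-y_t^{(i)}(x_t)\bigr)\,\sigma'(c_{i,j}x_t^{(i)}+b_{i,j})\,x_t^{(k)}\right],
\end{align*}
so $x_t^{(k)}$ couples only to $y_t^{(i)}$. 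The key technical step is the lemma that under \Cref{ass:data}, $\mathbb{E}[y_t^{(i)}(x_t)\mid x_t^{(i)},x_t^{(k)}]$ depends only on $x_t^{(i)}$. Given the lemma, splitting $y_t^{(i)}$ into its $x_t^{(i)}$-conditional mean and a residual, using the pairwise independence of $(x_t^{(i)},x_t^{(k)})$ (also a consequence of $\bar p_0(i,k)=0$), and invoking $\mathbb{E}\,x_t^{(k)}=\sqrt{\bar{\alpha}_t}\,\mathbb{E}\,x_0^{(k)}=0$, the expectation collapses to zero.

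The lemma itself is a short Bayes calculation: $\bar p_0(i)=0$ makes each $x_0^{(i)}$ uniform on $\{\pm 1\}$, and $\bar p_0(i,k)=0$ makes $(x_0^{(i)},x_0^{(k)})$ pairwise independent. Combined with independence of the Gaussian noise entries $\xi^{(i)},\xi^{(k)}$, this produces the conditional product structure $P(x_t^{(i)},x_t^{(k)}\mid x_0^{(i)}{=}a)=P(x_t^{(i)}\mid x_0^{(i)}{=}a)\,P(x_t^{(k)})$. Bayes's rule then collapses to $P(x_0^{(i)}\mid x_t^{(i)},x_t^{(k)})=P(x_0^{(i)}\mid x_t^{(i)})$, so $\mathbb{E}(x_0^{(i)}\mid x_t^{(i)},x_t^{(k)})$, and hence the conditional expectation of $y_t^{(i)}=(x_t^{(i)}-\sqrt{\bar{\alpha}_t}\,\mathbb{E}(x_0^{(i)}\mid x_t))/\sqrt{1-\bar{\alpha}_t}$, depends only on $x_t^{(i)}$.

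The main obstacle I expect to work through carefully is the degenerate branch $a_{i,j}=0$ with $w_{i,j}$ having off-diagonal components: there $\dot w_{i,j}=\dot b_{i,j}=0$ automatically, but $\dot a_{i,j}$ need not vanish a priori, so a dormant neuron could in principle switch on with a misaligned weight and leave $M$. I would handle this either by restricting to the stratum of $M$ whose inactive neurons already satisfy $w_{i,j}\in\mathrm{span}(e_i)$ (which is preserved by the active-case argument and is all that the downstream training-dynamics analysis needs), or by exploiting the same pairwise-independence structure plus $\mathbb{E}\,y_t^{(i)}=0$ to produce an additional cancellation in $\partial_{a_{i,j}}\mc{L}_t$ along the off-diagonal support of $w_{i,j}$.
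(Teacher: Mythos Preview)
Your treatment of the LDR$=1$ claim and of the active branch $a_{i,j}\neq 0$ is correct and matches the paper's argument closely. The one cosmetic difference is that the paper works directly with the raw noise label $\xi^{(i)}$ rather than the posterior $y_t^{(i)}$: it writes $e_k^\top\dot w_{i,j}$ as an $\mathbb{E}_{x_0,\xi}$ of a product in which one factor depends only on $(x_0^{(i)},\xi^{(i)})$ and the other is $x_t^{(k)}$, then factorizes using pairwise independence of $x_0^{(i)},x_0^{(k)}$ and of $\xi^{(i)},\xi^{(k)}$ together with $\mathbb{E}\,x_t^{(k)}=0$. Your Bayes/tower route (showing $\mathbb{E}[y_t^{(i)}\mid x_t^{(i)},x_t^{(k)}]$ is $x_t^{(k)}$-free) reaches the same endpoint with one extra conceptual step; both are fine.

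The substantive difference is on the degenerate branch you flag. Your second proposed fix will not go through: at a generic $\theta\in M$ with $a_{i,j}=0$ and, say, $w_{i,j}=e_k$, one gets $\dot a_{i,j}=-2\,\mathbb{E}[g_i(x_t^{(i)})]\,\mathbb{E}[\sigma(x_t^{(k)})]$, which vanishes only when the current network already matches the $i$-marginal score, not for arbitrary $\theta\in M$. The paper resolves this instead via a balance invariant: the quantity $a_{i,j}^2-\|w_{i,j}\|^2-b_{i,j}^2$ is conserved along gradient flow (a one-line computation), and the paper's initialization sets it to zero. Hence whenever $a_{i,j}=0$ one has $w_{i,j}=0$, $b_{i,j}=0$, so $\dot a_{i,j}=0$ and the dead neuron stays dead. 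In effect the paper proves invariance of $M\cap\{\text{balanced}\}$, which is what the downstream analysis uses. Your first fix---passing to the stratum $\{(I-e_ie_i^\top)w_{i,j}=0\ \forall i,j\}$---is also valid and in the same spirit; the balance invariant is simply the submanifold that the paper's initialization singles out.
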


We observe that for any $\theta\in M$, $s^{(i)}_\theta(x,t) = \sum_{j\in [m]} a_{i,j,t} \sigma(w_{i,j,t}^\top x + b_{i,j,t})$ is irrelevant to the dimensions of $x$ other than $x^{(i)}$, therefore it cannot represent the true target score for any distribution that involves correlations over multiple dimensions. Actually since the reverse process starts from a Gaussian distribution that enjoys independent entries on different dimensions, the denoised sample will always have independent entries. For instance, for the generation of the parity rule, the best model in $M$ can only generate the uniform distribution over all of the hypercube, giving a half chance of hallucination. Therefore it is favorable to avoid optimizing the model into the set $M$. However, we will show that driving towards $M$ is implicitly induced by running gradient flow with small initialization.

\subsection{Early Training Is Biased Towards High LDR}
\begin{wrapfigure}{r}{0.4\textwidth} 
    \centering
    \vspace{-10pt}
    \includegraphics[width=0.38\textwidth]{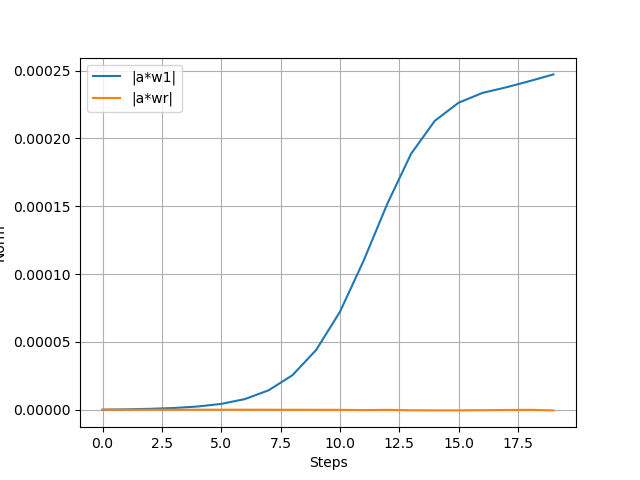} 
    \vspace{-12pt}
    \caption{When learning $s_{\theta}^{(1)}(x_t,t)$, the average norm of neurons' $1_{st}$ dimension weight $|aw^{(1)}|$ increase much faster than $\|a(w-w^{(1)})\|$. It means $s_{\theta}^{(1)}(x_t,t)$ becomes a univariate function in $x^{(1)}_t$.\label{fig:trainbias}}
    \vspace{-5pt}
\end{wrapfigure}
A series of previous works \citep{woodworth2020kernel, jin2023understanding}  adopts small initialization to assist representation learning for MLP networks, as opposed to the kernel regime where the network representations barely changes. We adopt the idea to check how network's representation change in the early phase of training. As exhibited in \Cref{fig:trainbias}, the network parameter will bias towards a 1-sparse feature extraction that marks the invariant set $M$ (\Cref{thm:saddle}), which we will prove in theory in the remaining part of the section.

For a small constant $\sigma_{init}$ and constant $r>0$, we use the initialization scheme for $\theta=(a_{i,j,t},w_{i,j,t},b_{i,j,t})$ as 
\begin{align*}
    w_{i,j,t}(0)\sim \mc{N}(0, \sigma_{init}^2), \quad b_{i,j,t}(0)\sim \mc{N}(0, \sigma_{init}^2r^2),\\  a_{i,j,t}(0)\sim \text{Unif}(\{\pm 1\}) \sqrt{\norm{w_{i,j,t}(0)}^2 + b_{i,j,t}(0)^2}.
\end{align*}

Inspired by the G-function \citep{maennel2018gradient,lyu2021gradient}, when the initialization is very small, the neural network output $s_\theta(x,t)\approx 0$, so we can expand the loss as
$$\Loss_t(\theta) = \E_{x_t}\norm{s_\theta(x_t,t)-y_t(x_t)}^2 = \E_{x_t}[\norm{y_t(x_t)}^2 - 2 s_\theta(x_t,t)^\top y_t(x_t)] + O(\E_{x_t}(s^2_\theta(x_t))) $$
So the initial trajectory of the neural network aims to optimize a surrogate loss $\tilde{\Loss}(\theta) = 
\E_{x_t}  [- 2 s_\theta(x_t,t)^\top y_t(x_t) ]$. For the simplicity of reasoning we will consider the trajectory $(\tilde{a}_{i,j,t},\tilde{w}_{i,j,t},\tilde{b}_{i,j,t})$ on such a surrogate loss, which can always well-approximate the early trajectory on the actual loss when $\sigma_{init}$ is small enough. We observe the following.

\begin{theorem}\label{thm:bias}
    Under \Cref{ass:data}, for any $0<c<1$, there are real number $M_c,t_c$ that for a 2-layer ReLU network with width $m>M_c$, with high probability over the initialization, the network $\tilde{s}_{\tilde{\theta}(\tau)}(x,t) = \sum_{i\in [d],j\in [m]} \tilde{a}_{i,j,t} \sigma(\tilde{w}_{i,j,t}^\top x + \tilde{b}_{i,j,t})e_i$ has high LDR for any time $\tau>t_c$ and any region $\mc{R}\subset [d]$. Specifically,
    $$LDR(\tilde{\theta}(\tau),\mc{R})>1-c.$$
\end{theorem}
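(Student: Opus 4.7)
The plan is to exploit the positive 1-homogeneity of $\sigma$ to reduce the surrogate dynamics to a sphere flow, then use a Fourier expansion of the denoising target on the hypercube to show that this flow concentrates each neuron on a single coordinate axis, which directly forces LDR close to $1$. For the reduction, note that $\tildeLoss(\bth)=-2\sum_{i,j}\tilde a_{i,j,t}\,F_i(\tilde v_{i,j,t})$ with $v=(w,b)$ and $F_i(v):=\E_{x_t}[y_t^{(i)}(x_t)\,\sigma(w^\top x_t+b)]$, so each $F_i$ is 1-homogeneous in $v$. The balance $\tilde a^2=\|\tilde v\|^2$ holds at initialization and is preserved under gradient flow. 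Writing $\tilde v=r\hat v$ with $\hat v$ on the unit sphere and $\tilde a=\sgn(\tilde a(0))\,r$, one obtains $\dot r=2\,\sgn(\tilde a(0))\,r\,\tilde F_i(\hat v)$ and $\dot{\hat v}=2\,\sgn(\tilde a(0))\,\bigl(\nabla F_i(\hat v)-\tilde F_i(\hat v)\hat v\bigr)$, where $\tilde F_i(\hat v):=F_i(\hat v)$. Thus each neuron's direction follows a projected gradient flow on a fixed sphere objective while its norm evolves exponentially at rate $2\,\sgn(\tilde a(0))\,\tilde F_i(\hat v)$.

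Next I would decompose $y_t^{(i)}$ using the Fourier expansion of $p_0$ on the hypercube. A direct computation using the closed form of $p_t$ yields
\[
\E(x_0^{(i)}\mid x_t) = \frac{\sum_{I\subset[d]}\bar p_0(I)\prod_{k\in I\triangle\{i\}}\psi(x_t^{(k)})}{\sum_{I\subset[d]}\bar p_0(I)\prod_{k\in I}\psi(x_t^{(k)})},
\]
where $\psi(u)=\tanh(\sqrt{\bar\alpha_t}\,u/(1-\bar\alpha_t))$. Under \Cref{ass:data} every $\bar p_0(I)$ with $1\le|I|\le 2$ vanishes, so the leading nonconstant term in the numerator is $\bar p_0(\emptyset)\psi(x_t^{(i)})$ and every other nonconstant contribution in numerator or denominator is a product of at least three $\psi$ factors on distinct coordinates. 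In particular $(x_t^{(i)},x_t^{(k)})$ has independent marginals for $k\ne i$, and the component of $F_i(\hat v)$ that depends on $\hat v$ only through $(\hat w^{(i)},\hat b)$ strictly dominates at small $\hat v$ every component depending on $\hat w^{(k)}$ with $k\ne i$. Thus on the sphere $\tilde F_i$ is to leading order a function of $(\hat w^{(i)},\hat b)$ alone.

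A Hessian/Lojasiewicz computation at the critical points of this leading-order function then identifies the $\pm e_i$-aligned directions (with appropriate bias) as the only stable extrema, every other critical point being a strict saddle whose escape direction has nonzero $e_i$-component. Since $m\ge M_c$, a standard concentration argument over the random initialization gives that with high probability, for every $i\in[d]$ some neuron starts in the basin of $+e_i$ and another in the basin of $-e_i$. Their directions converge to $\pm e_i$ while their magnitudes $r$ grow exponentially faster than those of misaligned neurons. Substituting into
\[
\frac{\partial \tilde s^{(i)}_{\tilde\bth}}{\partial x^{(k)}}(x,t) = \sum_j \tilde a_{i,j,t}\,\tilde w_{i,j,t}^{(k)}\,\sigma'\!\bigl(\tilde w_{i,j,t}^\top x + \tilde b_{i,j,t}\bigr),
\]
the cross-coordinate entries ($k\ne i$) become negligible relative to the $k=i$ entry, and choosing $t_c$ so the growth-rate gap exceeds $(1-c)/c$ yields $LDR(\tilde\bth(\tau),\mathcal{R})>1-c$ for every $\mathcal{R}\subset[d]$.

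The main obstacle will be the third step: the remainder in the Fourier expansion of $y_t^{(i)}$ depends jointly on three or more coordinates and might a priori create spurious attractors tilted away from the axes, so the proof must quantify that these corrections remain uniformly subleading along the early trajectory. A secondary difficulty is that $\sigma'$ is only defined off a measure-zero activation boundary, forcing the sphere-flow analysis to use a Clarke-subdifferential formulation together with a generic-initialization argument that excludes those boundaries. Making the basin size and the width threshold $M_c$ quantitatively explicit in $c$, $d$, and the Fourier tail of $p_0$ is the delicate quantitative ingredient I expect to occupy most of the technical work.
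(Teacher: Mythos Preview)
Your high-level strategy matches the paper's: reduce the surrogate dynamics to a sphere flow via 1-homogeneity and the balance $\tilde a^2=\|\tilde w\|^2+\tilde b^2$, identify the maximizers of the sphere potential $F_i(\hat v)=\E_{x_t}[y_t^{(i)}(x_t)\sigma(\hat w^\top x_t+\hat b)]$ as the $\pm e_i$-aligned directions, and use a width argument to ensure some neuron starts near each such direction and hence outgrows all misaligned ones exponentially.

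The substantive difference is in how you propose to analyze $F_i$. You expand $\E(x_0^{(i)}\mid x_t)$ through the Fourier coefficients of $p_0$, obtaining a leading $\psi(x_t^{(i)})$ term plus a remainder built from products of three or more $\psi$'s, and then must argue this remainder never creates off-axis attractors. That is exactly the ``main obstacle'' you flag, and it is genuinely delicate because the ratio structure of $\E(x_0^{(i)}\mid x_t)$ makes the remainder hard to control uniformly. The paper bypasses this entirely with a tower-property trick: since $y_t^{(i)}(x_t)=\E[\xi^{(i)}\mid x_t]$, one has $F_i(w,b)=\E_{x_0,\xi}[\xi^{(i)}\sigma(w^\top x_t+b)]$ exactly, with no perturbative expansion needed. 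Integrating out the Gaussian $\xi^{(i)}$ gives a closed form in terms of $\erf$, and \Cref{ass:data} enters only through the second-moment identity $\E_{x_0}(w^\top P_i x_0)^2=\|P_i w\|^2$, which lets one apply Jensen to pin the unique maximum at $(w,b)\propto(e_i,D^*)$. So where you would fight a remainder, the paper has an exact formula.

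Two smaller points. First, the paper also proves that $\sgn(\tilde a)\,F_i(\hat v(\tau))$ is nondecreasing along the flow, which is what allows the clean dichotomy ``either aligned by time $\tau$ or grew at rate at most $K_0$ throughout $[0,\tau]$''; your sketch implicitly uses this but does not state it. Second, your phrase ``strictly dominates at small $\hat v$'' is unclear since $\hat v$ lives on the unit sphere; presumably you mean near the axis, but with the exact formula this qualifier becomes unnecessary.
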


The proof of the theorem actually implies something stronger: not only does the network have LDR close to 1, its parameters go arbitrarily close to the set $M$. For statement conciseness we omit the details to \Cref{sec:proofbias}. The theorem depicts the representation learning process in the early phase of training: for a neuron along the direction $\tilde{w}_{i,j,t}(0) = \pm\norm{\tilde{w}_{i,j,t}(0)} e_i$, with proper bias $\tilde{b}_{i,j,t}$, the growth rate of its norm (or $\tilde{a}_{i,j,t}$) is maximal and its direction does not alter, and therefore after a fixed training period it will have an exponentially larger impact to the network output than a neuron of suboptimal direction. Therefore after a few epochs, a neuron either still has a small magnitude ($|a_{i,j,t}|$), or its weight will be close to the optimal direction $\tilde{w}_{i,j,t}(s) \simeq  e_i$, making the whole network close to the saddle set $M$ introduced in \Cref{thm:saddle} and having high LDR. While it may take a long time for the network to escape the neighborhood of $M$, the network can operate inside $M$ to learn denoising functions that recovers each marginal distribution of $p_0$. In this way the model independently conducts denoising on each individual dimension, performing local generation that introduces hallucination in cases like the parity.

\section{Discussion}
\textbf{Does local generation bias still hold for real world distribution?} In our analysis and synthetic text distribution, the condition that different token symbols are independent plays a critical role. Is our discovered bias and mechanism still robust when the distribution does not strictly satisfy this requirement? We conduct experiments on real world text distributions to answer these concerns. We construct two datasets, one rendering 1,000 common English words and the other contains 1,000 common Chinese characters. When using diffusion model to learn score matching and generate samples, we observe similar phenomenon persists to happen. Both models go through the ``fuzzy-hallucination-overfitting'' three phases, randomly assembling radicals and letters in nonsensical way at intial stage, and overfit to duplicate training data after long period of training. We also test LDR in these scenarios, finding the same decreasing pattern. Note that we selcet $\sqrt{\bar{\alpha}_t}=0.2$ because signal-noise-ratio increase significant at this stage and it is critical for determining the final content. Result shows the same local generation bias still exists at early stage of training and leads to hallucination for real text distribution. It also confirms that the decrease of LDR implies overfitting.
\begin{figure}[ht]
    \vspace{-5 pt}
    \centering
    \begin{minipage}[b]{0.38\textwidth}
        \centering
        \includegraphics[width=\textwidth]{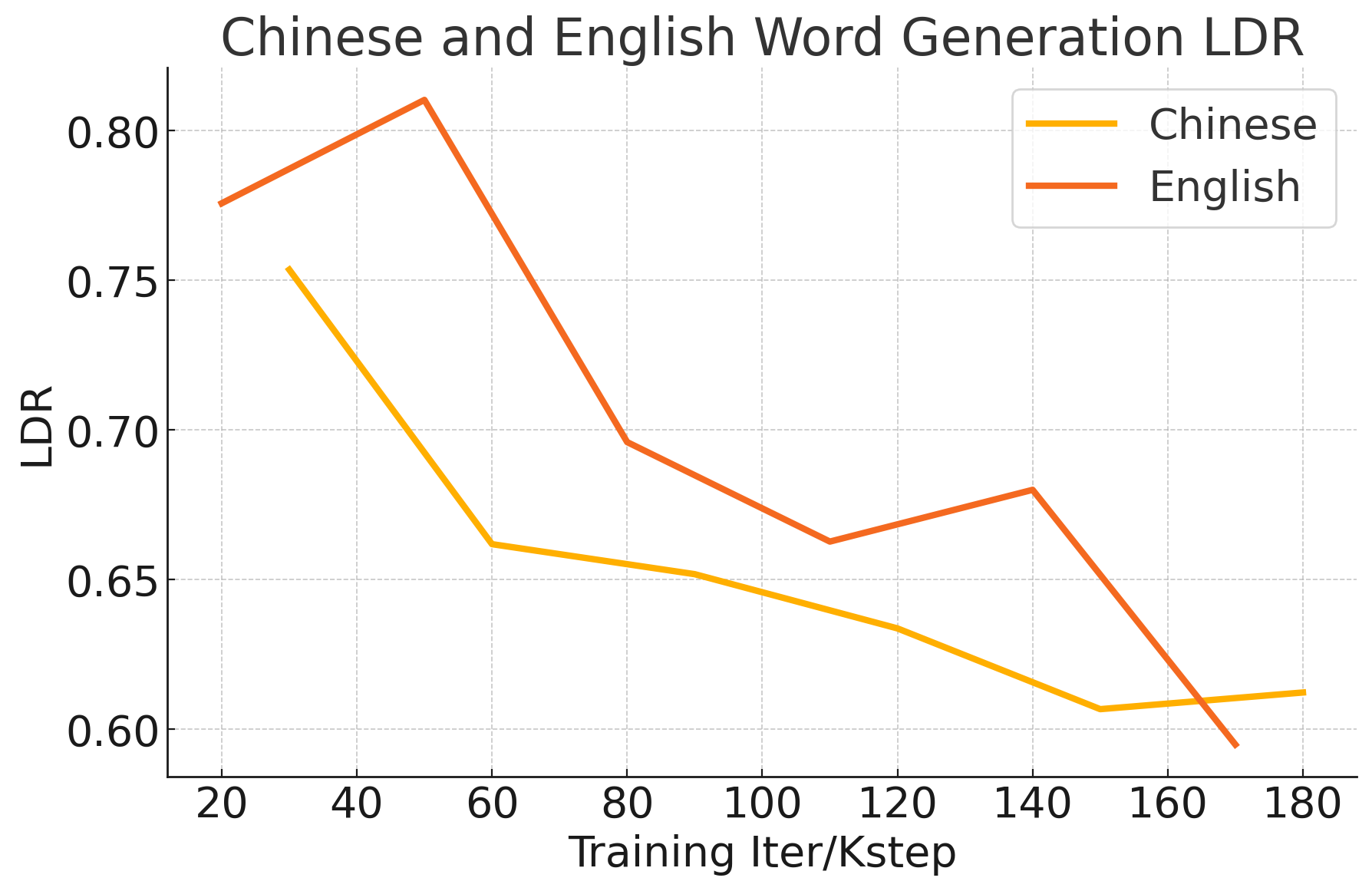} % Replace with your image
        \label{fig:6-1}
    \end{minipage}
    \hspace{0.02\textwidth}
    \begin{minipage}[b]{0.25\textwidth}
        \centering
        \includegraphics[width=\textwidth]{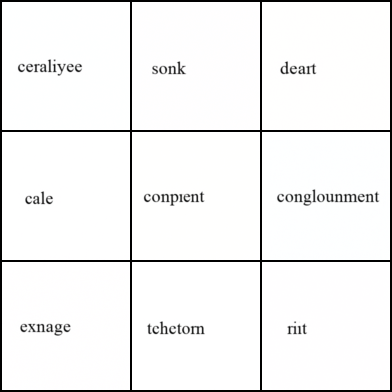} % Replace with your image
        \label{fig:6-2}
    \end{minipage}
    \hspace{0.02\textwidth}
    \begin{minipage}[b]{0.25\textwidth}
        \centering
        \includegraphics[width=\textwidth]{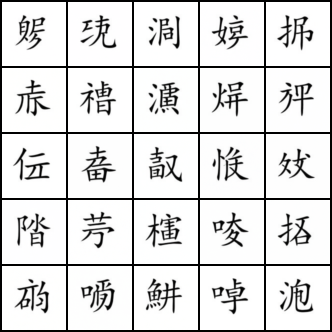} % Replace with your image
        \label{fig:6-3}
    \end{minipage}
    \vspace{-15 pt}
    \caption{LDR trend for UNet learning Chinese and English texts (left). Hallucination example for misspelling words (middle) and Chinese characters (right).}
    \label{fig:5-LDR-mnist}
\end{figure}

Moreover, we also conduct the LDR analysis on real-world models such as FLUX1~\citep{flux2024} and StableDiffusion 3.5~\citep{podell2023sdxlimprovinglatentdiffusion}. Since exact calculation the Jacobian matrix $J_{R,\theta}(x)$ requires large memory consumption for these enormous models, we use zeroth order approximation to calculate LDR. We verify that it also exhibits high LDR in generating images with text content, which corroborates our finding. For more details please refer to the appendix.

% \textbf{No-Free Lunch Issue.} It is not surprising that a vision generative model with zero prior knowledge fails to learn human-contrived rule for spelling. Because without any assumption on the possible grammar/spelling rules, it is intrinsically ambiguous to determine what is the real law. Therefore, the main point of this work is to investigate the bias of modern deep learning models, i.e. MLP, UNet, DiT, to understand how they tend to generate samples and show its discrepancies to real distributions. Our work thus highlights the importance of introducing explicit modeling of text knowledge, without which diffusion model is inevitable to hallucinate. 

\section{Conclusion}
In this paper, we have presented a detailed investigation into the phenomenon of hallucinations in generating text-related contents. By combining empirical observations with theoretical analysis, we have demonstrated it to be closely related to the implicit bias of denoising network called local-generation bias. We find that such bias is not a mere consequence of the model's architecture but rather an inherent property of the training dynamics driven by score matching. It is shown that the trained diffusion models, despite their global receptive field capabilities, tend to rely on local information during the denoising process, generating symbols independently without capturing the global structure. The key to form this bias is the near pairwise independence between marginal distributions for each token symbol. We further introduce the Local Dependency Ratio (LDR) as a novel metric to quantify the extent of this local generation bias and applied it to various diffusion models, showing that this bias emerges early in training and persists through extensive training phases, even in real-world distribution where independent condition does not strictly hold. This study may shed light on the mechanisms behind hallucinations in diffusion models, highlighting the importance of addressing local generation bias for more accurate and coherent generation in tasks requiring complex global structure understanding.

\bibliography{reference}
\bibliographystyle{iclr2025_conference}

\appendix
\section{Appendix}
\label{section:apendix}

\subsection{Proof for the saddle point \Cref{thm:saddle}}
For any $\theta\in M$, we can set $w_{i,j,t} = c_{i,j,t} e_i$ for any neuron that $a_{i,j,t}\neq 0$, where $c_{i,j,t}\in \R$ are scalars. Then the network outputs $s^{(i)}_\theta(x) = \sum a_{i,j,t} \sigma(c_{i,j,t} x^{(i)} + b_{i,j,t})$ depend only on the $i$-th coordinate of the input $x$. 

Now we compute the gradient for $w_{i,j,t}$ as 
\begin{align*}
     \frac{d}{ds}w_{i,j,t} &= -2\E_{x_t} (s^{(i)}_\theta(x_t)-y_i(x_t))a_{i,j,t}\sigma'(w_{i,j,t}^\top x_t + b_{i,j,t})x_t\\
     & = -2\E_{x_0,\xi} s^{(i)}_\theta(x_t)a_{i,j,t}\sigma'(c_{i,j,t} x_t^{(i)} + b_{i,j,t})x_t\\
     & + 2\E_{x_0,\xi}  a_{i,j,t}\xi^{(i)}\sigma'(\alphas c_{i,j,t} x_0^{(i)} + \alphasC c_{i,j,t} \xi^{(i)} + b_{i,j,t})x_t
\end{align*}
Now we wish to show that $\frac{d}{ds}w_{i,j,t}$ is still along the direction of $e_i$, thereby gradient flow does not escape the region of $M$. This is done by the following two lemmas.

\begin{lemma}For any $k\neq i$,
$$\E_{x_0,\xi} s^{(i)}_\theta(x_t)a_{i,j,t}\sigma'(c_{i,j,t} x_t^{(i)} + b_{i,j,t})e_k^\top x_t = 0.$$
\end{lemma}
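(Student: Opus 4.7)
The plan is to exploit that, for $\theta\in M$, the whole prefactor in the integrand depends only on the single coordinate $x_t^{(i)}$, so the expectation decouples into the product of a one-dimensional integral in $x_t^{(i)}$ and the mean of $x_t^{(k)}$, and the latter will vanish by \Cref{ass:data}.

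More concretely, first I would observe that when $\theta\in M$, the assumption $w_{i,j,t}=c_{i,j,t}e_i$ gives
\[
s^{(i)}_\theta(x_t)=\sum_{j}a_{i,j,t}\sigma\bigl(c_{i,j,t}x_t^{(i)}+b_{i,j,t}\bigr),
\qquad
\sigma'\bigl(c_{i,j,t}x_t^{(i)}+b_{i,j,t}\bigr)
\]
are both functions of the scalar $x_t^{(i)}$ only; call the product $F(x_t^{(i)})$. Since $e_k^\top x_t=x_t^{(k)}=\alphas x_0^{(k)}+\alphasC\xi^{(k)}$, the expectation we need to show to vanish splits as
\[
\E\bigl[F(x_t^{(i)})\,x_t^{(k)}\bigr]
=\alphas\,\E\bigl[F(x_t^{(i)})\,x_0^{(k)}\bigr]
+\alphasC\,\E\bigl[F(x_t^{(i)})\,\xi^{(k)}\bigr].
\]

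Next I would kill each term using independence. For the noise term, $\xi^{(k)}$ is independent of $(x_0,\xi^{(i)})$ because the components of $\xi$ are independent standard Gaussians and $\xi$ is drawn independently of $x_0$; hence $\xi^{(k)}$ is independent of $x_t^{(i)}=\alphas x_0^{(i)}+\alphasC\xi^{(i)}$, and the term factors as $\E F(x_t^{(i)})\cdot\E\xi^{(k)}=0$. For the $x_0^{(k)}$ term, I would invoke \Cref{ass:data}: $\bar p_0(i)=\bar p_0(k)=\bar p_0(\{i,k\})=0$ forces the pairwise marginal of $(x_0^{(i)},x_0^{(k)})$ to be uniform on $\{\pm1\}^2$ (a $4\times 4$ linear system in the four joint probabilities pins them all to $1/4$), so $x_0^{(i)}\perp x_0^{(k)}$ and in particular $\E x_0^{(k)}=0$. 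Combined with the independence of $\xi^{(i)}$ from $x_0^{(k)}$, we get $x_t^{(i)}\perp x_0^{(k)}$, so this term also factors and vanishes.

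The only step that needs a moment's care is the implication ``$\bar p_0(i)=\bar p_0(k)=\bar p_0(\{i,k\})=0 \Rightarrow x_0^{(i)}\perp x_0^{(k)}$''; everything else is routine linearity of expectation and independence of coordinates of the forward-process Gaussian. Since pairwise independence on $\{\pm 1\}$ follows from the three Fourier conditions by solving a small linear system, there is no real obstacle, and the lemma follows immediately by combining the two vanishing terms.
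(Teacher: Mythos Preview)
Your proposal is correct and follows essentially the same approach as the paper: recognize that for $\theta\in M$ the prefactor depends only on $x_t^{(i)}$, then use the independence of $x_t^{(i)}$ and $x_t^{(k)}$ (coming from \Cref{ass:data} for the $x_0$ coordinates and from the isotropy of $\xi$) together with $\E x_t^{(k)}=0$ to factor and kill the expectation. The paper states the independence of $x_0^{(i)}$ and $x_0^{(k)}$ directly from \Cref{ass:data} without your small linear-system justification, and it factors in one step rather than splitting $x_t^{(k)}$ into its $x_0^{(k)}$ and $\xi^{(k)}$ parts, but the logic is identical.
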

\begin{proof}
    \begin{align*}
        &\E_{x_0,\xi} s^{(i)}_\theta(x_t)\sigma'(c_{i,j,t} x_t^{(i)} + b_{i,j,t})e_k^\top x_t\\
        & = \E_{x_0,\xi} \sum_{j'} a_{i,j',t} \sigma(c_{i,j',t} x_t^{(i)} + b_{i,j',t})\sigma'(c_{i,j,t} x_t^{(i)} + b_{i,j,t}) x_t^{(k)}
    \end{align*}
    Notice that by \Cref{ass:data}, $x_0^{(i)}$ and $x_0^{(k)}$ are independent, and for standard Gaussian $\xi^{(i)}$ and $\xi^{(k)}$ are independent. Furthermore $\E x_t^{(k)} = \alphas \E x_0^{(k)} + \alphasC \E \xi^{(k)}=0$. Therefore,
    \begin{align*}
        &\E_{x_0,\xi} s^{(i)}_\theta(x_t)\sigma'(c_{i,j,t} x_t^{(i)} + b_{i,j,t})e_k^\top x_t\\
        & = \sum_{j'} a_{i,j',t} [\E_{x_t^{(i)}}\sigma(c_{i,j',t} x_t^{(i)} + b_{i,j',t})\sigma'(c_{i,j,t} x_t^{(i)} + b_{i,j,t})][\E_{x_t^{(k)}} x_t^{(k)}]\\
        & = 0.
    \end{align*}
\end{proof}

\begin{lemma}For any $k\neq i$,
$$\E_{x_0,\xi}  \xi^{(i)}\sigma'(\alphas c_{i,j,t} x_0^{(i)} + \alphasC c_{i,j,t} \xi^{(i)} + b_{i,j,t})e_k^\top x_t = 0.$$
\end{lemma}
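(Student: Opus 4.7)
The plan is to mirror the structure of the preceding lemma almost verbatim, exploiting the conditions in \Cref{ass:data} to decouple the coordinates indexed by $i$ and $k$. First I would expand $e_k^\top x_t = \alphas x_0^{(k)} + \alphasC \xi^{(k)}$ using the forward-process definition $x_t = \alphas x_0 + \alphasC \xi$, and split the expectation into two pieces:
\begin{align*}
\E_{x_0,\xi}\,\xi^{(i)}\sigma'(\cdot)\,e_k^\top x_t
 = \alphas\,\E\bigl[\xi^{(i)}\sigma'(\cdot)\,x_0^{(k)}\bigr]
 + \alphasC\,\E\bigl[\xi^{(i)}\sigma'(\cdot)\,\xi^{(k)}\bigr],
\end{align*}
where $\sigma'(\cdot) = \sigma'(\alphas c_{i,j,t} x_0^{(i)} + \alphasC c_{i,j,t}\xi^{(i)} + b_{i,j,t})$ depends only on the $i$-th coordinates of $x_0$ and $\xi$.

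Next I would use \Cref{ass:data} to factor each term. Because $\bar p_0(\{i,k\})=0$ and $\bar p_0(\{i\})=\bar p_0(\{k\})=0$, the marginal of $x_0^{(k)}$ is independent of $x_0^{(i)}$ and is mean zero (uniform on $\{\pm1\}$). Since $\xi^{(k)},\xi^{(i)}$ are independent standard Gaussians and $\xi$ is independent of $x_0$, the random variable $x_0^{(k)}$ is independent of the pair $(x_0^{(i)},\xi^{(i)})$, hence independent of $\sigma'(\cdot)$ and of $\xi^{(i)}$. Therefore the first term factorizes as $\alphas\,\E[\xi^{(i)}\sigma'(\cdot)]\,\E[x_0^{(k)}] = 0$. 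The same independence-and-mean-zero argument applied to $\xi^{(k)}$ yields $\alphasC\,\E[\xi^{(i)}\sigma'(\cdot)]\,\E[\xi^{(k)}] = 0$, so the sum vanishes.

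There is no real obstacle here; the statement is the direct Gaussian-noise analogue of the preceding lemma, and the only delicate point is being explicit about which independence relation (marginal independence from \Cref{ass:data} for $x_0^{(k)}$, coordinate-wise independence of a standard Gaussian for $\xi^{(k)}$) is used to detach the $k$-th coordinate from the $\sigma'$-activation and from $\xi^{(i)}$. Once this decoupling is recorded, both summands are zero by mean-zero factors. Together with the previous lemma, this shows that $\frac{d}{ds}w_{i,j,t}$ has no component along any $e_k$ with $k\neq i$, so gradient flow preserves the alignment $w_{i,j,t}\parallel e_i$, completing the invariance of $M$ needed for \Cref{thm:saddle}.
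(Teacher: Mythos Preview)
Your proposal is correct and essentially identical to the paper's proof: both rely on the independence of $(x_0^{(k)},\xi^{(k)})$ from $(x_0^{(i)},\xi^{(i)})$ together with $\E[x_t^{(k)}]=0$ to factor the expectation and kill it. The only cosmetic difference is that you split $x_t^{(k)}$ into its $x_0^{(k)}$ and $\xi^{(k)}$ summands before factoring, whereas the paper keeps $x_t^{(k)}$ intact and factors once.
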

\begin{proof}
    Similarly, $x_t^{(k)}=\alphas x_0^{(k)} + \alphasC \xi^{(k)}$ has mean 0. Since $(x_0^{(k)},\xi^{(k)})$ and $(x_0^{(i)},\xi^{(i)})$ are independent, we know
    \begin{align*}& \E_{x_0,\xi}  \xi^{(i)}\sigma'(\alphas c_{i,j,t} x_0^{(i)} + \alphasC c_{i,j,t} \xi^{(i)} + b_{i,j,t})e_k^\top x_t \\
    & = [\E_{x_0^{(i)},\xi^{(i)}}  \xi^{(i)}\sigma'(\alphas c_{i,j,t} x_0^{(i)} + \alphasC c_{i,j,t} \xi^{(i)} + b_{i,j,t})][\E_{x_t^{(k)}} x_t^{(k)}] \\
    & = 0.
    \end{align*}
\end{proof}

Besides, for a neuron that $a_{i,j,t}=0$, from \Cref{lem:balance} we know $w_{i,j,t}=0$ and $b_{i,j,t}=0$, so $\frac{d}{ds}a_{i,j,t}=0$. So $a_{i,j,t}$ will keep zero along the trajectory. Thus $M$ is indeed an invariant set under gradient flow.

Finally we calculate the LDR value for the network. Notice that 
$$\frac{\partial}{\partial x}s^{(i)}_\theta(x) = \sum a_{i,j,t} \sigma'(c_{i,j,t} x^{(i)} + b_{i,j,t})c_{i,j,t} e_i$$ is always along the direction $e_i$, for any $\mc{R}\subset [d]$ and $j\not\in\mc{R}$ there is $$ (\frac{\partial}{\partial x}s^{(\mc{R})}_\theta(x))^\top e_j = 0_{|\mc{R}|}.$$

Therefore by definition there is $LDR(\theta,\mc{R})=1$.
\subsection{Proof for the implicit training bias \Cref{thm:bias}}
\label{sec:proofbias}

Here we consider a fixed $t$ and target dimension $i$, and we omit the subscripts of $t$ and $i$ for the simplicity of notations. Thus for the network $s_{\theta}(x) = \sum_{j\in [m]} a_{j} \sigma(w_{j}^\top x + b_{j})$, we optimize it via GF on the square loss $\Loss(\theta) = \E_{x_t=x}(s_\theta(x)-y_i(x))^2$ as

\begin{align*}
    \frac{d}{ds}a_j &= -2\E_x (s_\theta(x)-y_i(x))\sigma(w_{j}^\top x + b_{j})\\
     \frac{d}{ds}w_j &= -2\E_x (s_\theta(x)-y_i(x))a_j\sigma'(w_{j}^\top x + b_{j})x\\ 
     \frac{d}{ds}b_j &= -2\E_x (s_\theta(x)-y_i(x))a_j\sigma'(w_{j}^\top x + b_{j})\\
\end{align*}
We write $\theta(s)$ to denote the value of the parameters at time $s$. First we observe that the two layers of the network stay balanced throughout the course of the training process.
\begin{lemma}\label{lem:balance}
    $\frac{d}{ds} (a_j^2 - \norm{w_j}^2 - b_j^2) = 0$.
\end{lemma}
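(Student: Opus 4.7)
The plan is to prove the conservation law by a direct chain-rule calculation, exploiting the positive $1$-homogeneity of the ReLU activation, namely the identity $\sigma'(z)\cdot z = \sigma(z)$ (valid wherever $\sigma$ is differentiable). This is the standard ``layer-balance'' identity for two-layer homogeneous networks, adapted here to the case with a bias term by treating $(w_j,b_j)$ together as the input-layer weight of a single affine pre-activation.

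First I would compute the time derivatives of the three squared quantities separately from the gradient flow equations already written out above. This yields
\begin{align*}
\tfrac{d}{ds}(a_j^2) &= -4 a_j\,\E_x\bigl[(s_\theta(x)-y_i(x))\,\sigma(w_j^\top x + b_j)\bigr],\\
\tfrac{d}{ds}(\norm{w_j}^2) &= -4 a_j\,\E_x\bigl[(s_\theta(x)-y_i(x))\,\sigma'(w_j^\top x + b_j)\,(w_j^\top x)\bigr],\\
\tfrac{d}{ds}(b_j^2) &= -4 a_j\,\E_x\bigl[(s_\theta(x)-y_i(x))\,\sigma'(w_j^\top x + b_j)\,b_j\bigr],
\end{align*}
by pairing $2a_j$ with $\dot a_j$, $2 w_j^\top$ with $\dot w_j$, and $2 b_j$ with $\dot b_j$ respectively. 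Summing the last two brings the factor $\sigma'(w_j^\top x + b_j)(w_j^\top x + b_j)$ inside the expectation, at which point the homogeneity identity $\sigma'(z)z = \sigma(z)$ collapses it to $\sigma(w_j^\top x + b_j)$, matching the expression for $\tfrac{d}{ds}(a_j^2)$ exactly. Subtracting gives the claim.

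The one technical point that needs a brief comment is that $\sigma$ is not differentiable at $0$, so the identity $\sigma'(z)z=\sigma(z)$ only holds off the singleton $\{0\}$. Since $t>0$, the random variable $w_j^\top x_t + b_j$ has a density (it is an affine functional of $x_t$, which inherits a smooth density from the Gaussian component of the forward process whenever $w_j\neq 0$; the case $w_j=0$ makes $\sigma'$ ill-defined only on a measure-zero event for $b_j\neq 0$, and is harmless for $b_j=0$ since then the neuron vanishes identically). Thus the exceptional set has probability zero and the expectations are unaffected, so the conservation law holds pointwise in $s$ along the gradient flow trajectory. I do not anticipate any deeper obstacle: the result is an algebraic consequence of ReLU homogeneity, and the only care needed is this brief measure-zero justification.
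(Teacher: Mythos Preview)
Your proposal is correct and follows exactly the same approach as the paper: compute $2a_j\dot a_j - 2w_j^\top \dot w_j - 2b_j\dot b_j$, plug in the gradient flow expressions, and cancel using the ReLU homogeneity identity $\sigma'(z)z=\sigma(z)$. Your added remark about the measure-zero non-differentiability set is a nice extra justification that the paper omits.
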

\begin{proof}
    This is obtained directly as
    \begin{align*}
    \frac{d}{ds} (a_j^2 - \norm{w_j}^2 - b_j^2) &= 2a_j\frac{d}{ds}a_j - 2 w_j^\top \frac{d}{ds} w_j - 2 b_j \frac{d}{ds} b_j\\
    & = -4\E_x (s_\theta(x)-y(x))a_j\left[\sigma(w_{j}^\top x + b_{j})\right.\\
    & \left.-\sigma'(w_{j}^\top x + b_{j})(w_{j}^\top x + b_{j})\right]\\
    & = 0.
\end{align*}
\end{proof}
Therefore $a_j = \sgn(a_j)\sqrt{\norm{w_j}^2 + b_j^2}$ through out the process. 
\subsubsection{Growth Rate For the First Layer Weight}
Inspired by the G-function \citep{maennel2018gradient,lyu2021gradient}, when the initialization is very small, the neural network output $s_\theta(x)\approx 0$, so we can expand the loss as
$$\Loss(\theta) = \E_{x}(s_\theta(x)-y_i(x))^2 = \E_{x}y_i(x)^2 - 2 s_\theta(x) y_i(x) + O(s^2_\theta(x)) $$
So the initial trajectory of the neural network aims to optimize a surrogate loss $\tilde{\Loss}(\theta) = 
\E_x - 2 s_\theta(x) y_i(x)$. We define the parameters $\tilde{\theta}=(\tilde{a}_j,\tilde{w}_j,\tilde{b}_j)$ to be the parameters run specifically for the surrogate loss, namely, let $\tilde{\theta}(0) = \theta(0)$ and
\begin{align*}
    \frac{d}{ds}\tilde{a}_j &= 2\E_x y_i(x)\sigma(\tilde{w}_{j}^\top x + \tilde{b}_{j})\\
     \frac{d}{ds}\tilde{w}_j &= 2\E_x y_i(x)\tilde{a}_j\sigma'(\tilde{w}_{j}^\top x + \tilde{b}_{j})x\\ 
     \frac{d}{ds}\tilde{b}_j &= 2\E_x y_i(x) \tilde{a}_j\sigma'(\tilde{w}_{j}^\top x + \tilde{b}_{j}).
\end{align*}
We will have similarly, $\frac{d}{ds} (\tilde{a}_j^2 - \norm{\tilde{w}_j}^2 - \tilde{b}_j^2) = 0$, so $\tilde{a}_j = \sgn(\tilde{a}_j)\sqrt{\norm{\tilde{w}_j}^2 + \tilde{b}_j^2}$ through out the process. 
%as $G(w_r, b_r) = \mathbb{E}_{x\sim\mc{D}}[y^*(x)\phi(w_r^{\top}x+b_r)]$, Taking derivative with respect to each $w_r$ and $b_r$ we get gradient flow as
%\begin{align}
%    \frac{\mathrm{d} w_r}{\mathrm{d} t}=& f_{\theta}(x) \mathbb{E}_{x\sim \mc{D}}[\nabla_{w_r} f_{\theta}(x)] + a_r\cdot \nabla_{w_r} G(w_r, b_r) \\
%    \approx& a_r\cdot \nabla_{w_r} G(w_r, b_r) \tag{$f_{\theta}(x)\approx 0$},\\
%    \frac{\mathrm{d} b_r}{\mathrm{d} t} =& a_r\cdot \nabla_{b_r} G(w_r, b_r) .
%\end{align}
Then we can actually show that the scale of $\tilde{a}_j$ grows exponentially as a function of the direction of $(\frac{\tilde{w}_{j}}{|\tilde{a}_j|},\frac{\tilde{b}_j}{|\tilde{a}_j|})$ as

\begin{theorem}\label{thm:bias-app}
    Under \Cref{ass:data}, the weight of each neuron $|a_{i,j,t}|$ grows exponentially in time: for every $i,t$, there exists a function $K_{i,t}:S^{d}\to \R$ such that
    \begin{align*}
        |\tilde{a}_{i,j,t}(s)| & = |\tilde{a}_{i,j,t}(0)| \exp\left(2\sgn (\tilde{a}_{i,j,t}(0))\int_0^s K_{i,t}\left(\frac{\tilde{w}_{i,j,t}(\tau)}{|\tilde{a}_{i,j,t}|(\tau)},\frac{\tilde{b}_{i,j,t}(\tau)}{|\tilde{a}_{i,j,t}|(\tau)}\right) d\tau\right)
    \end{align*}
    
    The function $K_{i,t}$ marks the growth rate. The rate satisfy
    \begin{itemize}
        \item $0<K_{i,t}(w,b)<\alphasC$ when $w^{(i)}>0$; $0>K_{i,t}(w,b)>-\alphasC$, when $w^{(i)}<0$.
        \item When $w^{(i)}>0$, the maximal value of $K_{i,t}(w,b)$ is uniquely achieved at $(w,b) = \frac{1}{\sqrt{1+(D^*)^2}}(e_i,D^*)$ for some $D^*>0$; When $w^{(i)}<0$, the minimal value of $K_{i,t}(w,b)$ is uniquely achieved at $(w,b) = \frac{1}{\sqrt{1+(D^*)^2}}(-e_i,D^*)$.
        \item the maximally-growing neuron directions $\left(\frac{\tilde{w}_{i,j,t}(\tau)}{|\tilde{a}_{i,j,t}|(\tau)},\frac{\tilde{b}_{i,j,t}(\tau)}{|\tilde{a}_{i,j,t}|(\tau)}\right) = \frac{1}{\sqrt{1+(D^*)^2}}(e_i,D^*)$ for $a_{i,j,t}>0$ and $\left(\frac{\tilde{w}_{i,j,t}(\tau)}{|\tilde{a}_{i,j,t}|(\tau)},\frac{\tilde{b}_{i,j,t}(\tau)}{|\tilde{a}_{i,j,t}|(\tau)}\right) = \frac{1}{\sqrt{1+(D^*)^2}}(-e_i,D^*)$ for $a_{i,j,t}<0$ are invariant under gradient flow.
        \item $a_{i,j,t}\frac{d}{d\tau}K_{i,t}\left(\frac{\tilde{w}_{i,j,t}(\tau)}{|\tilde{a}_{i,j,t}|(\tau)},\frac{\tilde{b}_{i,j,t}(\tau)}{|\tilde{a}_{i,j,t}|(\tau)}\right)\geq 0$.
    \end{itemize}
\end{theorem}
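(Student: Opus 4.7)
The theorem has four sub-claims: the exponential growth law for $|\tilde a_{i,j,t}(s)|$ and properties (1)--(4) of the rate $K_{i,t}$. My plan is to first use ReLU homogeneity to reduce the dynamics to the unit sphere of $(\hat w,\hat b)$, then obtain a clean closed form for $K_{i,t}$ via Stein's lemma, from which all four claims can be extracted in turn.

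\textbf{Step 1: Growth law.} By \Cref{lem:balance}, $\tilde a_j^2 = \normsm{\tilde w_j}^2 + \tilde b_j^2$ throughout training, so $(\hat w_j,\hat b_j):=(\tilde w_j,\tilde b_j)/|\tilde a_j|$ lives on the unit sphere of $\R^{d+1}$. Positive $1$-homogeneity of $\sigma$ gives $\sigma(\tilde w_j^\top x + \tilde b_j) = |\tilde a_j|\sigma(\hat w_j^\top x + \hat b_j)$, so the surrogate-loss ODE for $\tilde a_j$ becomes
\begin{align*}
\frac{d}{ds}\tilde a_j = 2|\tilde a_j|\cdot K_{i,t}(\hat w_j,\hat b_j), \qquad K_{i,t}(\hat w,\hat b) := \E_x\bigl[y_i(x)\sigma(\hat w^\top x + \hat b)\bigr].
\end{align*}
Taking absolute values and integrating (noting $\sgn(\tilde a_j)$ is preserved, since balancedness forces $|\tilde a_j|>0$ whenever it starts positive) yields the exponential formula. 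The remaining work is to analyze $K_{i,t}$.

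\textbf{Step 2: Closed form and properties (1)--(2).} A tower-property manipulation yields $\E_x[y_i(x)f(x)] = \E_{x_0,\xi}[\xi^{(i)}f(x_t)]$; Stein's lemma applied to the scalar Gaussian $\xi^{(i)}$ inside $\sigma$ then gives
\begin{align*}
K_{i,t}(\hat w,\hat b) = \alphasC\cdot\hat w^{(i)}\cdot\Pr_{x\sim p_t}\!\bigl(\hat w^\top x + \hat b > 0\bigr).
\end{align*}
Property (1) is immediate: since $p_t$ has full support the probability lies in $(0,1)$, and $|\hat w^{(i)}|\le 1$ with equality only at $\hat w = \pm e_i,\hat b=0$ (where the probability equals $1/2$), so $|K_{i,t}|<\alphasC$ strictly and $\sgn K_{i,t}=\sgn\hat w^{(i)}$. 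For property (2), restrict to $\hat w^{(i)}>0$, write $a=\hat w^{(i)}$, $B=\hat b$, and $\bs{u}=(\hat w^{(k)})_{k\ne i}\in\R^{d-1}$, with $a^2+\normsm{\bs{u}}^2+B^2=1$. The key sub-claim is that for each fixed $a$, the probability is maximized uniquely at $\bs{u}=\bs{0}$, $B=\sqrt{1-a^2}$. Conditioning on $(x_0^{(k)})_{k\ne i}$ under \Cref{ass:data} makes $ax^{(i)}+\bs{u}^\top\bs{x}^{-i}+B$ conditionally Gaussian with variance $\alphasC^2(a^2+\normsm{\bs{u}}^2)$, reducing the probability to an average of $\Phi$-values, and a Jensen/concavity-in-the-positive-tail argument then yields the sub-claim. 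The residual 1D problem, in the variable $D=B/a$, is to maximize $F_i(D)/\sqrt{1+D^2}$ with $F_i$ the CDF of $p_t^{(i)}$; the first-order condition $q_i(D^*)(1+(D^*)^2)=D^*F_i(D^*)$ has a unique positive root by comparing the two sides' behavior at $0$ and at $\infty$, giving the maximizer $(e_i,D^*)/\sqrt{1+(D^*)^2}$.

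\textbf{Step 3: Invariance, monotonicity, and main obstacle.} For (3), at $\tilde w_j=ce_i$ the $e_k$-component of $\tfrac{d}{ds}\tilde w_j$ for $k\ne i$ is proportional to $\E_{x_0,\xi}[\xi^{(i)}\sigma'(\tilde w_j^\top x + \tilde b_j)x^{(k)}]$, which vanishes because \Cref{ass:data} implies that the pair $(x_0^{(i)},\xi^{(i)})$ is independent of $(x_0^{(k)},\xi^{(k)})$ (pairwise uncorrelatedness of mean-zero binaries is joint independence of two-variable sub-collections) and $\E[x^{(k)}]=0$. The ray $\{ce_i:c\in\R\}$ is therefore invariant, and the induced 2D flow in $(c,\tilde b_j)$ fixes the unit-circle maximizer from Step 2. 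For (4), differentiate $K_{i,t}$ along the trajectory, substitute the gradient-flow ODEs for $(\tilde w_j,\tilde b_j)$, and use the balancedness-induced projection onto the unit sphere; after simplification, $\sgn(\tilde a_j)\tfrac{d}{d\tau}K_{i,t}$ equals a non-negative multiple of the squared norm of the Riemannian gradient of $K_{i,t}$ on the sphere. The main technical obstacle is the first sub-claim in Step 2: showing, with strict inequality and uniqueness, that spending any of the unit budget on $\bs{u}$ strictly lowers the probability compared with putting it all in $B$. The conditioning argument reduces this to a Jensen-type inequality for averages of $\Phi$ at shifted arguments, and making it both strict and uniquely achieved requires tracking the concavity regime of $\Phi$ carefully and exploiting the non-degeneracy of $\bs{u}^\top\bs{x}_0^{-i}$ whenever $\bs{u}\ne\bs{0}$; this is also the step where \Cref{ass:data} is used essentially beyond the mean-zero calculation.
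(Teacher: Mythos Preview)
Your Stein's-lemma identity $K_{i,t}(\hat w,\hat b)=\alphasC\,\hat w^{(i)}\,\Pr_{x\sim p_t}(\hat w^\top x+\hat b>0)$ is a genuinely cleaner route to the growth-rate formula than the paper's. The paper integrates $\xi^{(i)}$ out by hand, obtaining an $\erf$ expression in auxiliary variables $A,B,C,D$, and only after several substitutions reaches something equivalent to your one-line formula. With this formula in hand your handling of property (1) is immediate, and your arguments for (3) and (4)---invariance of the $e_i$-ray via the same pairwise-independence computation that underlies \Cref{thm:saddle}, and monotonicity of $K$ along the normalized flow as a squared Riemannian gradient---coincide with the paper's.

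For property (2) two points deserve attention. First, a small error: conditioning on $(x_0^{(k)})_{k\ne i}$ does \emph{not} make $ax^{(i)}+\bs u^\top\bs x^{-i}+B$ conditionally Gaussian, because \Cref{ass:data} gives only pairwise independence---$x_0^{(i)}$ need not be determined by, nor independent of, $x_0^{-i}$ (e.g.\ for $3$-bit parity it is fully determined, but for a mixture of parity with uniform it is not). You should condition on all of $x_0$; the variance $(1-\bar\alpha_t)(a^2+\|\bs u\|^2)$ you quote is then correct. Second, and more substantively: your sub-claim that $P$ is maximized at $\bs u=0$ for each fixed $a$ is the right target, but the proposed ``Jensen/concavity-in-the-positive-tail'' argument on $\Phi$ is not evidently sufficient, since the $\Phi$-arguments can take either sign (so $\Phi$ is not concave over the relevant range) and both the conditional mean and the standard deviation shift simultaneously as $\bs u$ varies. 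The paper bypasses this with a different device: writing $C=\bigl(\alphas(P_iw)^\top x_0+\alphasC(P_iw)^\top\xi+b\bigr)^2+(w^{(i)})^2$, pairwise independence yields the exact second-moment identity $\E C=\|w\|^2+b^2=1$, and then $K=\tfrac{\alphasC}{2}\E\bigl[\sqrt C\cdot g(D,x_0^{(i)})\bigr]$ is bounded using $\E\sqrt C\le 1$ (Jensen on the concave square root) together with a sup over $D$. This second-moment constraint is exactly the ``beyond mean-zero'' consequence of \Cref{ass:data} you were looking for, and it replaces the delicate concavity bookkeeping on $\Phi$ that you correctly flagged as the main obstacle.
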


We will break the theorem by parts. First let's define the function $K$ with the following lemma.

\begin{lemma}\label{lem:growth}
    There exists a function $K:S^{d}\to \R$ such that
    \begin{align*}
        |\tilde{a}_j(s)| & = |\tilde{a}_j(0)| \exp\left(2\int_0^s K\left(\frac{\tilde{w}_{j}(\tau)}{|\tilde{a}_j|(\tau)},\frac{\tilde{b}_j(\tau)}{|\tilde{a}_j|(\tau)}\right) d\tau\right)
    \end{align*}
    when $\tilde{a}_j(0)>0$, and
    \begin{align*}
        |\tilde{a}_j(s)| & = |\tilde{a}_j(0)| \exp\left(-2\int_0^s K\left(\frac{\tilde{w}_{j}(\tau)}{|\tilde{a}_j|(\tau)},\frac{\tilde{b}_j(\tau)}{|\tilde{a}_j|(\tau)}\right) d\tau\right)
    \end{align*}
    when $\tilde{a}_j(0)<0$.
\end{lemma}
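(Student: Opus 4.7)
The approach is to exploit positive homogeneity of ReLU together with the balance identity $\tilde{a}_j^2 = \norm{\tilde{w}_j}^2 + \tilde{b}_j^2$ established in \Cref{lem:balance}, which guarantees that whenever $|\tilde{a}_j|>0$ the pair $(\tilde{w}_j/|\tilde{a}_j|,\tilde{b}_j/|\tilde{a}_j|)$ lies on the unit sphere $S^d$. Because $\sigma$ is positively homogeneous of degree one, one may write
\[
\sigma(\tilde{w}_j^\top x + \tilde{b}_j) \;=\; |\tilde{a}_j|\,\sigma\bigl((\tilde{w}_j/|\tilde{a}_j|)^\top x + \tilde{b}_j/|\tilde{a}_j|\bigr),
\]
which suggests the natural definition
\[
K(w,b) \;:=\; \E_{x}\!\bigl[y_i(x)\,\sigma(w^\top x + b)\bigr], \qquad (w,b)\in S^d.
\]
Substituting into the surrogate-loss ODE for $\tilde{a}_j$ (the first display preceding the lemma) gives the compact form $\tfrac{d}{ds}\tilde{a}_j = 2|\tilde{a}_j|\,K(\tilde{w}_j/|\tilde{a}_j|,\tilde{b}_j/|\tilde{a}_j|)$, which is the kernel of the whole argument.

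Granted this, the remainder is short. The first step is to observe that $\sgn(\tilde{a}_j)$ is constant along the trajectory (justified in the last paragraph), so that $\tfrac{d}{ds}|\tilde{a}_j| = \sgn(\tilde{a}_j(0))\cdot \tfrac{d}{ds}\tilde{a}_j = 2\,\sgn(\tilde{a}_j(0))\,|\tilde{a}_j|\,K(\cdot)$. Dividing by $|\tilde{a}_j|$ yields the clean logarithmic form
\[
\frac{d}{ds}\log|\tilde{a}_j(s)| \;=\; 2\,\sgn(\tilde{a}_j(0))\, K\!\left(\frac{\tilde{w}_j(s)}{|\tilde{a}_j(s)|},\,\frac{\tilde{b}_j(s)}{|\tilde{a}_j(s)|}\right).
\]
Integrating from $0$ to $s$ and exponentiating then recovers the two displayed identities of the lemma, with prefactor $+2$ when $\tilde{a}_j(0)>0$ and $-2$ when $\tilde{a}_j(0)<0$.

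The main obstacle is justifying that $|\tilde{a}_j|$ never vanishes on $[0,s]$, since this is what simultaneously makes the normalization $(\tilde{w}_j/|\tilde{a}_j|,\tilde{b}_j/|\tilde{a}_j|)$ well-defined, permits taking logarithms, and implies sign preservation. I would argue by contradiction: suppose $s^{\star}:=\inf\{\tau>0:\tilde{a}_j(\tau)=0\}<\infty$. By \Cref{lem:balance}, $\tilde{a}_j(s^{\star})=0$ forces $\tilde{w}_j(s^{\star})=0$ and $\tilde{b}_j(s^{\star})=0$, so the trajectory reaches the origin of parameter space. But the origin is manifestly a fixed point of the surrogate gradient flow (all three right-hand sides vanish there), and since the excerpt observes that $\Loss_t$ is smooth for $t>0$ (because $x_t$ has a smooth density), the surrogate $\tilde{\Loss}$ and hence its gradient vector field are smooth in $\tilde\theta$. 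Uniqueness of solutions to the resulting smooth ODE then forces the trajectory to coincide with the constant-zero solution on all of $[0,s^{\star}]$, contradicting $\tilde{a}_j(0)\neq 0$ (which holds with probability one under the stated initialization). Hence $\tilde{a}_j$ preserves its sign and stays bounded away from zero on any compact time interval, closing the argument.
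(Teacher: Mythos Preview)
Your proof is correct and follows essentially the same approach as the paper: define $K(w,b)=\E_x[y_i(x)\sigma(w^\top x+b)]$, use the positive homogeneity of ReLU to factor $|\tilde a_j|$ out of the surrogate ODE, and integrate. The paper's proof is terser and omits the sign-preservation and non-vanishing arguments you supply via \Cref{lem:balance} and ODE uniqueness; these are welcome additions that fill in details the paper leaves implicit.
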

\begin{proof}
    We know 
    \begin{align*}
         \frac{d}{ds}\tilde{a}_j &= 2\E_x y_i(x)\sigma(\tilde{w}_{j}^\top x + \tilde{b}_{j})\\
          &= 2|\tilde{a}_j|\E_x y_i(x)\sigma(\frac{\tilde{w}_{j}}{|\tilde{a}_j|}^\top x + \frac{\tilde{b}_{j}}{|\tilde{a}_j|})
    \end{align*}
    The proof is then done by taking $K(w,b)=\E_x y_i(x)\sigma(w^\top x + b)$. Notice that we only query $K$ when $\norm{w}^2 + b^2 = 1$.
\end{proof}

\begin{proof}[Proof of \Cref{thm:bias-app}]
Now we take a closer examination of the function $K$. Let $e_i$ be the unit vector along the $i$-th dimension and $P_i = I - e_i e_i^\top$ be the projection matrix removing the $i$-th dimension. Since the data $x$ is sampled through the process $x = \alphas x_0 + \alphasC \xi$ for $x_0\in \{\pm 1\}^d$ and $\xi\sim \mathcal{N}(0,I)$, we know
\begin{align*}
        K(w,b) &= \E_x y_i(x)\sigma(w^\top x + b)\\
        & = \E_{x_0,\xi} (\xi^{(i)} \sigma(\alphas w^\top  x_0 + \alphasC w^\top P_i \xi + \alphasC w^{(i)} \xi^{(i)} + b))\\
    \end{align*}
Define $A =\alphas w^\top  x_0 + \alphasC w^\top P_i \xi  + b $, $B = \alphasC w^{(i)} $. since both $A,B$ are independent to $\xi^{(i)}$ ,
\begin{align*}
        K(w,b)
        & = \E_{x_0,\xi} \xi^{(i)} \sigma(A + B \xi^{(i)})\\
        & = \frac{1}{2}\E_{A}( B + |B|\erf(\frac{A}{\sqrt{2}B})) 
\end{align*}
where we use the standard error function as $\erf(x) = \frac{2}{\sqrt{\pi}}\int_0^x e^{-s^2} ds\in [-1,1]$.
Furthermore, define
$C =(\alphas w^\top P_i x_0 + \alphasC w^\top P_i \xi  + b)^2 + (w^{(i)})^2$, $D = \frac{\alphas w^\top P_i x_0 + \alphasC w^\top P_i \xi  + b}{w^{(i)}} $, we know $B = \text{sgn}(B) \sqrt{(1-\bar{\alpha}_t)\frac{C}{1+D^2}}$, and
\begin{align*}
        K(w,b)
        & = \frac{\alphasC}{2}\left( \E\sgn(B)\sqrt{\frac{C}{1+D^2}} + \E_{x_0^{(i)}=1}\frac{1}{2}\sqrt{\frac{C}{1+D^2}}\erf\left(\frac{D}{\sqrt{2(1-\bar{\alpha}_t)}}+\sqrt{\frac{\bar{\alpha}_t}{2(1-\bar{\alpha}_t)}}\right)\right.\\ & \left. + \E_{x_0^{(i)}=-1}\frac{1}{2}\sqrt{\frac{C}{1+D^2}}\erf\left(\frac{D}{\sqrt{2(1-\bar{\alpha}_t)}}-\sqrt{\frac{\bar{\alpha}_t}{2(1-\bar{\alpha}_t)}}\right)\right)
\end{align*}
Observe that as $x_0$ and $\xi$ are independent with $\E x_0 = \E \xi = 0$, and $x_0$ have pairwise independent entries,
$$\E C|x_0^{(i)} = \E \bar{\alpha}_t (w^\top P_i x_0)^2 + (1-\bar{\alpha}_t ) (w^\top P_i \xi)^2 +(w^{(i)})^2+ b^2 =\norm{w}^2+b^2=1.$$
When $w^{(i)}>0$, since $\erf(x)\in [-1,1]$, we always have $K(w,b)>0$. In this case, by Jensen's inequality,
\begin{align*}
        K(w,b)
        & \leq \frac{\alphasC}{2}\sup_{D\in \R} \frac{1}{\sqrt{1+D^2}}\left(1 +\frac{1}{2} \erf\left(\frac{D}{\sqrt{2(1-\bar{\alpha}_t)}}+\sqrt{\frac{\bar{\alpha}_t}{2(1-\bar{\alpha}_t)}}\right)\right.\\
        & \left.+\frac{1}{2}\erf\left(\frac{D}{\sqrt{2(1-\bar{\alpha}_t)}}-\sqrt{\frac{\bar{\alpha}_t}{2(1-\bar{\alpha}_t)}}\right)\right)
\end{align*}
Symmetrically as $\erf$ is an odd function, when $w^{(i)}<0$, there is $K(w,b)<0$, and
\begin{align*}
        K(w,b)
        & \geq -\frac{\alphasC}{2}\sup_{D\in \R} \frac{1}{\sqrt{1+D^2}}\left(1 - \frac{1}{2}\erf\left(\frac{D}{\sqrt{2(1-\bar{\alpha}_t)}}+\sqrt{\frac{\bar{\alpha}_t}{2(1-\bar{\alpha}_t)}}\right)\right.\\
        & \left.-\frac{1}{2}\erf\left(\frac{D}{\sqrt{2(1-\bar{\alpha}_t)}}-\sqrt{\frac{\bar{\alpha}_t}{2(1-\bar{\alpha}_t)}}\right)\right).
\end{align*}
Since $\erf$ is odd, $K(w,b) = -K(-w,b)$, so we only need to consider the case where $w^{(i)}>0$. In this case, the maximum of $|K|$ is achieved when $\E C = (\E \sqrt{C})^2$ and $D=D^*>0$ that maximizes the above functions, namely when $w^\top P_i=0$ and $\frac{b}{|w^{(i)}|} = D^*$. By the first-order condition of optimality, let 
\begin{equation}\label{eq:optim}
    f(D) = \frac{1}{\sqrt{1+D^2}}\left(1 + \frac{1}{2}\erf\left(\frac{D}{\sqrt{2(1-\bar{\alpha}_t)}}+\sqrt{\frac{\bar{\alpha}_t}{2(1-\bar{\alpha}_t)}}\right)+\frac{1}{2}\erf\left(\frac{D}{\sqrt{2(1-\bar{\alpha}_t)}}-\sqrt{\frac{\bar{\alpha}_t}{2(1-\bar{\alpha}_t)}}\right)\right)
\end{equation}
then we know $\frac{d}{dD}f(D^*)=0$, namely 
\begin{align*}
    \frac{1+(D^*)^2}{D^*\sqrt{2\pi(1-\bar{\alpha}_t)}}\left(\exp\left[-\left(\frac{D^*}{\sqrt{2(1-\bar{\alpha}_t)}}+\sqrt{\frac{\bar{\alpha}_t}{2(1-\bar{\alpha}_t)}}\right)^2\right]+\exp\left[-\left(\frac{D^*}{\sqrt{2(1-\bar{\alpha}_t)}}-\sqrt{\frac{\bar{\alpha}_t}{2(1-\bar{\alpha}_t)}}\right)^2\right]\right)\\
    =\left(1 + \frac{1}{2}\erf\left(\frac{D^*}{\sqrt{2(1-\bar{\alpha}_t)}}+\sqrt{\frac{\bar{\alpha}_t}{2(1-\bar{\alpha}_t)}}\right)+\frac{1}{2}\erf\left(\frac{D^*}{\sqrt{2(1-\bar{\alpha}_t)}}-\sqrt{\frac{\bar{\alpha}_t}{2(1-\bar{\alpha}_t)}}\right)\right)
\end{align*}
\begin{lemma}
    $|K(w,b)|\leq \alphasC$.
\end{lemma}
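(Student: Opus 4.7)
The statement is essentially a one-sided corollary of the two chain-of-inequalities displayed immediately before it. Both of those already have the form $\pm K(w,b)\le\frac{\alphasC}{2}\sup_{D\in\R}\Big(\frac{1}{\sqrt{1+D^2}}\big(1\pm\tfrac12\erf(\cdot)\pm\tfrac12\erf(\cdot)\big)\Big)$, so the remaining work is to bound that supremum by $2$ and to handle the borderline case $w^{(i)}=0$ not covered by either of those bounds.

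The plan is to split into three cases according to the sign of $w^{(i)}$. When $w^{(i)}>0$, I would invoke the already-derived Jensen upper bound on $K(w,b)$ and apply the two elementary estimates $\frac{1}{\sqrt{1+D^2}}\le 1$ for every $D\in\R$ and $\erf(\cdot)\in[-1,1]$. These imply $1+\tfrac12\erf(\cdot)+\tfrac12\erf(\cdot)\le 2$ pointwise in $D$, so the supremum is at most $2$ and $K(w,b)\le\alphasC$. When $w^{(i)}<0$, the same two estimates applied to the symmetric lower bound give $K(w,b)\ge-\alphasC$; equivalently, since $\erf$ is odd one has the identity $K(w,b)=-K(-w,b)$ already noted in the text, which reduces this case to the previous one.

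The borderline case $w^{(i)}=0$ needs to be treated separately, because the preceding derivation introduces the quantity $D=(\alphas w^\top P_i x_0+\alphasC w^\top P_i\xi+b)/w^{(i)}$ which is singular when $w^{(i)}=0$. Here I would go back to the original formula $K(w,b)=\E_{x_0,\xi}\xi^{(i)}\sigma(w^\top x+b)$: when $w^{(i)}=0$, the argument of $\sigma$ does not involve $\xi^{(i)}$, and $\xi^{(i)}$ is independent of $x_0$ and of $\{\xi^{(k)}:k\neq i\}$ with $\E\xi^{(i)}=0$, so the expectation factorizes as $K(w,b)=(\E\xi^{(i)})\cdot\E\sigma(\cdot)=0$, which lies in $[-\alphasC,\alphasC]$ trivially.

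I do not anticipate any real obstacle; this is a triangle-inequality-style corollary of the explicit formula just derived for $K$. The only mildly delicate point is remembering to check the $w^{(i)}=0$ case, and to verify that the two crude estimates $\frac{1}{\sqrt{1+D^2}}\le 1$ and $|\erf|\le 1$ are uniform in $D$ so that they pass through the supremum without loss.
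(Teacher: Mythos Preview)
Your proposal is correct and follows essentially the same route as the paper: reduce by the oddness identity $K(w,b)=-K(-w,b)$ to the case $w^{(i)}>0$, invoke the Jensen upper bound already displayed, and cap the supremum using $\frac{1}{\sqrt{1+D^2}}\le 1$ and $|\erf|\le 1$ to obtain $K(w,b)\le\tfrac{\alphasC}{2}\cdot 2=\alphasC$. The paper's proof omits the borderline case $w^{(i)}=0$ (the surrounding Theorem~5.3 only asserts the bound for $w^{(i)}\neq 0$), so your explicit treatment of that case via the factorization $K(w,b)=\E[\xi^{(i)}]\cdot\E[\sigma(\cdot)]=0$ is a small but harmless addition.
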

\begin{proof}
    By symmetry, WLOG we consider the case where $w^{(i)}>0$. Then 
    \begin{align*}
        K(w,b)
        & \leq \frac{\alphasC}{2}\sup_{D\in \R} \frac{1}{\sqrt{1+D^2}}\left(1 +\frac{1}{2} \erf\left(\frac{D}{\sqrt{2(1-\bar{\alpha}_t)}}+\sqrt{\frac{\bar{\alpha}_t}{2(1-\bar{\alpha}_t)}}\right)\right.\\
        & \left.+\frac{1}{2}\erf\left(\frac{D}{\sqrt{2(1-\bar{\alpha}_t)}}-\sqrt{\frac{\bar{\alpha}_t}{2(1-\bar{\alpha}_t)}}\right)\right)\\
        &\leq  \frac{\alphasC}{2} (1+\frac{1}{2} + \frac{1}{2})\\
        & = \alphasC.
\end{align*}
\end{proof}
Next we examine the dynamics of a neuron along this optimal direction $\left(\frac{\tilde{w}_{j}(\tau)}{|\tilde{a}_j|(\tau)},\frac{\tilde{b}_j(\tau)}{|\tilde{a}_j|(\tau)}\right)=\frac{1}{\sqrt{1+(D^*)^2}} (e_i,D^*)$ with $w^\top P_i=0$ and $\frac{b}{w^{(i)}} = D^*$. By \Cref{thm:saddle} we know that the gradient of $\tilde{w}_j$ is also along the direction of $e_i$; furthermore, direct calculation plugging the above first-order condition, we arrive at
\begin{align*}
    \frac{\frac{d}{ds} b}{\frac{d}{ds} w^{(i)}} & = \frac{\E_{x_0,\xi} \xi \sigma'((\alphas x_0 + \alphasC \xi + D^*)w^{(i)})}{\E_{x_0,\xi} \xi (\alphas x_0 + \alphasC \xi)\sigma'((\alphas x_0 + \alphasC \xi + D^*)w^{(i)})} = D^*.
\end{align*}

This shows that a neuron along the direction $\left(\frac{\tilde{w}_{j}(\tau)}{|\tilde{a}_j|(\tau)},\frac{\tilde{b}_j(\tau)}{|\tilde{a}_j|(\tau)}\right)=(w,b) = \frac{1}{\sqrt{1+(D^*)^2}} (e_i,D^*)$ keeps the same direction during the course of the dynamics, thus the weight $\tilde{a}_j$ can maintain the maximum growing rate.

Finally we calculate the rate of change of the function $K$. Since for the vector $z(\tau)=(\tilde{w}_{j}(\tau),\tilde{b}_j(\tau))$,  $\frac{z(\tau)}{\norm{z(\tau)}}=(\frac{\tilde{w}_{j}(\tau)}{|\tilde{a}_j|(\tau)},\frac{\tilde{b}_j(\tau)}{|\tilde{a}_j|(\tau)})\in S^{d}$, we know
\begin{align*}\frac{d}{d\tau}\frac{z(\tau)}{\norm{z(\tau)}}& =(I-\frac{zz^\top}{z^\top z}(\tau))\frac{1}{\norm{z(\tau)}}\frac{d}{d\tau}z(\tau)\\
& = (I-\frac{zz^\top}{z^\top z})\frac{1}{|\tilde{a}_j|} 2\E_x y_i(x)\tilde{a}_j\sigma'(\tilde{w}_{j}^\top x + \tilde{b}_{j})(x,1).
\end{align*}
Meanwhile, as we calculate $\nabla K$ in the tangent space of $z/\norm{z}$ on $S^d$,
\begin{align*}\nabla K(\frac{z(\tau)}{\norm{z(\tau)}})& =(I-\frac{zz^\top}{z^\top z}) \E_x y_i(x) \sigma'((\frac{\tilde{w}_{j}}{|\tilde{a}_j|})^\top x + \frac{\tilde{b}_{j}}{|\tilde{a}_j|})(x,1)\\
& = (I-\frac{zz^\top}{z^\top z}) \E_x y_i(x) \sigma'(\tilde{w}_{j}^\top x + \tilde{b}_{j})(x,1)
\end{align*}
Therefore
\begin{align*}
    \tilde{a}_j(\tau)\frac{d}{d\tau} K\left(\frac{z(\tau)}{\norm{z(\tau)}}\right) & =   \tilde{a}_j(\tau)\nabla K\left(\frac{z(\tau)}{\norm{z(\tau)}}\right)\frac{d}{d\tau}\frac{z(\tau)}{\norm{z(\tau)}}\\
    & =2|\tilde{a}_j(\tau)| \left\Vert(I-\frac{zz^\top}{z^\top z}) \E_x y_i(x) \sigma'(\tilde{w}_{j}^\top x + \tilde{b}_{j})(x,1)\right\Vert^2 \geq 0.   
\end{align*}
\end{proof}
%\subsubsection{Local Analysis For the Maximally Growing Neuron}
%\begin{align}
%    y^*(x)= \frac{1}{Z} \sum_{x_i\in\mc{X}} \frac{\sqrt{\alpha}x_{i}^{(1)}-x^{(1)}}{\sqrt{1-\alpha}}\cdot \exp\left( -\frac{\|\sqrt{\alpha}x_i-x\|^2}{2(1-\alpha)} \right).
%\end{align}
\subsubsection{The LDR dynamics}

Here we provide a proof for the main theorem \Cref{thm:bias}. First we give a characterization of the LDR for models near the invariant set $M$.

\begin{lemma}\label{lem:LDRrat}
    If a network $s^{(i)}_\theta(x) = \sum_j a_j \sigma(w_j^T x + b_j)$ has for $k_1>0$, $k_2>k_0$, $1>k_4>k_3>0$,
    \begin{itemize}
        \item For all $j$, $a_j^2 = \norm{w_j}^2 + b_j^2$.
        \item For all $j$, either $|a_j|\leq k_0$ or $|a_j|\norm{w_j-w_j^{(i)}e_i}\leq k_1 a_jw_j^{(i)}$.
        \item There is $j,j'$ such that $a_j\geq k_2$, $a_{j'}\leq -k_2$, $\frac{b_j}{|w_j^{(i)}|},\frac{b_{j'}}{|w_{j'}^{(i)}|}\in[k_3,k_4]$.
    \end{itemize}
    Then there is function $P$ such that the LDR for the network can be bounded as
    $$LDR(\theta,\{i\})\geq P(k_1,k_3,k_4)\max(\frac{k_2^2-mk_0^2\sqrt{1+k_4^2}}{k_2^2(1+k_1)+mk_0^2\sqrt{1+k_4^2}},0)^2.$$
    Furthermore $P$ is continuous with $\lim_{k_1'to0}P(k_1,k_3,k_4) = 1$.
\end{lemma}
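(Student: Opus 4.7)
The plan is to bound $(J_i)^2/\|J\|^2$ pointwise on a high-probability event and then take expectations, where $J:=(\partial_{x} s^{(i)}_\theta(x))^\top\in\R^d$ is the Jacobian row. Set $T_j:=a_j\sigma'(w_j^\top x+b_j)$ and split the neurons into ``big'' indices $S_b=\{j:|a_j|>k_0\}$ and ``small'' indices $S_s=\{j:|a_j|\le k_0\}$. For $j\in S_b$, hypothesis (2) forces $a_j w_j^{(i)}\ge 0$ and $\|u_j\|\le k_1|w_j^{(i)}|$ where $u_j:=w_j-w_j^{(i)}e_i$; consequently every active summand $T_j w_j^{(i)}$ is nonnegative, which gives
\begin{align*}
J^b_i:=\sum_{j\in S_b} T_j w_j^{(i)}\ge 0,\qquad \|J^b_\perp\|\le\sum_{j\in S_b}|T_j|\|u_j\|\le k_1 J^b_i.
\end{align*}
For $j\in S_s$, the balancedness (1) yields $\|w_j\|\le|a_j|\le k_0$ and $|T_j|\le k_0$, so $\|J^s\|\le mk_0^2$ and both its $e_i$-component and perpendicular component are bounded by $mk_0^2$.

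Next I define the ``good'' event $\mathcal{G}$ that at least one of the two distinguished neurons $j,j'$ from (3) fires. On $\mathcal{G}$, the bound $|b_j|\le k_4|w_j^{(i)}|$ combined with (1) and $\|u_j\|\le k_1|w_j^{(i)}|$ gives $a_j^2\le(1+k_4^2+k_1^2)(w_j^{(i)})^2$, so (absorbing the small $k_1^2$ cross-term into the factor stated in the lemma) $|w_j^{(i)}|\ge|a_j|/\sqrt{1+k_4^2}$; hence the active distinguished neuron alone contributes at least $k_2^2/\sqrt{1+k_4^2}$ to the nonnegative sum, so $J^b_i\ge k_2^2/\sqrt{1+k_4^2}$. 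Combining $J_i\ge J^b_i-mk_0^2$ with $\|J\|\le\|J^b\|+\|J^s\|\le(1+k_1)J^b_i+mk_0^2$ gives
\begin{align*}
\frac{|J_i|}{\|J\|}\ge\frac{J^b_i-mk_0^2}{(1+k_1)J^b_i+mk_0^2}.
\end{align*}
A direct derivative check shows this fraction is monotonically increasing in $J^b_i\ge mk_0^2$, so minimising at $J^b_i=k_2^2/\sqrt{1+k_4^2}$ gives exactly the ratio $\frac{k_2^2-mk_0^2\sqrt{1+k_4^2}}{k_2^2(1+k_1)+mk_0^2\sqrt{1+k_4^2}}$ appearing in the target bound.

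It remains to lower-bound $\Pr(\mathcal{G})$ by some function $P(k_1,k_3,k_4)$. Writing the activation of $j$ ($a_j>0$, $w_j^{(i)}>0$) as $x^{(i)}>-b_j/w_j^{(i)}+u_j^\top x^\perp/w_j^{(i)}$ and the analogous inequality with opposite sign for $j'$ ($a_{j'}<0$, $w_{j'}^{(i)}<0$), the complementary event $\bar{\mathcal{G}}$ (both inactive) requires the interval between the two thresholds to be non-degenerate, which using $b_j/w_j^{(i)},|b_{j'}/w_{j'}^{(i)}|\ge k_3$ and $\|u_j\|\le k_1|w_j^{(i)}|$ forces $k_1\|x^\perp\|\ge k_3$. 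Therefore $\Pr(\mathcal{G})\ge 1-\Pr_{x\sim p_t}[\|x^\perp\|\ge k_3/k_1]$, which is a Gaussian tail bound since $p_t$ is a convolution of the bounded distribution $\alphas p_0$ with $\mathcal{N}(0,(1-\bar\alpha_t)I)$. Taking $P(k_1,k_3,k_4)$ to equal this lower bound, continuity is inherited from the Gaussian CDF and as $k_1\to 0$ the threshold $k_3/k_1$ diverges, forcing $P\to 1$. Assembling, since $(J_i/\|J\|)^2\ge 0$ always and satisfies the squared ratio on $\mathcal{G}$, taking expectations yields the claimed $LDR(\theta,\{i\})\ge P(k_1,k_3,k_4)\cdot\max(\cdot,0)^2$, where the $\max$ handles the degenerate regime in which the distinguished-neuron signal is dominated by the small-neuron error. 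The main obstacle is the quantitative control in the third step: producing a lower bound on $\Pr(\mathcal{G})$ that depends only on $(k_1,k_3,k_4)$ and not on $d$, $t$, or the particular weights, which requires a uniform Gaussian tail estimate for the forward process; a minor secondary issue is the slight looseness in the second step where the $k_1^2$ cross-term is absorbed into the $\sqrt{1+k_4^2}$ factor written in the lemma.
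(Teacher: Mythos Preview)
Your proof is correct and follows essentially the same approach as the paper: split the neurons into ``big'' and ``small'' by the threshold $k_0$, bound the Jacobian ratio pointwise on the event that one of the two distinguished neurons fires, and then lower-bound the probability of that event. The pointwise bounds you derive match the paper's line by line, including the same harmless slack where $\sqrt{1+k_4^2}$ is written in place of $\sqrt{1+k_4^2+k_1^2}$.

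The one place where you diverge from the paper is in the treatment of $P$. The paper simply \emph{defines} $P(k_1,k_3,k_4)$ as the infimum of $\Pr(w^\top x+b>0\ \text{or}\ (w')^\top x+b'>0)$ over all admissible $(w,b,w',b')$ and argues the limit $P\to 1$ by observing that the union of the two half-spaces converges to all of $\R^d$ as $k_1\to 0$. You instead give an explicit tail bound by showing $\bar{\mathcal G}\subset\{\|x^\perp\|\ge k_3/k_1\}$, which is cleaner and more quantitative. Your flagged ``main obstacle'' is not actually one: the lemma does not require $P$ to be uniform in $d$ or $t$ (the paper's $P$ also depends on both through $p_t$), so your bound $P=1-\Pr_{p_t}(\|x^\perp\|\ge k_3/k_1)$ for fixed $d,t$ already does the job, and the limit as $k_1\to 0$ follows immediately.
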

Namely, a network will have high LDR when
\begin{itemize}
        \item any neuron $j$ either has small weight $|a_j|$ (thus does not contribute much to the network output) or its weight $w_i$ align with $e_i$ well;
        \item for the majority of data, at least one neuron with large weight is activated.
    \end{itemize}
Specifically, the first condition states that the network parameter is close to some parameters in the invariant set $M$, and the second ensures that such a closeness is on a function level, so that the closeness in the parameter space can be converted into the closeness in the function space, and eventually into the closeness in the LDR measure which only depends on the functions.
\begin{proof}
    We calculate 
    $$\frac{\partial}{\partial x} s_\theta(x) = \sum_j a_j \sigma'(w_j^\top x + b_j)w_j.$$
    Therefore by the definition of LDR, we have for the numerator,
    \begin{align*}
        \norm{e_i^\top\frac{\partial}{\partial x} s_\theta(x)}^2 & = (\sum_j a_j \sigma'(w_j^\top x + b_j)w_j^{(i)})^2\\
        & \geq \max(\sum_{|a_j|\geq k_0} a_j \sigma'(w_j^\top x + b_j)w_j^{(i)} - m k_0^2,0)^2,
    \end{align*}
    and for the denominator,
     \begin{align*}
        \norm{\frac{\partial}{\partial x} s_\theta(x)}^2 & = \norm{\sum_j a_j \sigma'(w_j^\top x + b_j)w_j}^2\\
        & \leq [\norm{\sum_{|a_j|\geq k_0} a_j \sigma'(w_j^\top x + b_j)w_j} + m k_0^2]^2\\
        & \leq [(\sum_{|a_j|\geq k_0} a_j \sigma'(w_j^\top x + b_j)w_j^{(i)})(1+k_1) + m k_0^2]^2\\
    \end{align*}
    Furthermore there is 
    $$\sum_{|a_j|\geq k_0} a_j \sigma'(w_j^\top x + b_j)w_j^{(i)}\geq \frac{k_2^2}{\sqrt{1+k_4^2}}[\sigma'(w_j^\top x + b_j) + \sigma'(w_{j'}^\top x + b_{j'}) ]$$

    Therefore by definition we have 
    $$LDR(\theta,\{i\})\geq \Pr(w_j^\top x + b_j>0\vee w_{j'}^\top x + b_{j'}>0)\max(\frac{k_2^2
    -mk_0^2\sqrt{1+k_4^2}}{k_2^2(1+k_1)+mk_0^2\sqrt{1+k_4^2}},0)^2.$$
    Let the function
    \begin{align*}
        P(k_1,k_3,k_4) & = \inf\{\Pr(w^\top x + b>0\vee (w')^\top x + (b')>0): \\
        &\norm{w}=\norm{w'}=1,\norm{w-w^{(i)}e_i}\leq k_1 w^{(i)}, \norm{w'-(w')^{(i)}e_i}\leq - k_1 (w')^{(i)},\\
        &b,b'\in [k_3,k_4]\}
    \end{align*}
    Then we know $\Pr(w_j^\top x + b_j>0\vee w_{j'}^\top x + b_{j'}>0)\geq P(k_1,k_3,k_4)$. 
    
    Finally, as $k_1\to 0$,  there is $\norm{w-w^{(i)}e_i},\norm{w'-(w')^{(i)}e_i}\to 0$. Since $b,b',w^{(i)}>0$ and $(w')^{(i)}<0$, so the set
    $$\{w^\top x + b>0\vee (w')^\top x + (b')>0\}\to \{w^{(i)} x^{(i)} + b>0\vee (w')^{(i)} x^{(i)} + (b')>0\}=\R^d.$$
    By the continuity of the probability measure for $x$, we know the function $P$ has limit $\Pr(x\in \R^d) = 1$.
\end{proof}

\begin{proof}[Proof of \Cref{thm:bias}]
    Since $LDR(\theta,\mc{R}_1\cup \mc{R}_2)\geq LDR(\theta,\mc{R}_1)+LDR(\theta,\mc{R}_2)$, WLOG we consider the case that $\mc{R}=\{i\}$ to be of size 1.

    For any $c>0$, let $K_0 = \max(\sup\{K(w,b):\norm{w-w^{(i)}e_i}>\frac{c}{8}w^{(i)}\},\frac{\sqrt{1-\bar{\alpha}_t}}{2})$. From the discussion of the function $K$ from \Cref{thm:bias-app}, we know $0<K_0<\sqrt{1-\bar{\alpha}_t}$. Since the function $K$ has a unique maximum at $\frac{1}{\sqrt{1+(D^*)^2}} (e_i,D^*)$, we can choose a real number $\delta\in (0,\sqrt{1-\tilde{\alpha}_t}-K_0)$ and a neighborhood $O_\epsilon=\{(w,b):\norm{w-w^{(i)}e_i}<\epsilon w^{(i)}, |\frac{b}{w^{(i)}}-D^*|<\epsilon\}$ such that for all $(w,b)\not\in O_\epsilon$, $K(w,b)\leq \sqrt{1-\bar{\alpha}_t}-\delta$.
    Pick $k_3 = D^*-\epsilon$, $k_4 = D^*+\epsilon$, and $k_1<\frac{c}{8}$ such that $P(k_1,k_3,k_4)>1-\frac{c}{2}$ in \Cref{lem:LDRrat}.

    Now pick $M_c$ so that with high probability, at initialization there are neurons $j$ ,$j'$ such that $a_j>0$, $K(\frac{w_j}{a_j},\frac{b_j}{a_j})>\sqrt{1-\tilde{\alpha}_t}-\delta$ and $a_{j'}<0$, $K(\frac{w_{j'}}{|a_{j'}|},\frac{b_{j'}}{|a_{j'}|})<-\sqrt{1-\tilde{\alpha}_t}+\delta$. Since the neuron parameters follow i.i.d. initial distributions, this is always possible with enough network width.

    From \Cref{thm:bias-app}, we have the following facts:
    \begin{itemize}
        \item $\tilde{a}_j(\tau)\geq \tilde{a}_j(0)e^{2\tau(\sqrt{1-\bar{\alpha}_t}-\delta)}$; $\tilde{a}_{j'}(\tau)\leq \tilde{a}_{j'}(0)e^{2\tau(\sqrt{1-\bar{\alpha}_t}-\delta)}$.
        \item For any neuron $k$, if $\sgn(\tilde{a}_k(\tau)) K(\frac{\tilde{w}_{k}(\tau)}{|\tilde{a}_{k}|(\tau)},\frac{\tilde{b}_{k}(\tau)}{|\tilde{a}_{k}|(\tau)})>K_0$, from the definition of $K_0$ there must be $\norm{\tilde{w}_k(\tau)-\tilde{w}^{(i)}_k(\tau)e_i}\leq\frac{c}{8}|\tilde{w}^{(i)}_k(\tau)|$; otherwise for all $0\leq s\leq \tau$ there is $\sgn(\tilde{a}_k(s)) K(\frac{\tilde{w}_{k}(s)}{|\tilde{a}_{k}|(s)},\frac{\tilde{b}_{k}(s)}{|\tilde{a}_{k}|(s)})\leq\sgn(\tilde{a}_k(\tau)) K(\frac{\tilde{w}_{k}(\tau)}{|\tilde{a}_{k}|(\tau)},\frac{\tilde{b}_{k}(\tau)}{|\tilde{a}_{k}|(\tau)})\leq K_0$, so we will have
        $$|\tilde{a}_j(\tau)|\leq |\tilde{a}_j(0)|e^{2\tau K_0}.$$
    \end{itemize}

    Now pick $\tau_c$ so that $\tau_c>\frac{1}{2(\sqrt{1-\bar{\alpha}_t}-\delta-K_0)}\ln\frac{64m\sup_k |\tilde{a}_k(0)|}{\min(|a_j(0)|,|a_{j'}(0)|)(4c+c^2)\sqrt{1+k_4^2}}$, then we can apply \Cref{lem:LDRrat} with $k_0 = \sup_k |\tilde{a}_k(0)|e^{2\tau K_0}$ and $k_2 = \min(|a_j(0)|,|a_{j'}(0)|)e^{2\tau (\sqrt{1-\bar{\alpha}_t}-\delta)}$, then the LDR score for $\tau>\tau_c$ is
    $$LDR(\theta,\{i\})\geq (1-\frac{c}{2})(\frac{1-\frac{c}{16}-\frac{c^2}{64}}{1+\frac{3c}{16}+\frac{c^2}{64}})^2\geq (1-\frac{c}{2})(1-\frac{c}{4})^2>1-c.$$
\end{proof}

\subsection{Experimental Details}\label{sec:synthetic}
The details of experiment formulation is as below. Recall that a text distribution includes a set of discrete symbols $\mc{S}=\{s_1, s_2,\hdots, s_K \}$ and a spelling/grammar rule $P_G$. A list of symbol tokens are further rendered into ambient space by a function $h: \mc{S} \mapsto \mathbb{R}^d$ which maps each symbol to a vector in ambient space like image pixels or a single scalar. The full signal is obtained by concatenating these vectors. We describe $\mc{S}$ and $P_G$ we used in experiments.

\textbf{Parity:} There are only two symbols $\mc{S}=\{1, -1\}$. The rule is that there needs to be even number of symbol $s_1$. Namely $P_G(\mc{I})=\frac{1}{2^{L-1}}\cdot \mathbb{I}\left[ \prod_{j=1}^L {s_{i_j}} = 1\right]$. The ambient space rendering function can either by a single scalar $h(s_{i})=s_i$. It can also be two pixel image or embedding vector templates in ambient observation space $\{\bs{o}_{-1}, \bs{o}_1\}$ and $h(s_i)=\bs{o}_{s_i}$.

\textbf{Quarter-MNIST:} We combine four MNIST digits' image to become a whole figure. the symbol system is all digits $\mc{S}=[9]$. We fix the length $L=4$ and requires that $s_1+s_2=s_3+s_4$, and we have $P_G(\mc{I})=\frac{1}{Z} \cdot \mathbb{I}\left[ s_1+s_2=s_3+s_4 \right]$, $Z=670$ is some normalization constant. The ambient space rendering function is a probabilistic image drawing function $h: \{0,\hdots,9\} \mapsto \mathbb{R}^d$ which maps each digit to its hand-writing image.

\textbf{Dyck:} We also test dyck grammar, where $\mc{S}=\{+1, -1\}$. A dyck sequence must have even number of tokens $s_1,\hdots,s_{2k}$, and satisfy $\sum_{j=1}^i s_i \geq 0$ for $i\in [2k]$. Also it requires $\sum_{j=1}^{2k} s_i =0$. This can be regarded as a valid operation sequence for a stack where $+1$ means push and $-1$ means pop. The requirement essentially means the stack cannot pop if it is empty, also it needs to be empty at start and end. The rendering function is similar as parity. In our experiment we use left and right parenthesis to represent $+1$ and $-1$, respectively.

As for denoising networks, we use attention-augmented UNet, where each block is equipped with linear attention and middle bottleneck equipped with full attention. The image size is $64$ and the initial hidden width is $64$, which means the bottleneck dimension is $512$. We also adopted standard DiT-B model with hidden size $384$ and patch size $8$. We train with Adam optimizer with $lr=8\times 10^{-5}$, batch size $bs=16$, total schedule ranging from 160k to 700k iterations.

\textbf{Training Schedule and Model Details}
We use mainly two types of model for training in our experiments, namely DiT and UNet augmented with attention. The DiT is standard DiT-S model, with $33M$ parameter. The UNet initial channel is 64, and total parameter is $\sim 35.7M$. We training the score matching objective with equal $\lambda_t$. The training batch size is $16$, 180k iteration for Quater-MNIST and $1.1 M$ iteration for parity parathensis images.

\subsection{The Results on Recent Models}
We also conduct experiment on most recent models such as StableDiffusion 3.5-medium \citep{esser2024scaling} and FLUX1-dev. We use prompt that requires the model to generate rich text content without specifying concrete words. For instance, ``A piece of calligraphy art", ``A newspaper reporting news", ``A blackboard with formulas". And here are the test results, we can see that text hallucination is still ubiquitous. The seeds of six images of each model under same prompt are from 0 to 2, so all people can reproduce these results.
 
We also tried prompt to specify the content, i.e. ``A paper saying 'To be or not to be, it is a question'.'' The results are plotted below. We can see both SD3.5 and FLUX1 still have text issues. SD3.5 has missing words or incorrect spelling more often.

\begin{figure}[htbp]
    \centering
    \vspace{-10 pt}
    \begin{minipage}[b]{0.3\textwidth}
        \centering
        \includegraphics[width=\textwidth]{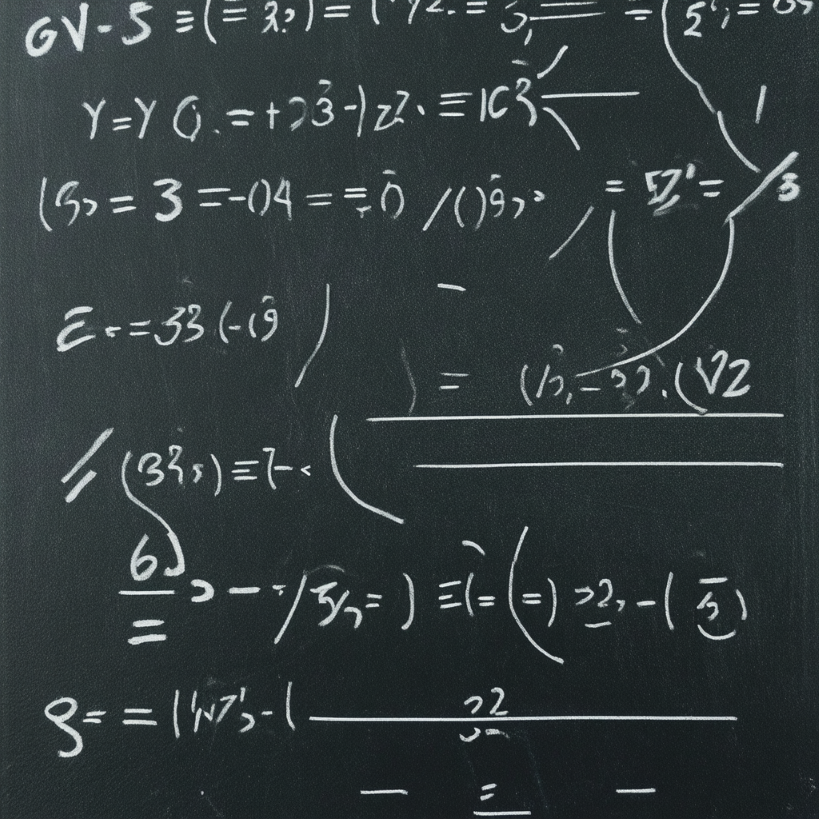} % Replace with your image
    \end{minipage}
    \hspace{0.03\textwidth} % Add horizontal space between the figures
    \begin{minipage}[b]{0.3\textwidth}
        \centering
        \includegraphics[width=\textwidth]{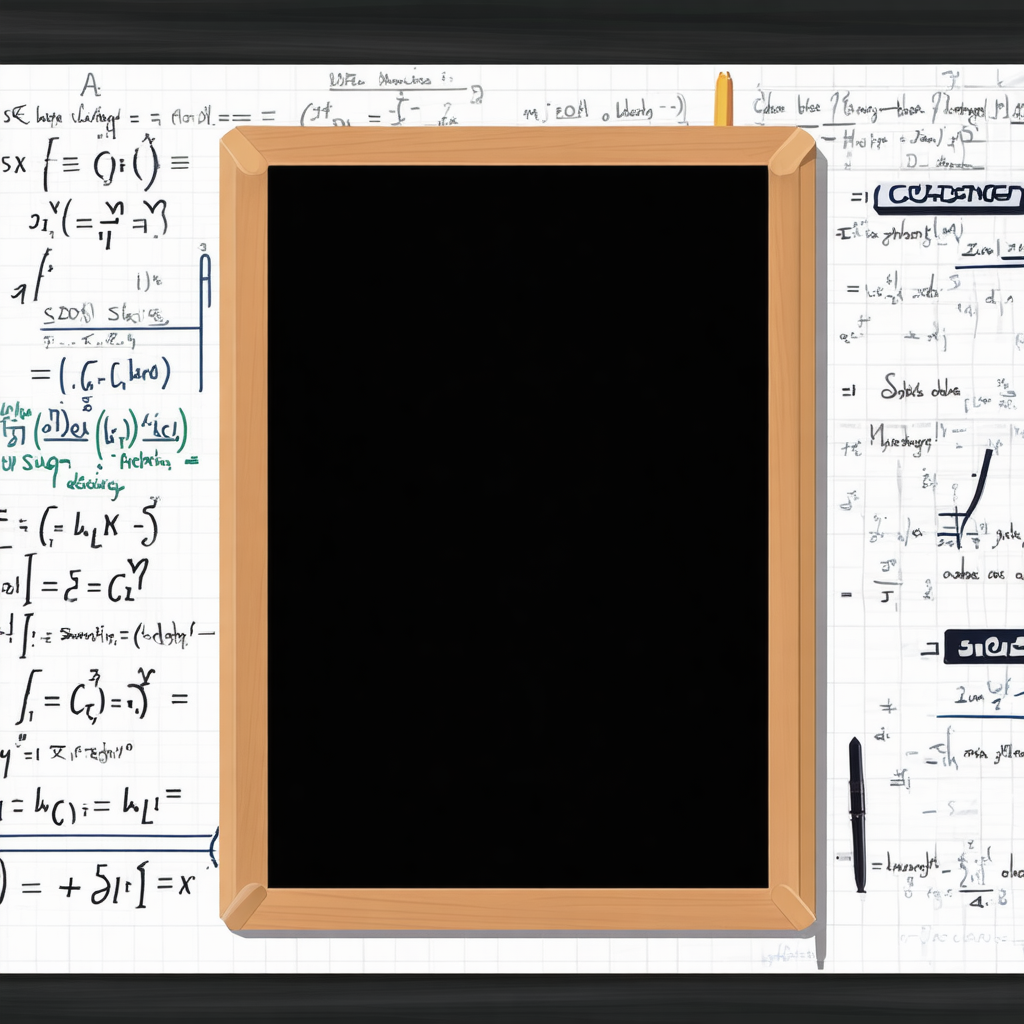} % Replace with your image
    \end{minipage}
    \hspace{0.03\textwidth} % Add horizontal space between the figures
    \begin{minipage}[b]{0.3\textwidth}
        \centering
        \includegraphics[width=\textwidth]{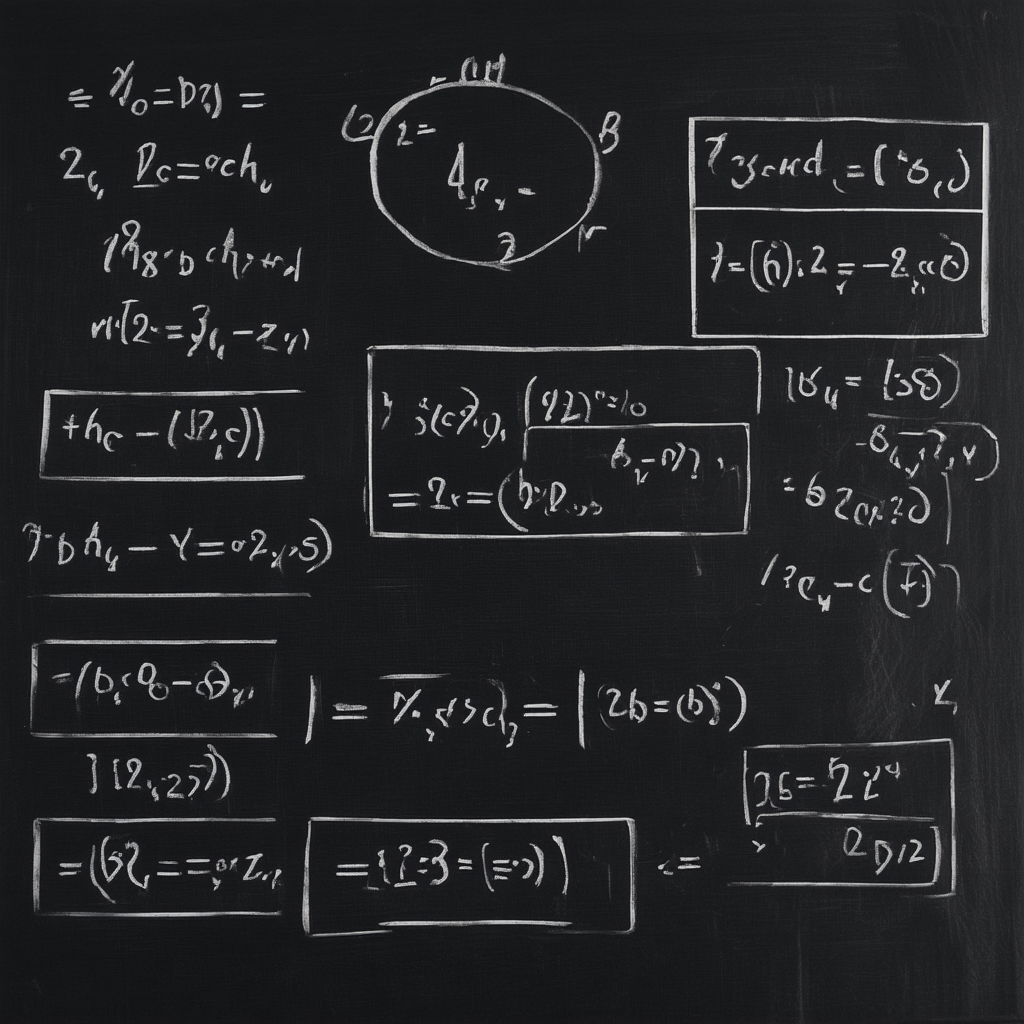} % Replace with your image
    \end{minipage}    
    \caption{Visualizations of StableDiffusion 3.5's results on prompt ``A blackboard with formulas"}
    \label{fig:SD_blackboard}
\end{figure}

\begin{figure}[htbp]
    \centering
    \vspace{-10 pt}
    \begin{minipage}[b]{0.3\textwidth}
        \centering
        \includegraphics[width=\textwidth]{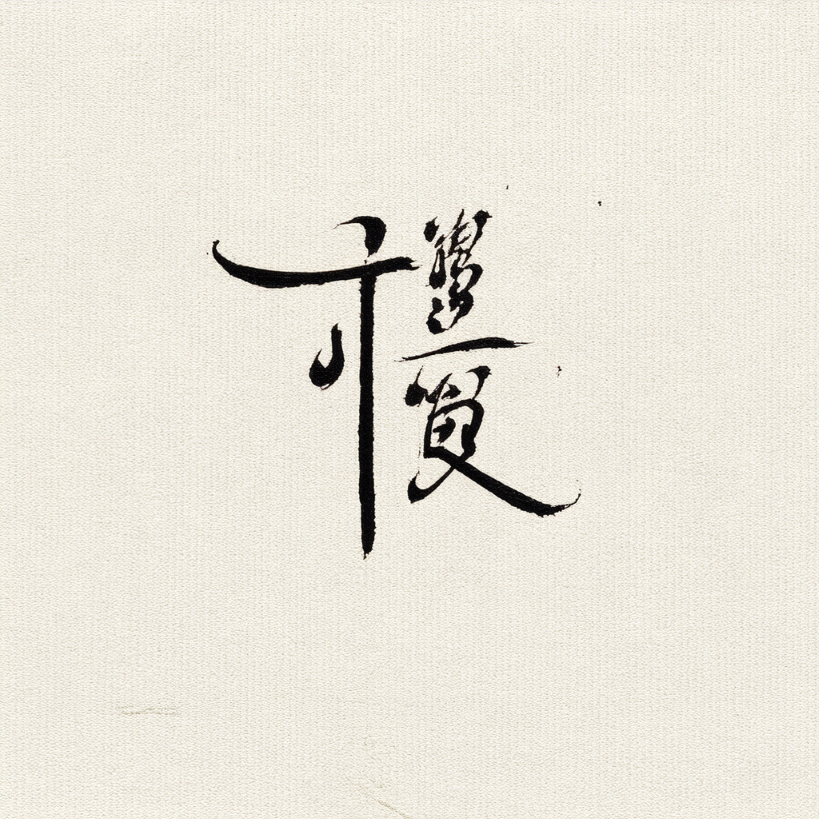} % Replace with your image
    \end{minipage}
    \hspace{0.03\textwidth} % Add horizontal space between the figures
    \begin{minipage}[b]{0.3\textwidth}
        \centering
        \includegraphics[width=\textwidth]{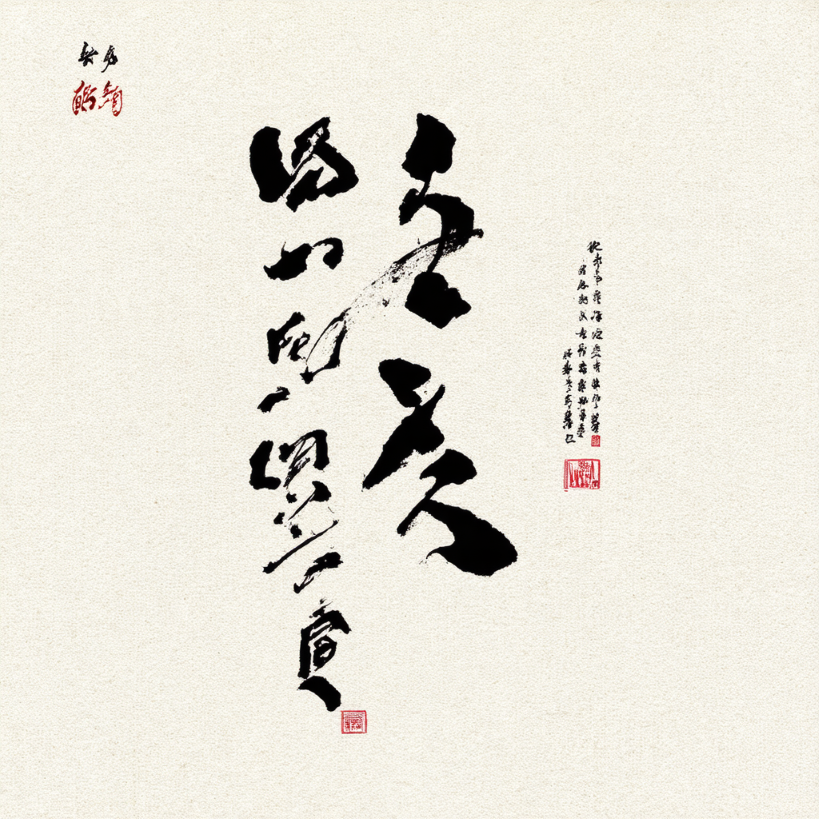} % Replace with your image
    \end{minipage}
    \hspace{0.03\textwidth} % Add horizontal space between the figures
    \begin{minipage}[b]{0.3\textwidth}
        \centering
        \includegraphics[width=\textwidth]{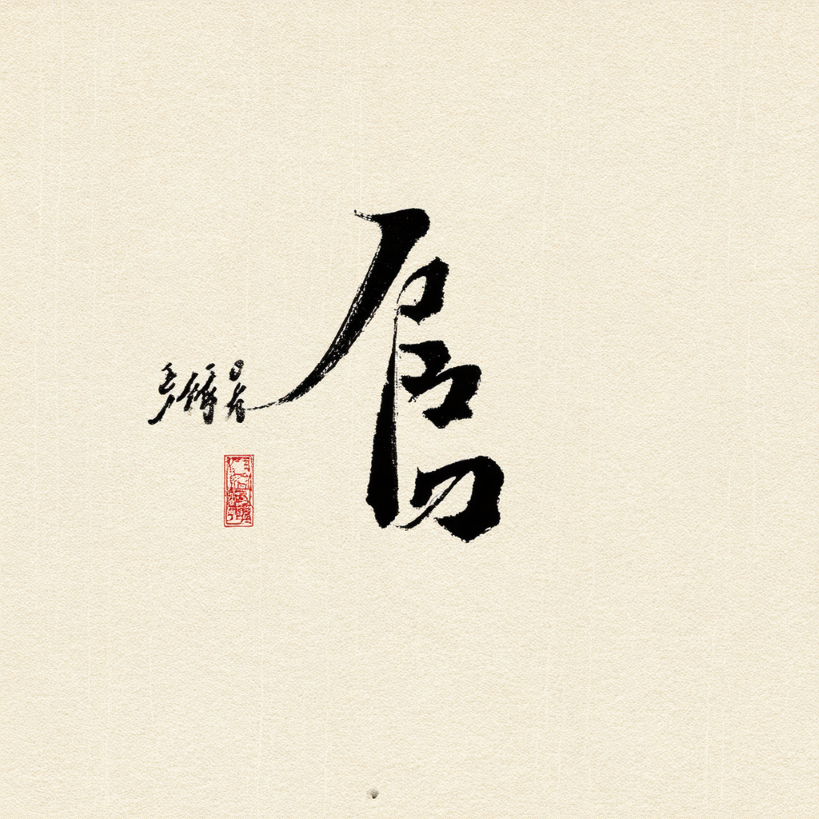} % Replace with your image
    \end{minipage}    
    \caption{Visualizations of StableDiffusion 3.5's results on prompt ``A piece of calligraphy art."}
    \label{fig:SD_calli}
\end{figure}

\begin{figure}[htbp]
    \centering
    \vspace{-10 pt}
    \begin{minipage}[b]{0.3\textwidth}
        \centering
        \includegraphics[width=\textwidth]{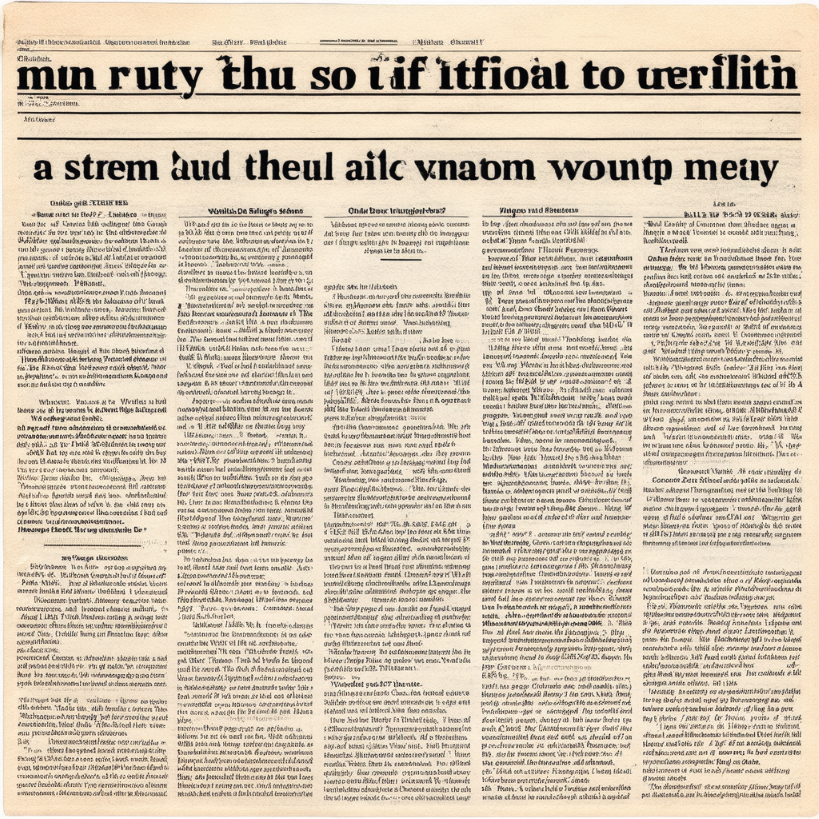} % Replace with your image
    \end{minipage}
    \hspace{0.03\textwidth} % Add horizontal space between the figures
    \begin{minipage}[b]{0.3\textwidth}
        \centering
        \includegraphics[width=\textwidth]{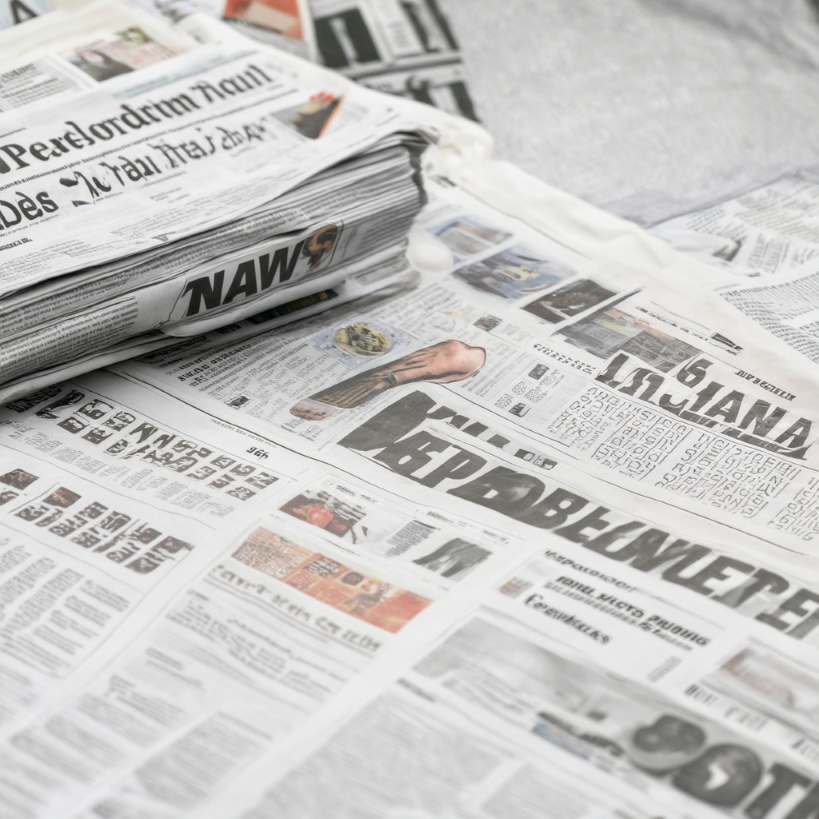} % Replace with your image
    \end{minipage}
    \hspace{0.03\textwidth} % Add horizontal space between the figures
    \begin{minipage}[b]{0.3\textwidth}
        \centering
        \includegraphics[width=\textwidth]{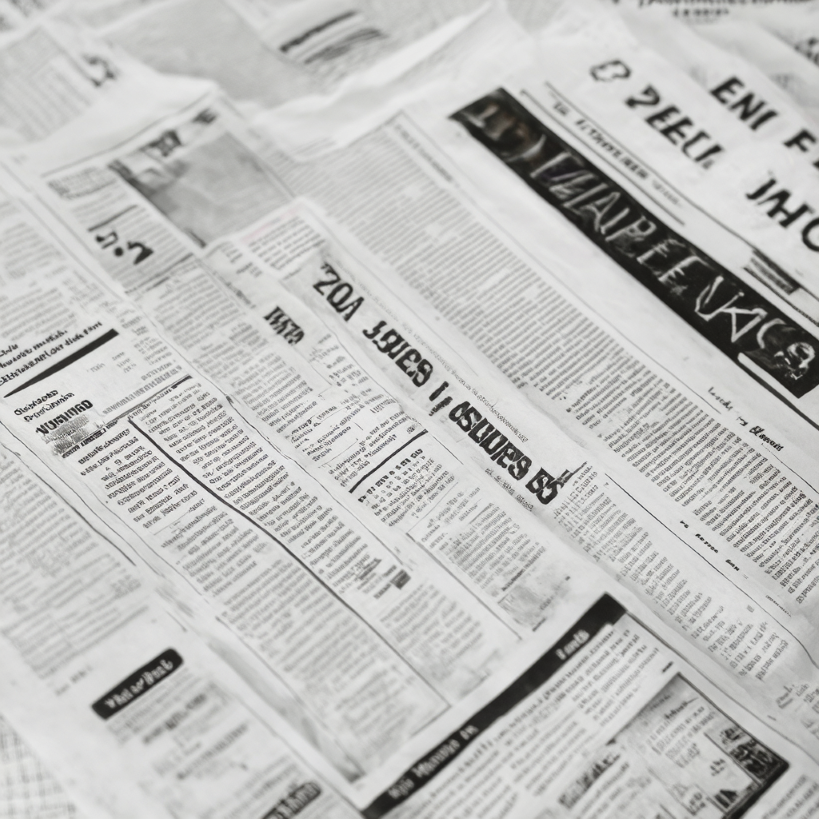} % Replace with your image
    \end{minipage}    
    \caption{Visualizations of StableDiffusion 3.5's results on prompt ``A newspaper reporting news."}
    \label{fig:SD_blackboard}
\end{figure}

\begin{figure}[htbp]
    \centering
    \vspace{-10 pt}
    \begin{minipage}[b]{0.3\textwidth}
        \centering
        \includegraphics[width=\textwidth]{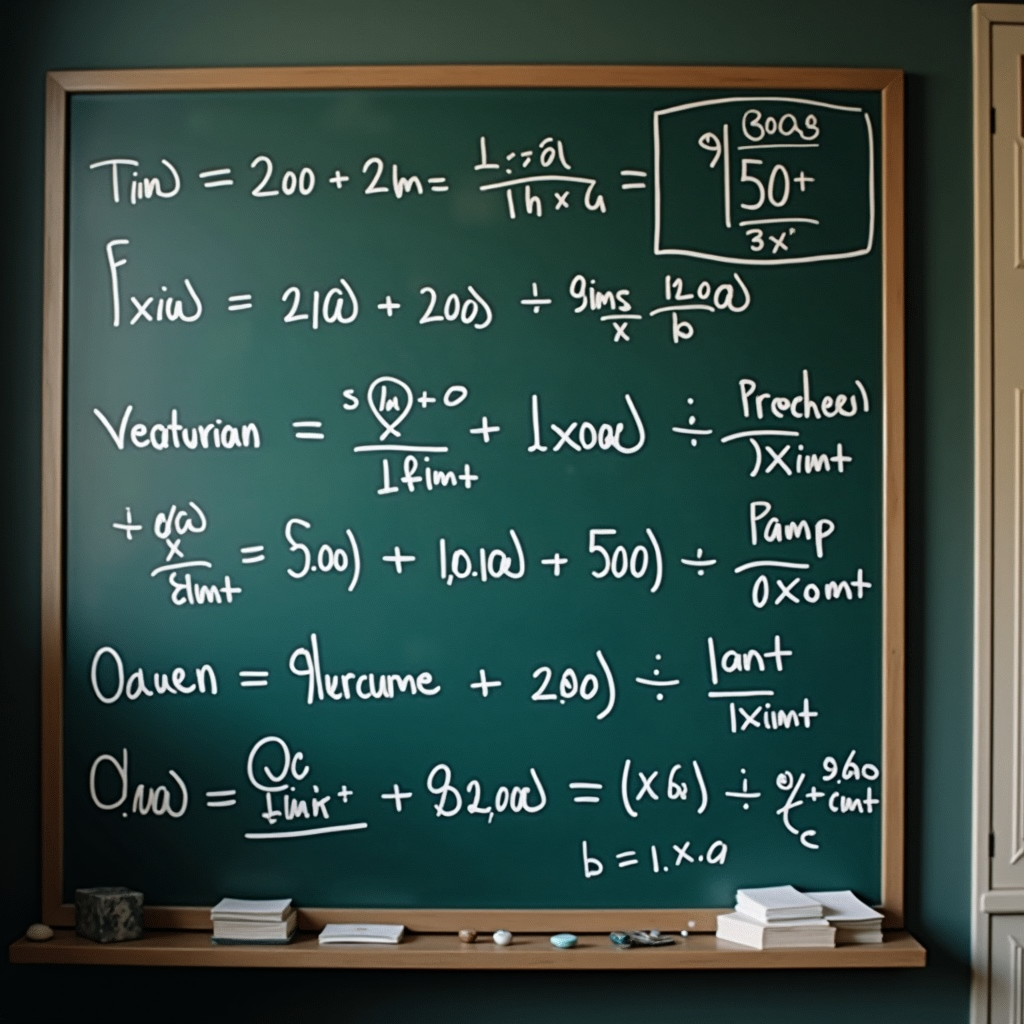} % Replace with your image
    \end{minipage}
    \hspace{0.03\textwidth} % Add horizontal space between the figures
    \begin{minipage}[b]{0.3\textwidth}
        \centering
        \includegraphics[width=\textwidth]{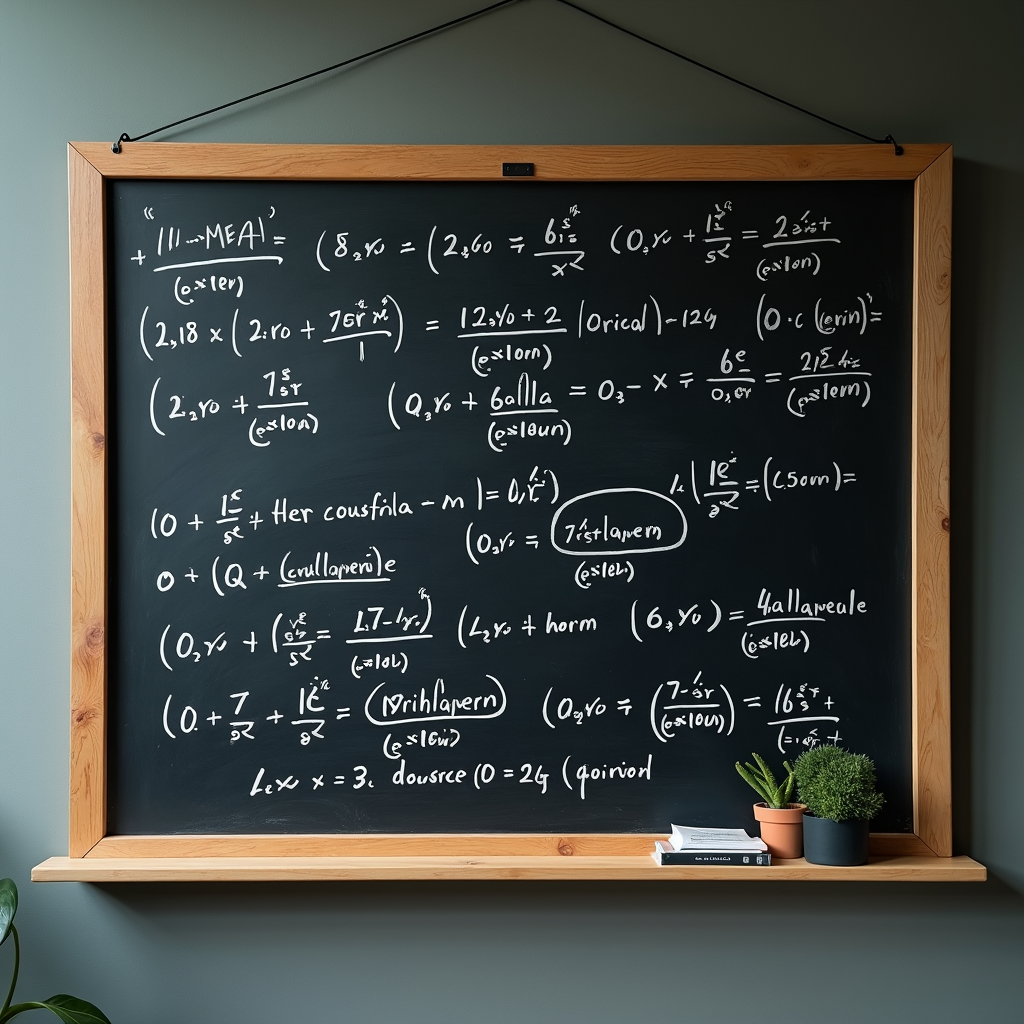} % Replace with your image
    \end{minipage}
    \hspace{0.03\textwidth} % Add horizontal space between the figures
    \begin{minipage}[b]{0.3\textwidth}
        \centering
        \includegraphics[width=\textwidth]{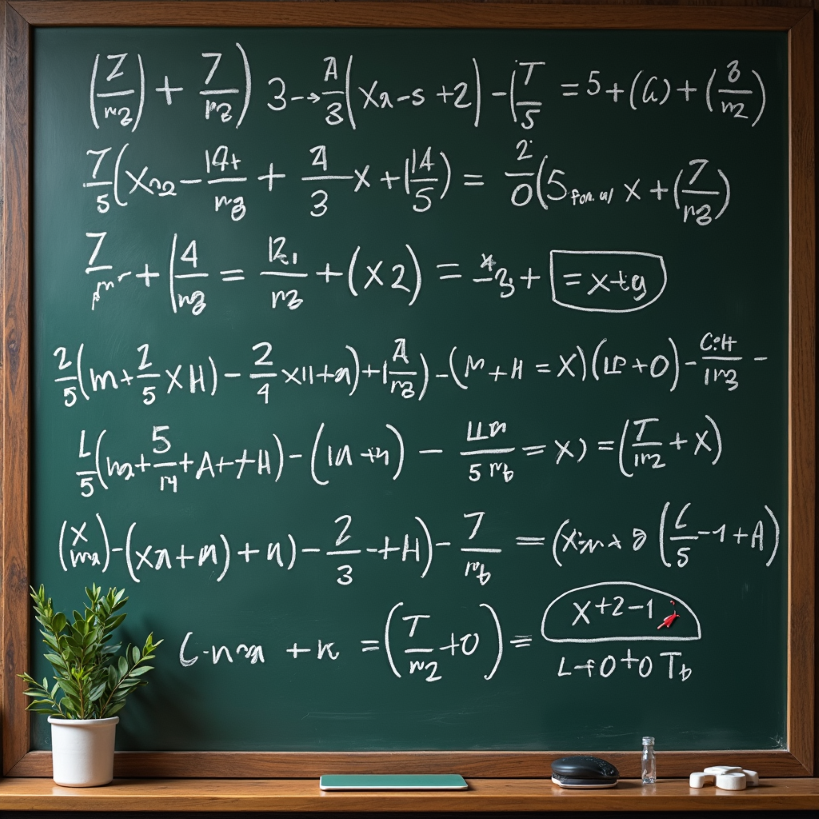} % Replace with your image
    \end{minipage}    
    \caption{Visualizations of FLUX1's results on prompt ``A blackboard with formulas"}
    \label{fig:SD_blackboard}
\end{figure}

\begin{figure}[htbp]
    \centering
    \vspace{-10 pt}
    \begin{minipage}[b]{0.3\textwidth}
        \centering
        \includegraphics[width=\textwidth]{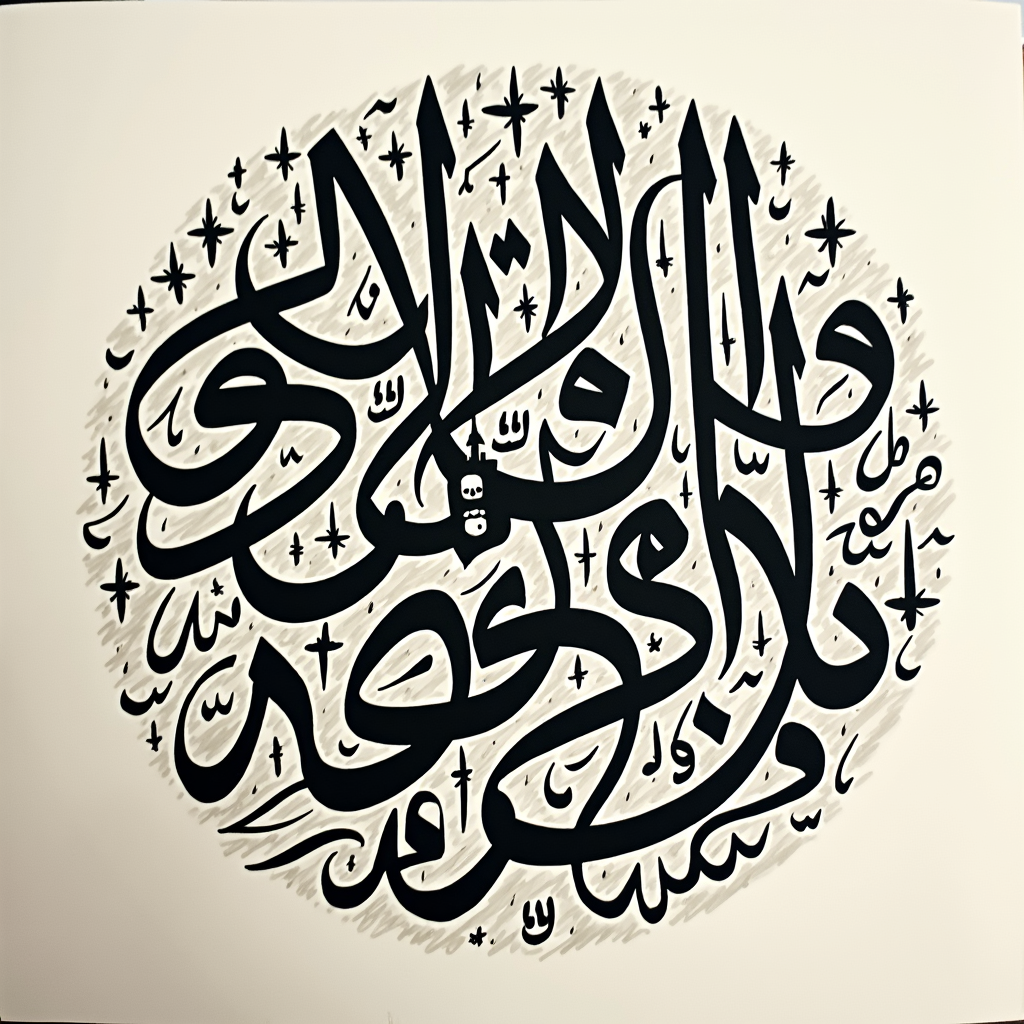} % Replace with your image
    \end{minipage}
    \hspace{0.03\textwidth} % Add horizontal space between the figures
    \begin{minipage}[b]{0.3\textwidth}
        \centering
        \includegraphics[width=\textwidth]{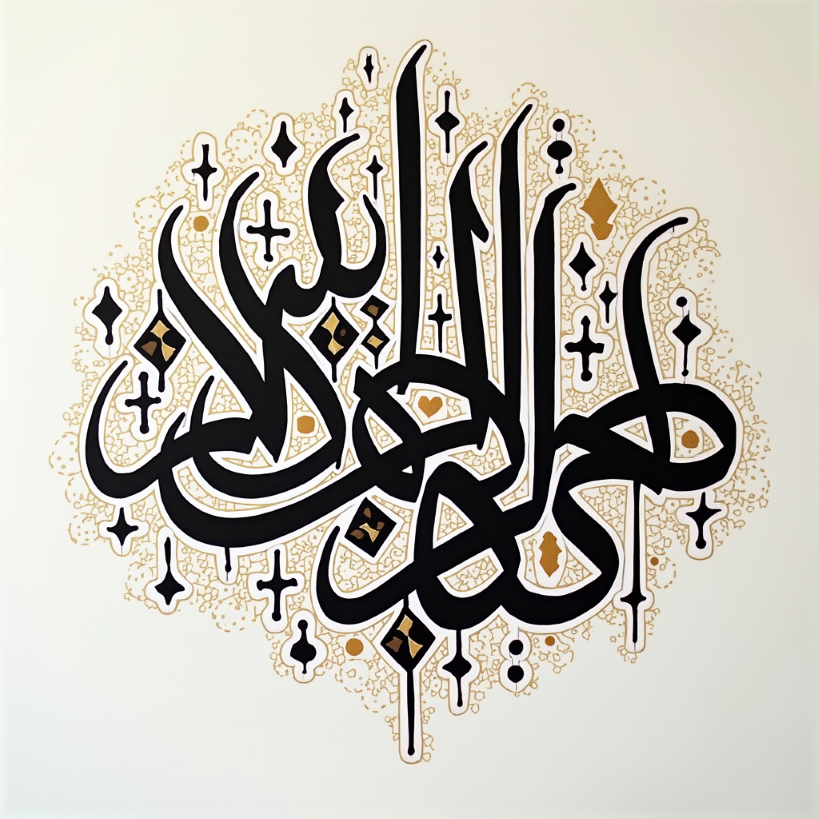} % Replace with your image
    \end{minipage}
    \hspace{0.03\textwidth} % Add horizontal space between the figures
    \begin{minipage}[b]{0.3\textwidth}
        \centering
        \includegraphics[width=\textwidth]{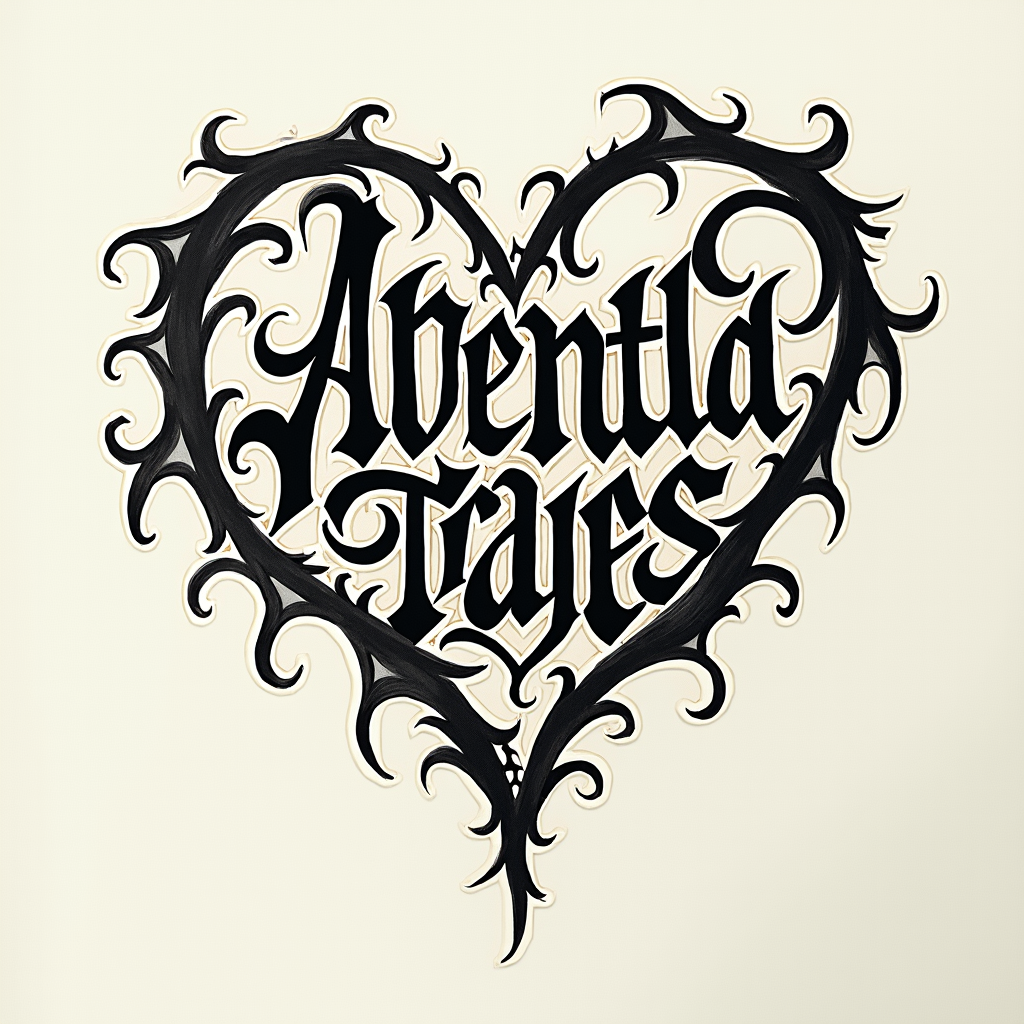} % Replace with your image
    \end{minipage}    
    \caption{Visualizations of FLUX1's results on prompt ``A piece of calligraphy art."}
    \label{fig:SD_calli}
\end{figure}

\begin{figure}[htbp]
    \centering
    \vspace{-10 pt}
    \begin{minipage}[b]{0.3\textwidth}
        \centering
        \includegraphics[width=\textwidth]{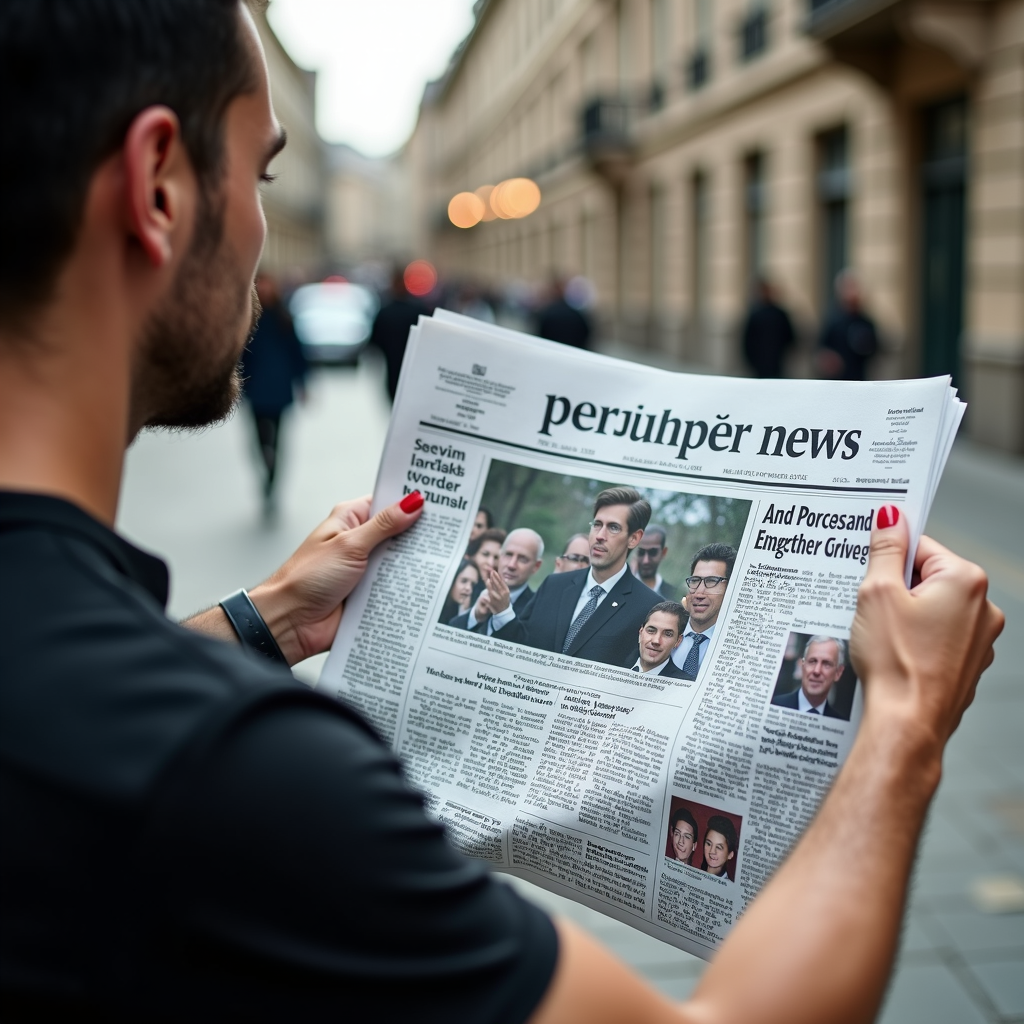} % Replace with your image
    \end{minipage}
    \hspace{0.03\textwidth} % Add horizontal space between the figures
    \begin{minipage}[b]{0.3\textwidth}
        \centering
        \includegraphics[width=\textwidth]{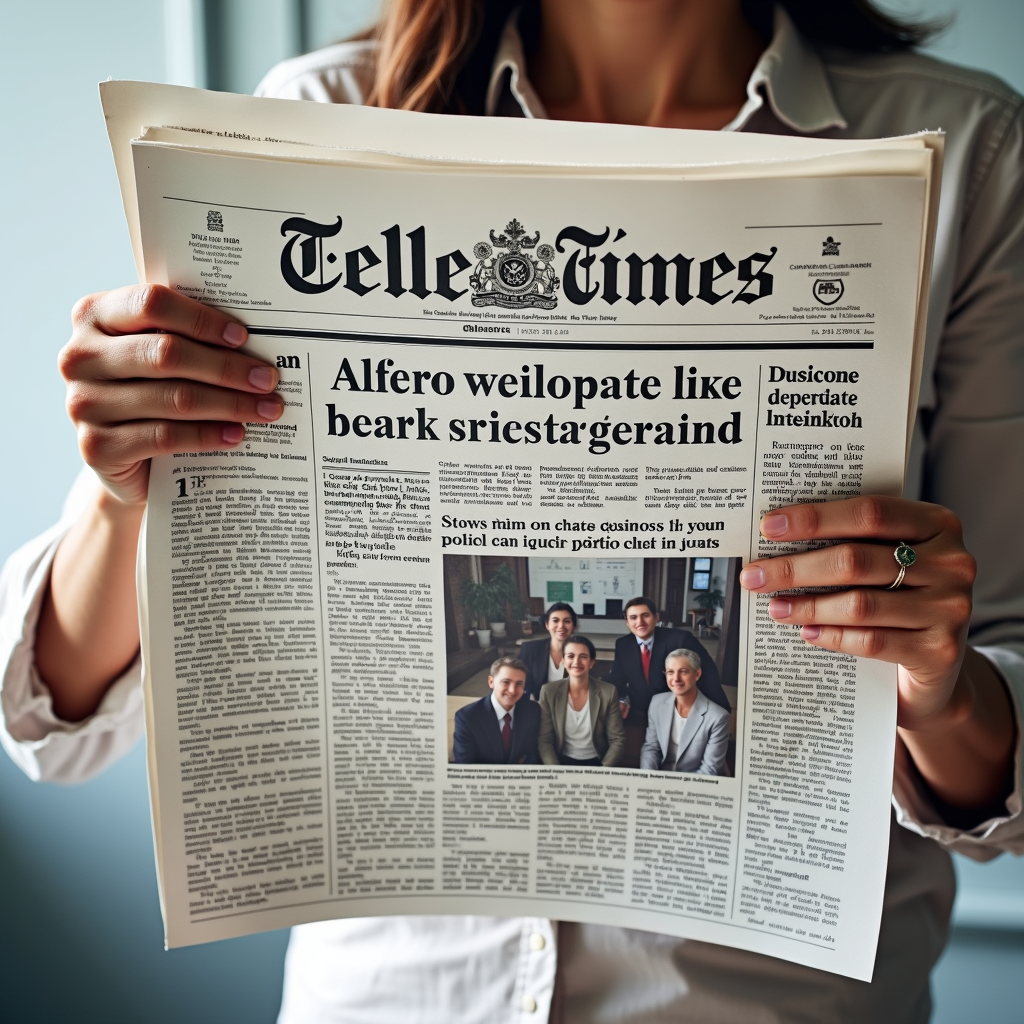} % Replace with your image
    \end{minipage}
    \hspace{0.03\textwidth} % Add horizontal space between the figures
    \begin{minipage}[b]{0.3\textwidth}
        \centering
        \includegraphics[width=\textwidth]{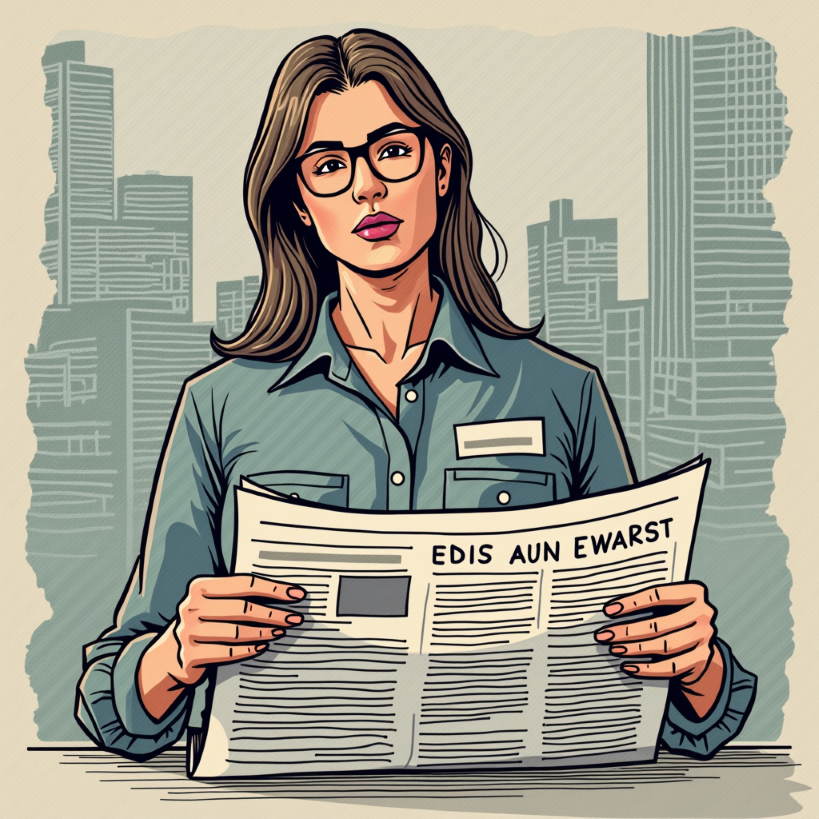} % Replace with your image
    \end{minipage}     
    \caption{Visualizations of FLUX1's results on prompt ``A newspaper reporting news."}
    \label{fig:FLUX_news}
\end{figure}

\begin{figure}[htbp]
    \centering
    \vspace{-10 pt}
    \begin{minipage}[b]{0.3\textwidth}
        \centering
        \includegraphics[width=\textwidth]{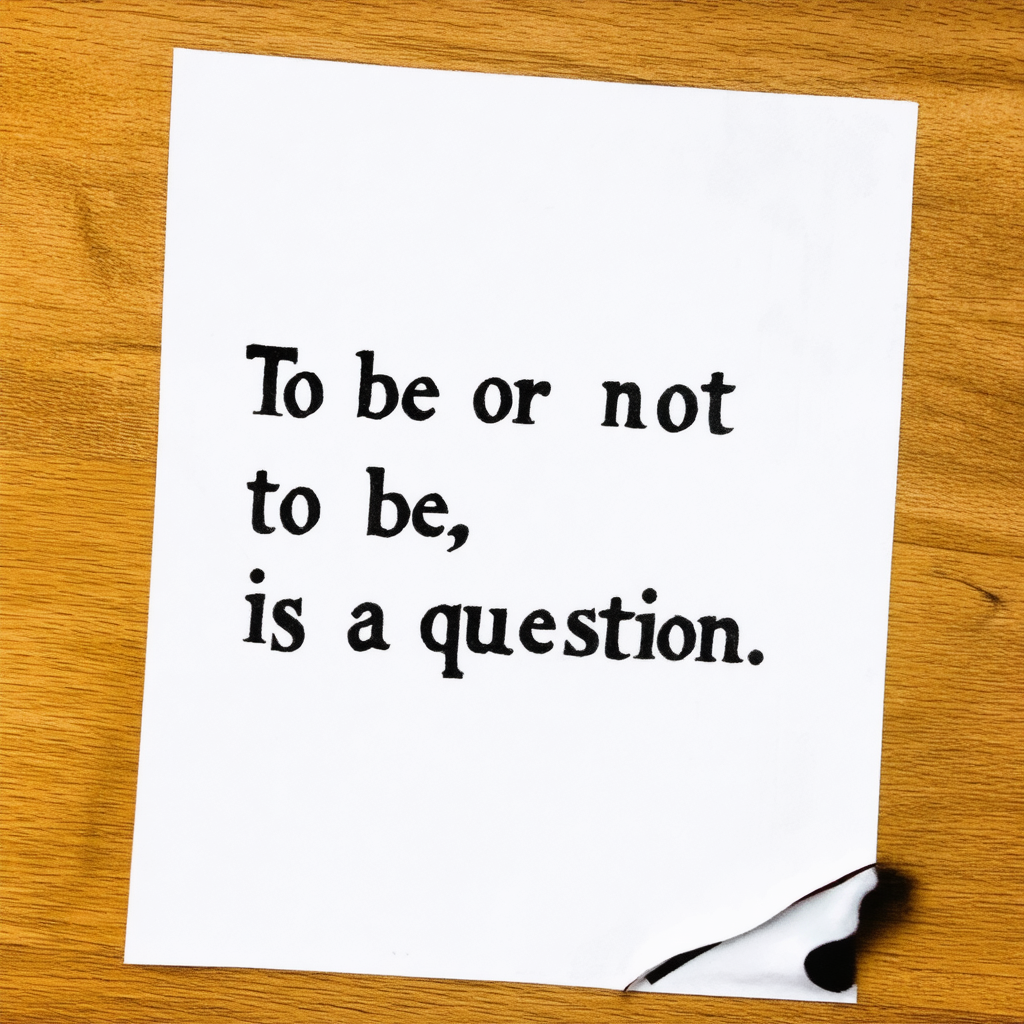} % Replace with your image
    \end{minipage}
    \hspace{0.03\textwidth} % Add horizontal space between the figures
    \begin{minipage}[b]{0.3\textwidth}
        \centering
        \includegraphics[width=\textwidth]{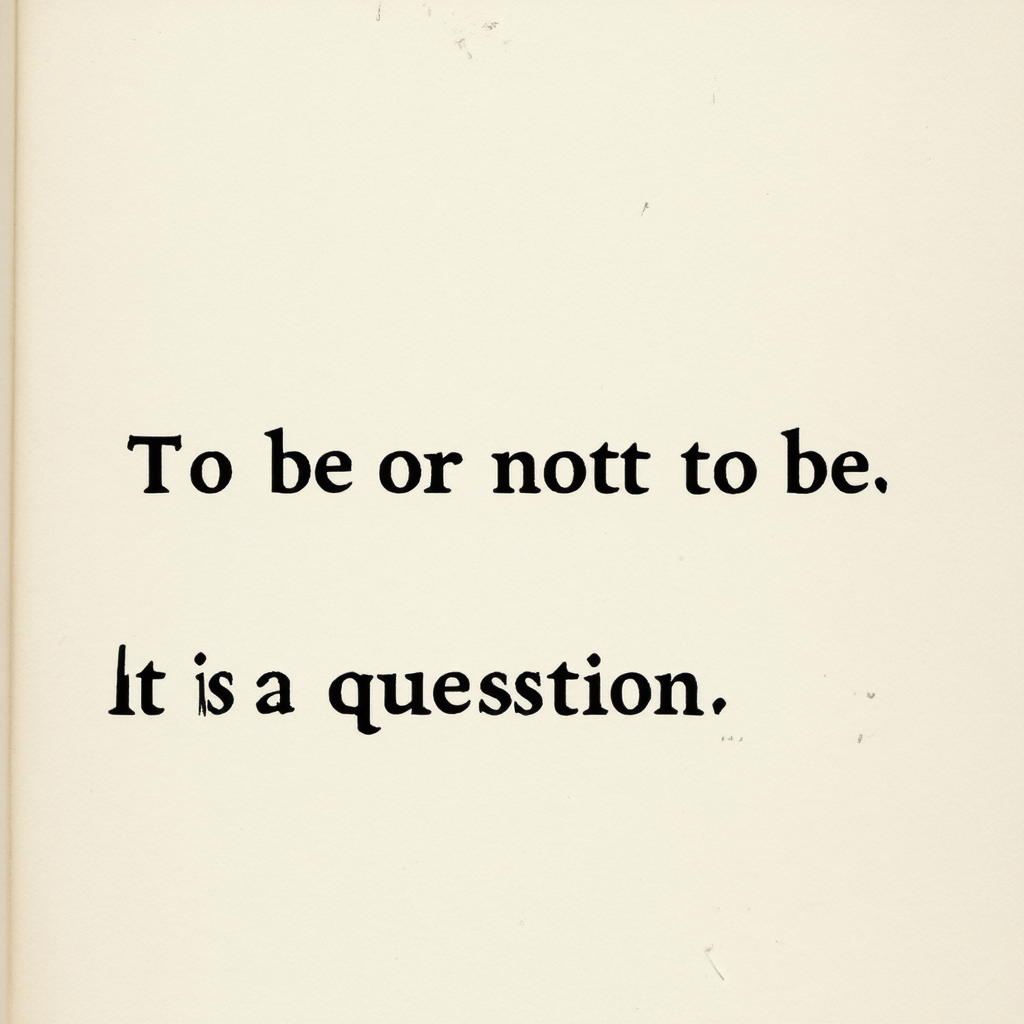} % Replace with your image
    \end{minipage}
    \hspace{0.03\textwidth} % Add horizontal space between the figures
    \begin{minipage}[b]{0.3\textwidth}
        \centering
        \includegraphics[width=\textwidth]{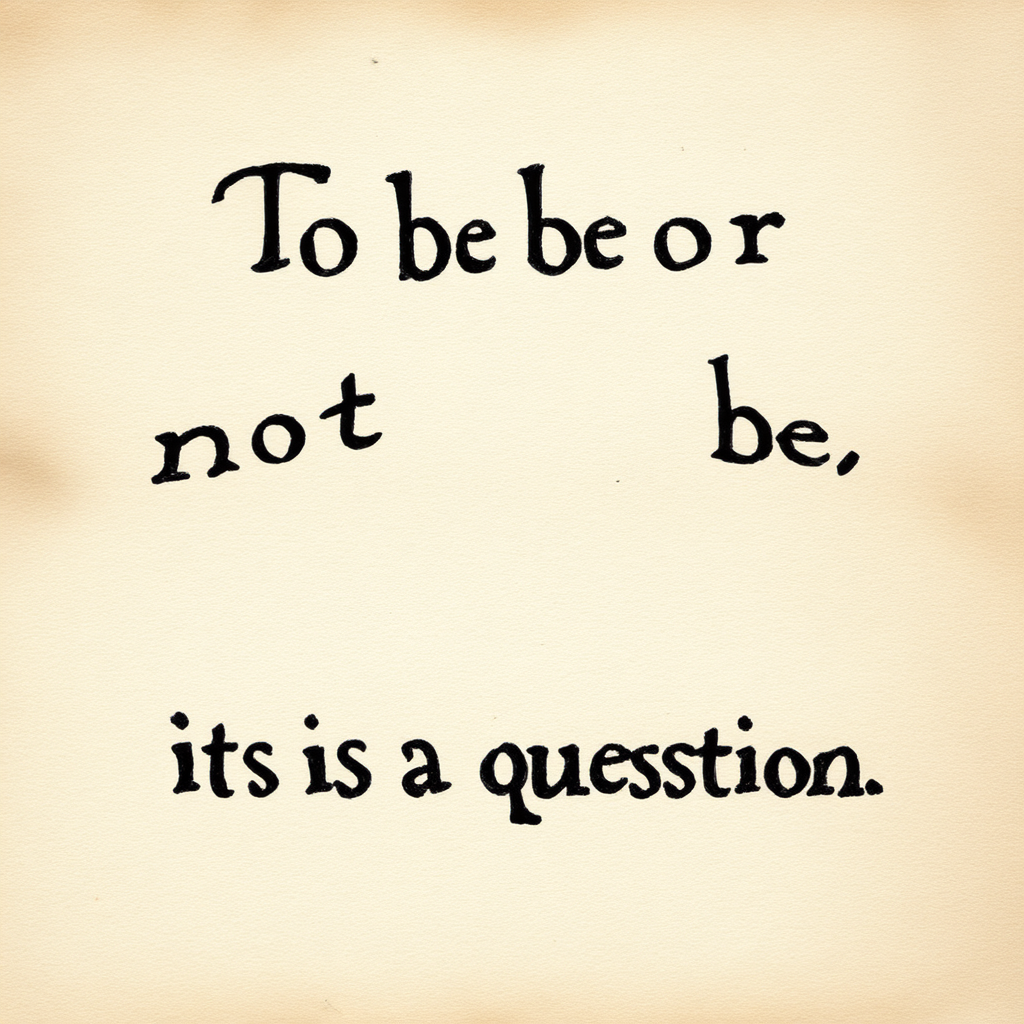} % Replace with your image
    \end{minipage}     
    \caption{Visualizations of StableDiffusion 3.5's results on prompt ``A paper saying 'To be or not to be, it is a question'."}
    \label{fig:sd_tobe}
\end{figure}

\begin{figure}[htbp]
    \centering
    \vspace{-10 pt}
    \begin{minipage}[b]{0.3\textwidth}
        \centering
        \includegraphics[width=\textwidth]{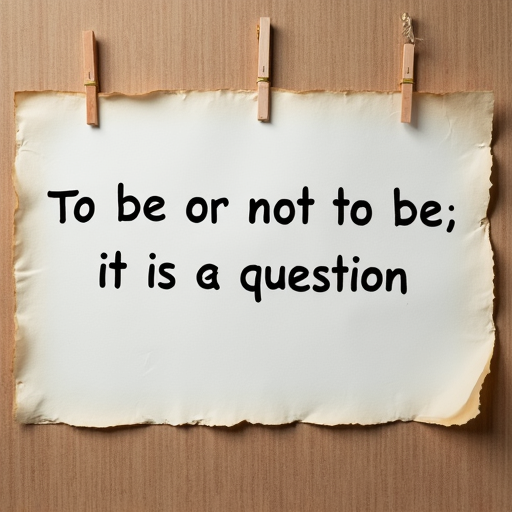} % Replace with your image
    \end{minipage}
    \hspace{0.03\textwidth} % Add horizontal space between the figures
    \begin{minipage}[b]{0.3\textwidth}
        \centering
        \includegraphics[width=\textwidth]{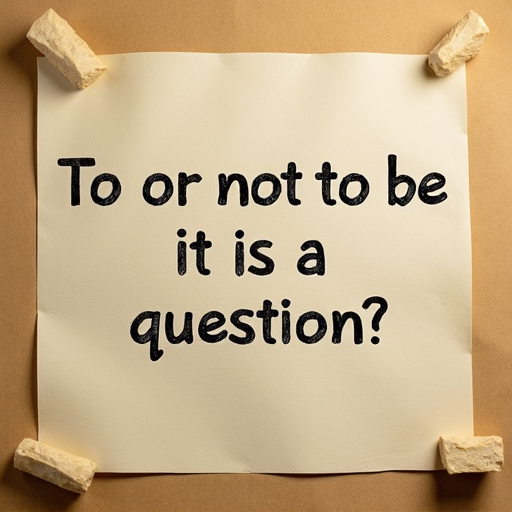} % Replace with your image
    \end{minipage}
    \hspace{0.03\textwidth} % Add horizontal space between the figures
    \begin{minipage}[b]{0.3\textwidth}
        \centering
        \includegraphics[width=\textwidth]{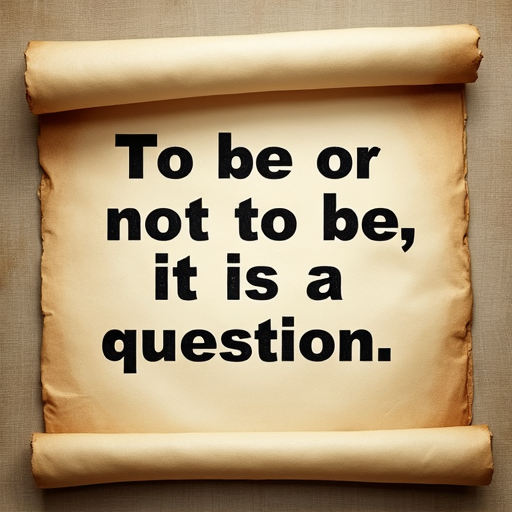} % Replace with your image
    \end{minipage}     
    \caption{Visualizations of FLUX1's results on prompt ``A paper saying 'To be or not to be, it is a question'."}
    \label{fig:FLUX_tobe}
\end{figure}

\subsection{Visualizations}
In this part, we will show detailed visualizations of our experiments. For each experiment, we visualize following
\begin{itemize}[leftmargin= 10pt]
    \item Generated hallucination samples.
    \item The trend of LDR along training process.
    \item The heatmap of LDR at some critical denoising timestep.
\end{itemize}

\subsection{LDR Analysis for FLUX-1-dev and StableDiffusion 3.5}
we also conduct proposed LDR analysis on current models such as SD3.5 and FLUX1. However, Exact calculation the Jacobian matrix $J_{R,\theta}(x)$ requires large memory consumption for these enormous models with billions of parameters. We use zeroth order approximation to test LDR. 

Note that
$\mathbb{E}_{\epsilon}[\|f(x_0+\epsilon)-f(x_0)\|^2]\approx \mathbb{E}_{\epsilon}\langle \epsilon, \nabla_{x_0}f(x) \rangle^2$. Therefore, we can set $\epsilon_1 \sim \mathcal{N}(0,\epsilon I_d)$ and get $\mathbb{E}_{\epsilon_1}\langle \epsilon_1, \nabla_{x_0}f(x) \rangle^2=\epsilon^2 d\| \nabla_{x_0}f(x)\|^2$. Then set $\epsilon_2\sim \mathcal{N}(0,\epsilon P_RP_R^{\top})$ to get $\mathbb{E}_{\epsilon_2}\langle \epsilon_2, \nabla_{x_0}f(x) \rangle^2=\epsilon^2 d_R \| P_R\nabla_{x_0}f(x)\|^2$. Taking ratio of these two terms and multiplying $d/d_R$, we obtain an approximation of LDR. We test the average LDR with prompting ``A blackboard with formulas". Total reverse timesteps are 50. Here are the approximated LDR results:

\begin{table}[h!]
\centering
\begin{tabular}{c|c|c|c|c|c}
\toprule
\textbf{Timestep} & 50       & 40       & 30       & 20       & 10       \\
\hline
\textbf{FLUX1}    & 0.9253   & 0.8078   & 0.7992   & 0.8208   & 0.7994   \\
\hline
\textbf{SD3.5}    & 0.9692   & 0.8283   & 0.7912   & 0.5726   & 0.4534   \\
\bottomrule
\end{tabular}
\caption{LDR Results for FLUX1 and SD3.5}
\end{table}

Apart from SD3.5 has a small LDR near the end (we do not know why), most LDR is relatively high, which corroborates our finding. This means when generating text content images, the diffusion model tend to care only local region for each part, without caring about underlying relation.

\subsubsection{Parity Parenthesis} Please refer to \hyperref[fig:7]{figure 7} for details of generated hallucination sample and LDR analysis. The LDR heatmap is in \hyperref[fig:8]{figure 8}.
\begin{figure}[ht]
    \vspace{-5 pt}
    \centering
    \begin{minipage}[c]{0.3\textwidth}
        \centering
        \includegraphics[width=\textwidth]{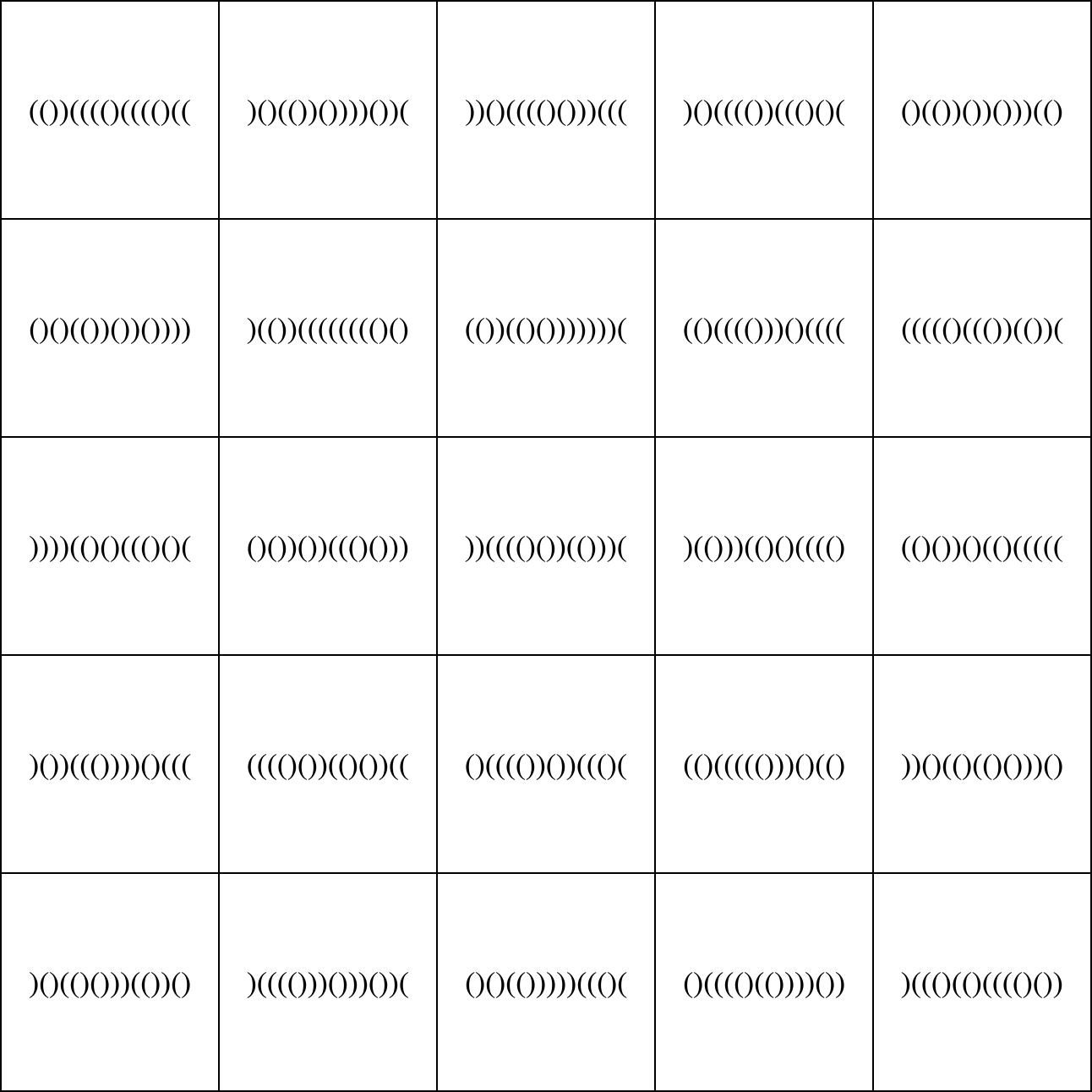} % Replace with your image
        \label{fig:7-1}
    \end{minipage}
    \begin{minipage}[c]{0.5\textwidth}
        \centering
        \includegraphics[width=\textwidth]{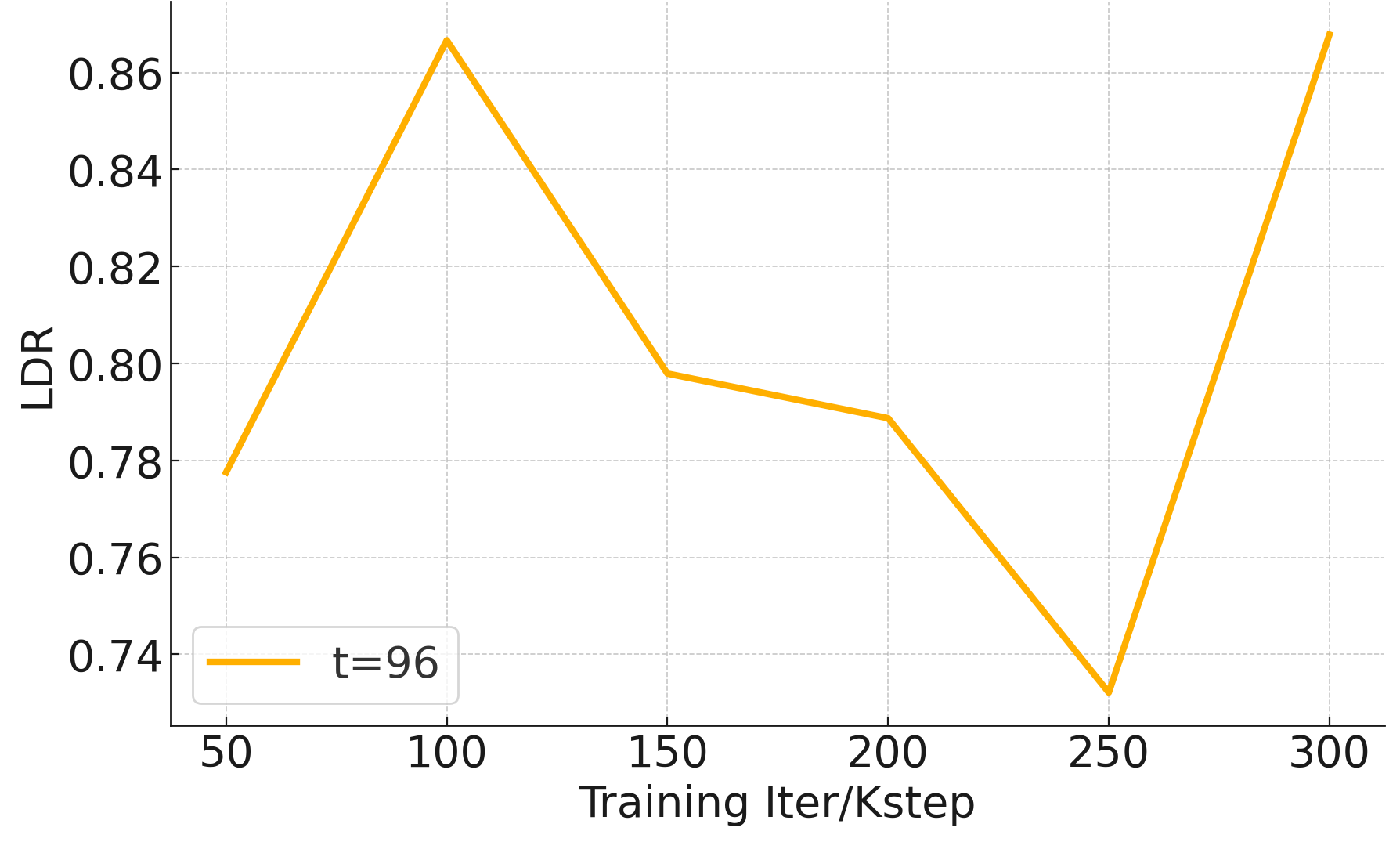} % Replace with your image
        \label{fig:7-2}
    \end{minipage}
    \vspace{-15 pt}
    \caption{Some examples of generated hallucination samples at 300k steps. Note that only half of them satisfy parity constraint (even number of both parenthesis). The LDR at $\sqrt{\bar{\alpha}_t}=0.1$ is high through all training procedure.}
    \label{fig:7}
\end{figure}

\begin{figure}[ht]
    \vspace{-5 pt}
    \centering
    \begin{minipage}[c]{0.15\textwidth}
        \centering
        \includegraphics[width=\textwidth]{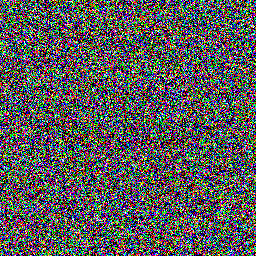} % Replace with your image
        \label{fig:8-1}
    \end{minipage}
    \begin{minipage}[c]{0.15\textwidth}
        \centering
        \includegraphics[width=\textwidth]{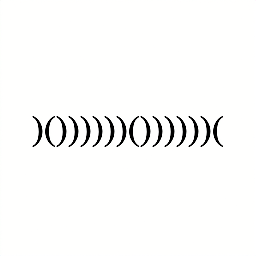} % Replace with your image
        \label{fig:8-2}
    \end{minipage}
    \begin{minipage}[c]{0.28\textwidth}
        \centering
        \includegraphics[width=\textwidth]{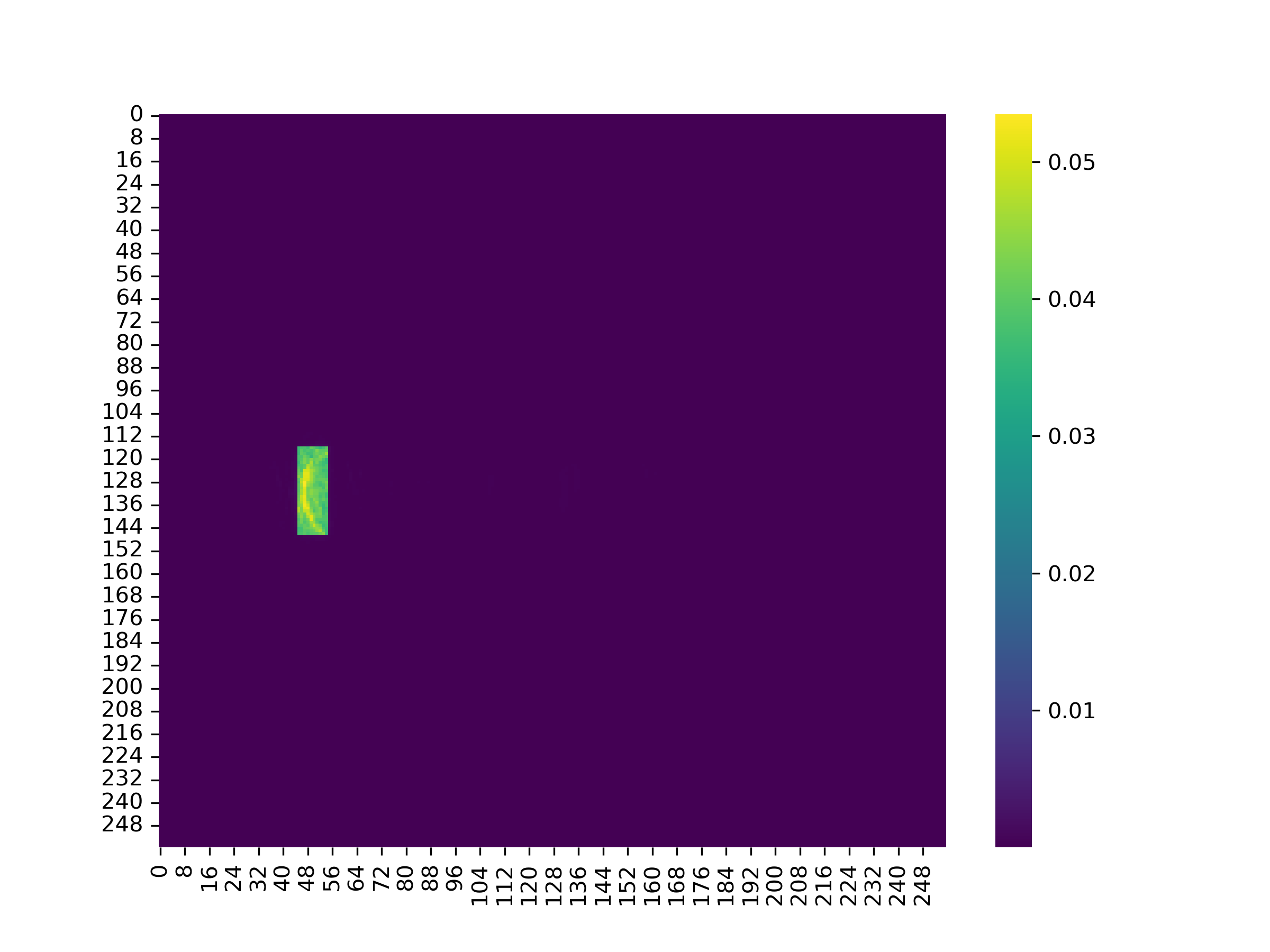} % Replace with your image
        \label{fig:8-3}
    \end{minipage}
    \vspace{-15 pt}
    \caption{An example of $x_t, x_0$ and LDR heatmap. The LDR in this image is 0.9736 and the reference region is the second parenthesis. We can see the denoising model primarily only focues on this parenthesis' region to generate it. Therefore all the symbols are generated independent and fail to satisfy parity constraint.}
    \label{fig:7}
\end{figure}

\subsubsection{Dyck Parenthesis.} Perhaps surprisingly, we found that UNet model is capable of generating valid dyck sequences. After 60k iterations, the UNet model drops down and the accuracy for generated image increases. This can also be validated from probing a parenthesis' region to see which part of input noise the model is looking at. We found that model will only focus on local noise at hallucination phase, resulting in a high LDR. And when it overfits the data, the saliency map spreads globally and LDR decreases. See \hyperref[fig:10]{figure 10} for more details.

\begin{figure}[ht]
    \vspace{-5 pt}
    \centering
    \begin{minipage}[c]{0.48\textwidth}
        \centering
        \includegraphics[width=\textwidth]{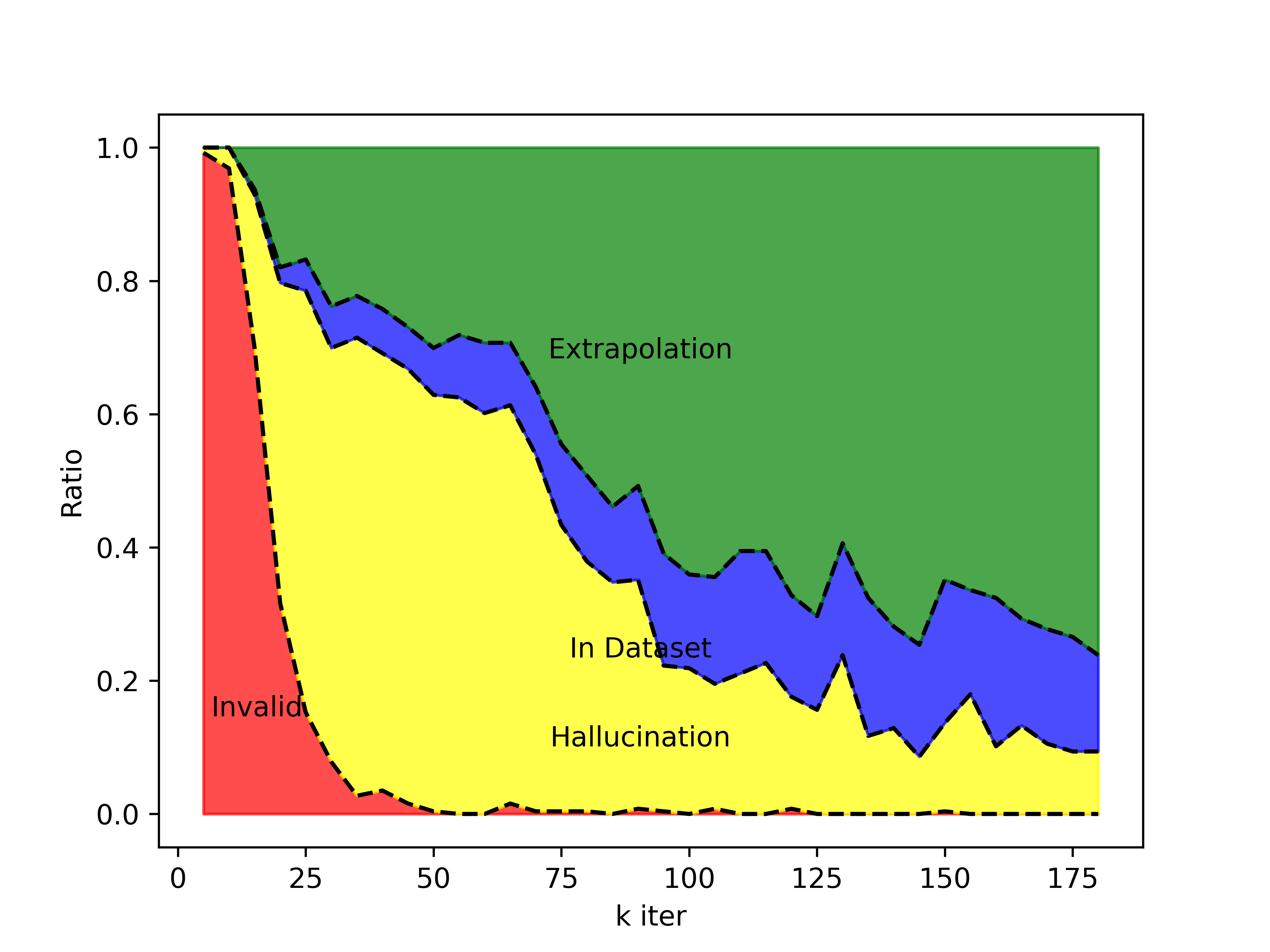} % Replace with your image
        \label{fig:9-1}
    \end{minipage}
    \begin{minipage}[c]{0.48\textwidth}
        \centering
        \includegraphics[width=\textwidth]{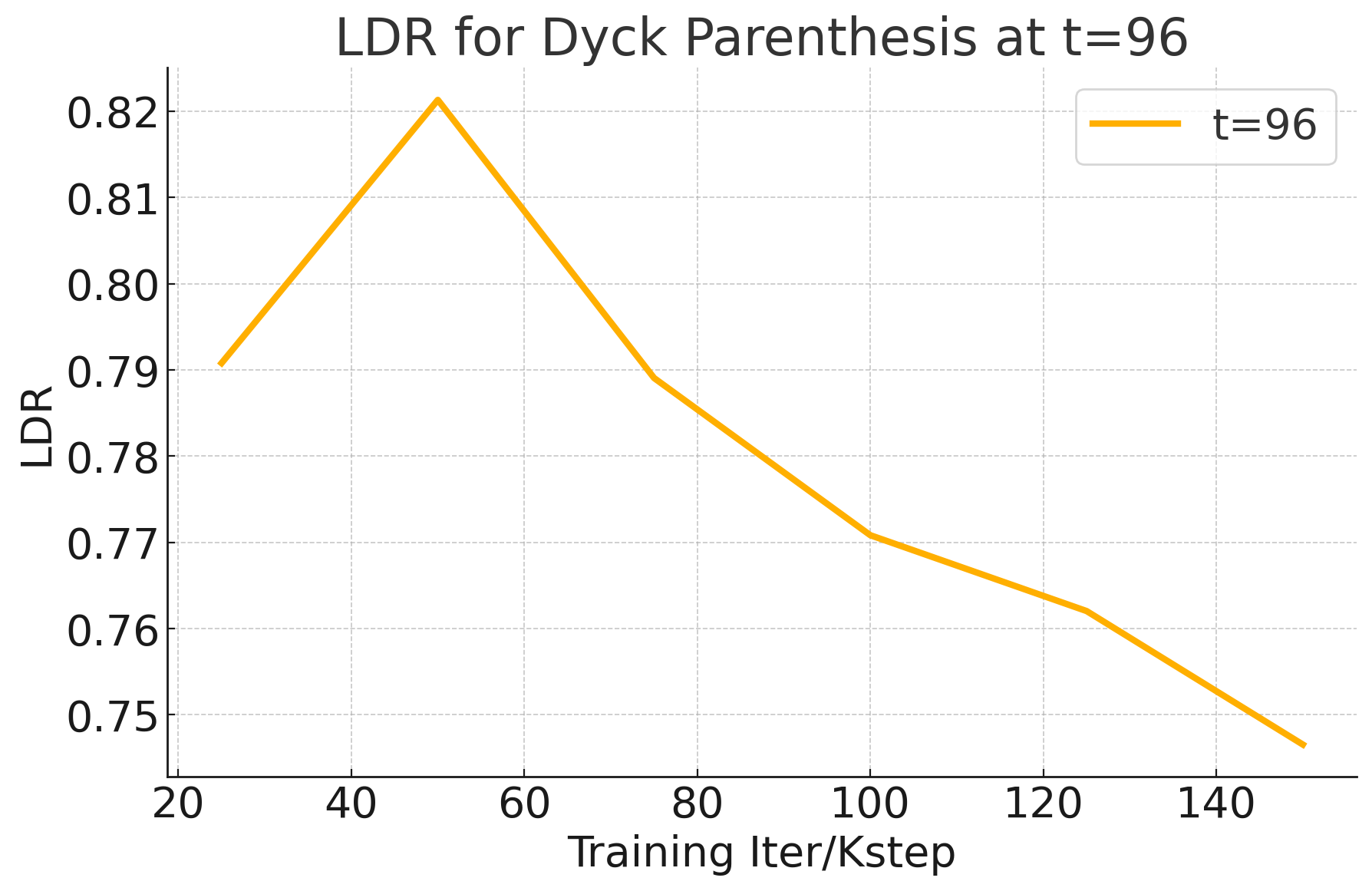} % Replace with your image
        \label{fig:9-2}
    \end{minipage}
    \vspace{-15 pt}
    \caption{Generation proportion graph and LDR for $t=96$. The reference region is the position at second parenthesis. Although seemed difficult, Dyck grammar is actually much easier to learn and extrapolate, since there are strong correlations between parenthesis. Interesting, there is still a hallucination phase, and hallucinations fades as LDR decreases.}
    \label{fig:7}
\end{figure}

\begin{figure}[htbp]
    \vspace{-5 pt}
    \centering
    \begin{minipage}[c]{0.15\textwidth}
        \centering
        \includegraphics[width=\textwidth]{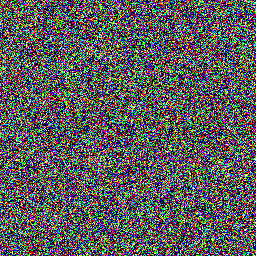} % Replace with your image
        \label{fig:10-1}
    \end{minipage}
    \begin{minipage}[c]{0.15\textwidth}
        \centering
        \includegraphics[width=\textwidth]{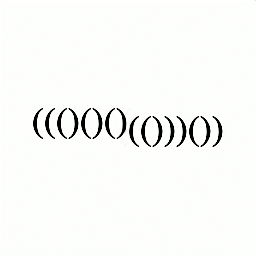} % Replace with your image
        \label{fig:10-2}
    \end{minipage}
    \begin{minipage}[c]{0.28\textwidth}
        \centering
        \includegraphics[width=\textwidth]{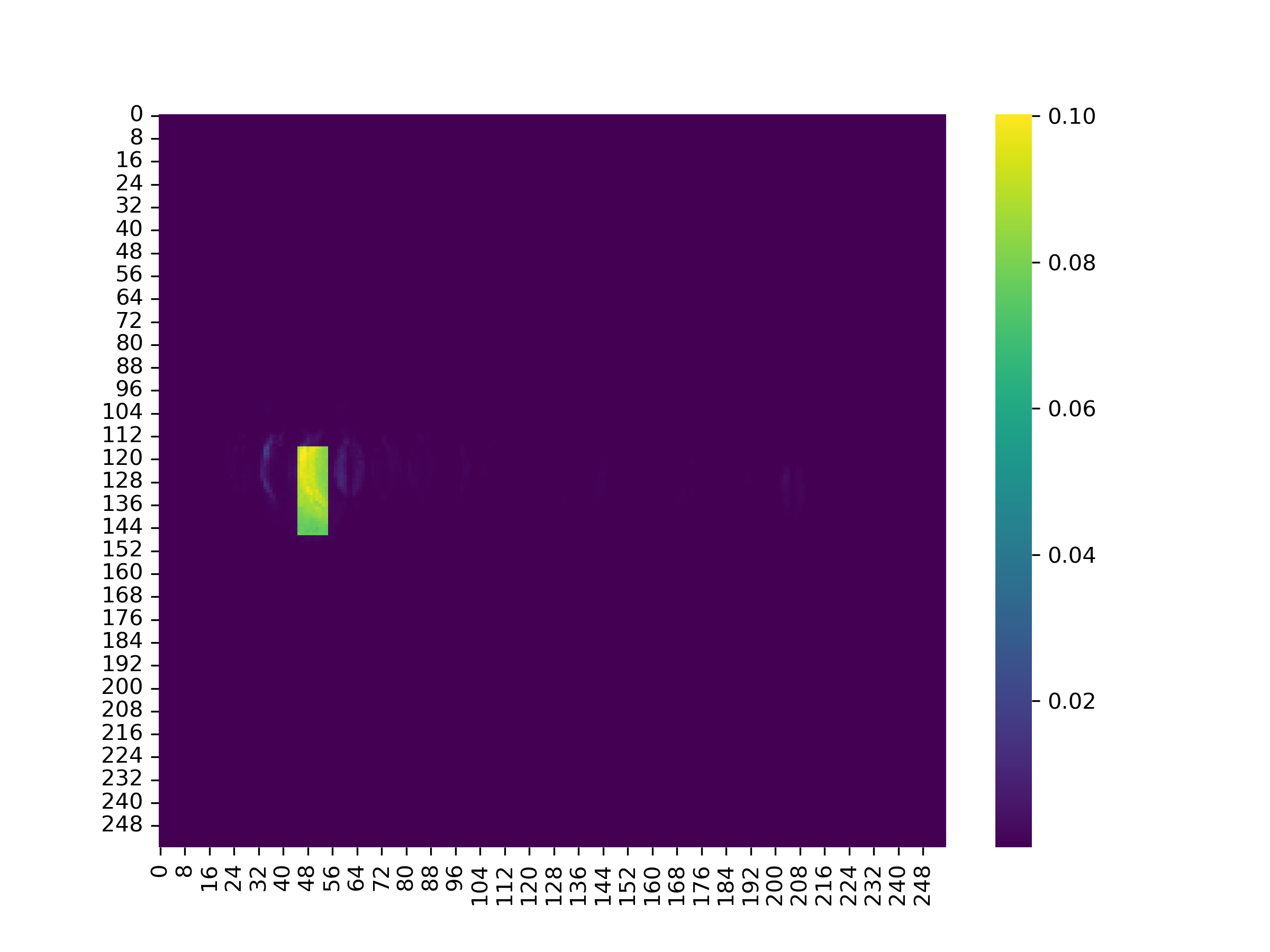} % Replace with your image
        \label{fig:10-3}
    \end{minipage}\\
    \vspace{-15 pt}
    \begin{minipage}[c]{0.15\textwidth}
        \centering
        \includegraphics[width=\textwidth]{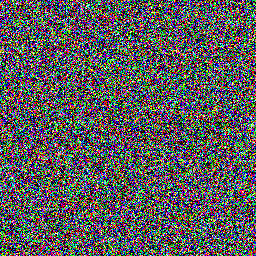} % Replace with your image
        \label{fig:10-4}
    \end{minipage}
    \begin{minipage}[c]{0.15\textwidth}
        \centering
        \includegraphics[width=\textwidth]{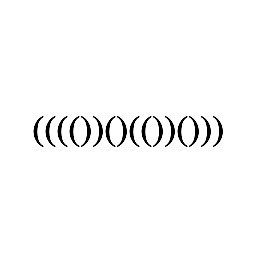} % Replace with your image
        \label{fig:10-5}
    \end{minipage}
    \begin{minipage}[c]{0.28\textwidth}
        \centering
        \includegraphics[width=\textwidth]{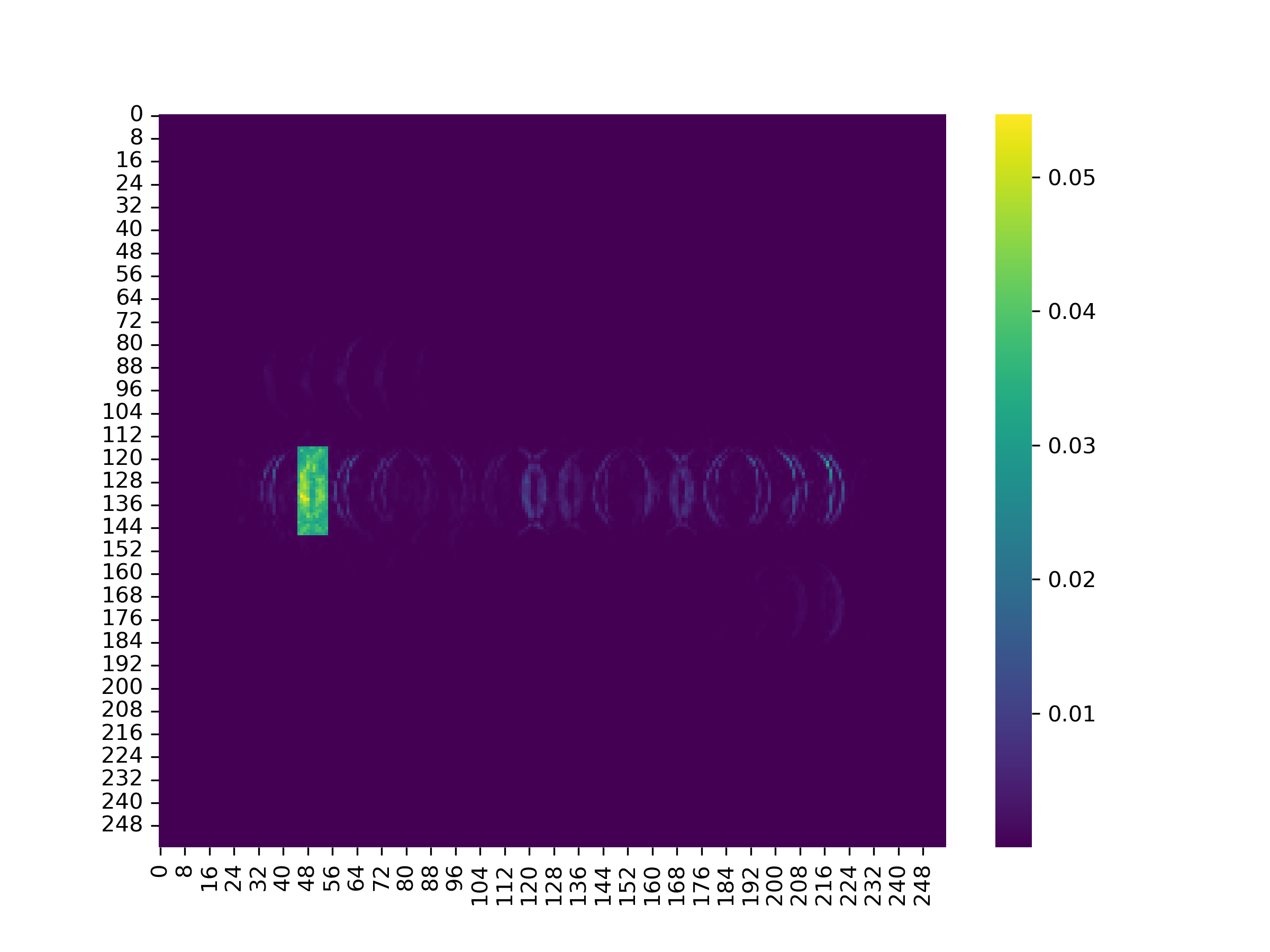} % Replace with your image
        \label{fig:10-6}
    \end{minipage}
    \vspace{-15 pt}
    \caption{The LDR analysis for $20$k training steps (first row, in hallucination) and $170$k training steps (second row, correctly extrapolate). We can see a discrepancy for model's behaviors in terms of local v.s. global dependency. When model learns to correctly generate symbols, it will attend to overall region for coordinating different symbols, which means LDR is low. From left to right columns are $x_t, x_0$ and LDR heatmap.}
    \label{fig:10}
\end{figure}

\subsubsection{Quarter MNIST.} The LDR analysis is shown in main content. Here we show some hallucinated generation and heatmap of LDR. As shown in \hyperref[fig:11]{figure 11}, we can see that both UNet and DiT generate the top-left digit solely by local region's noise. As a consequence, these four digits are generated independently, therefore can not capture the innate relationship and rules.

\begin{figure}[ht]
    \vspace{-10 pt}
    \centering
    \includegraphics[width=0.8\linewidth]{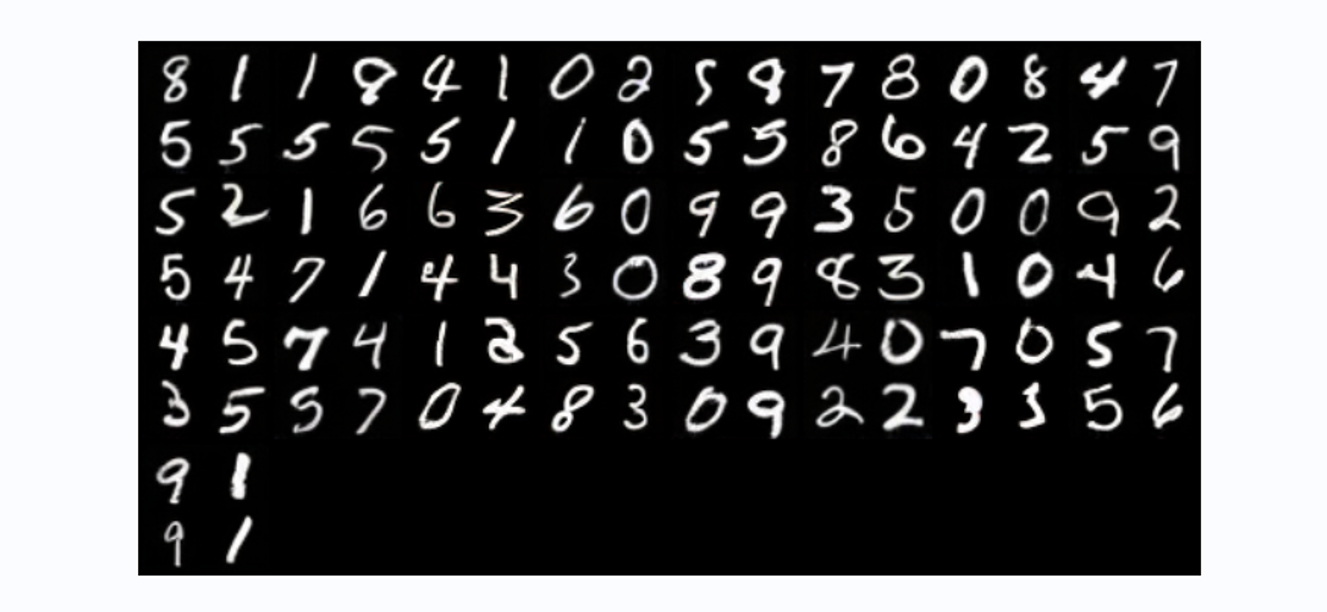}
    \caption{DiT generated Hallucinated samples for Quarter-MNIST dataset. Each four digit form a sample}
    \label{fig:11}
\end{figure}
\begin{figure}[ht]
    \vspace{-5 pt}
    \centering
    \begin{minipage}[c]{0.15\textwidth}
        \centering
        \includegraphics[width=\textwidth]{images/xt_mnist_unet.png} % Replace with your image
        \label{fig:11-1}
    \end{minipage}
    \begin{minipage}[c]{0.15\textwidth}
        \centering
        \includegraphics[width=\textwidth]{images/x_0_mnist_Unet_105.png} % Replace with your image
        \label{fig:11-2}
    \end{minipage}
    \begin{minipage}[c]{0.28\textwidth}
        \centering
        \includegraphics[width=\textwidth]{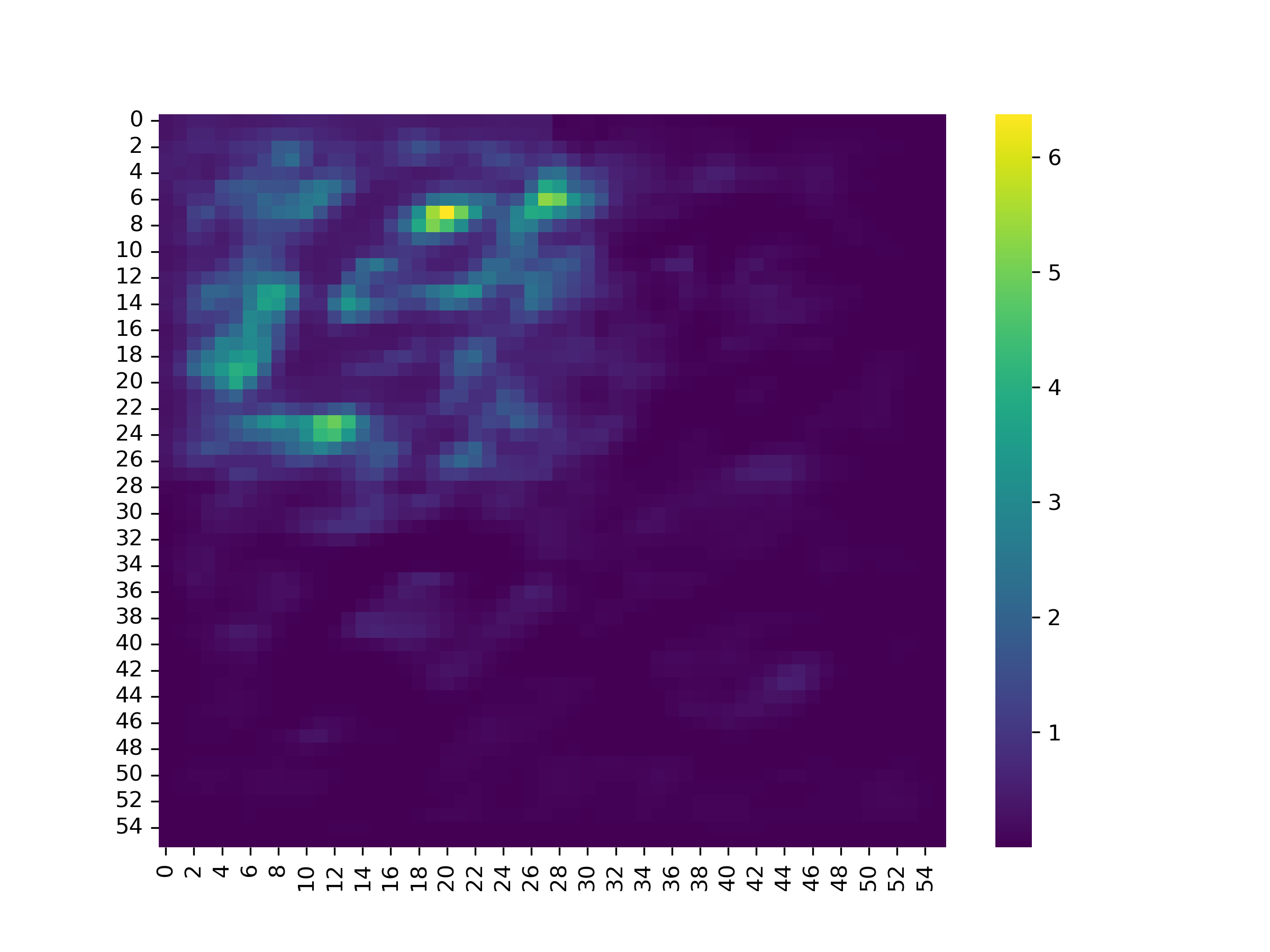} % Replace with your image
        \label{fig:11-3}
    \end{minipage}\\
    \vspace{-15 pt}
    \begin{minipage}[c]{0.15\textwidth}
        \centering
        \includegraphics[width=\textwidth]{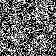} % Replace with your image
        \label{fig:11-4}
    \end{minipage}
    \begin{minipage}[c]{0.15\textwidth}
        \centering
        \includegraphics[width=\textwidth]{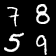} % Replace with your image
        \label{fig:11-5}
    \end{minipage}
    \begin{minipage}[c]{0.28\textwidth}
        \centering
        \includegraphics[width=\textwidth]{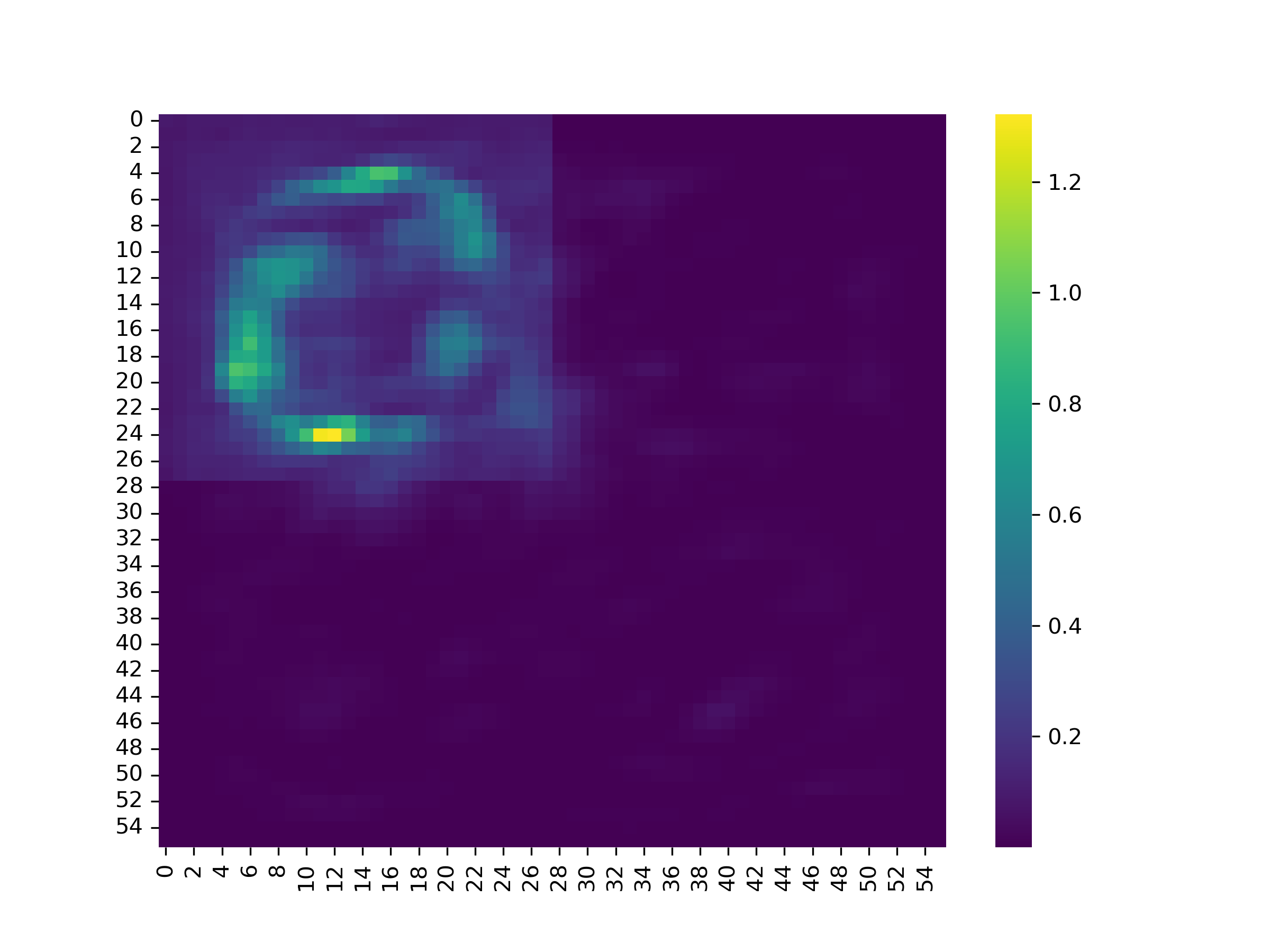} % Replace with your image
        \label{fig:11-6}
    \end{minipage}
    \vspace{-15 pt}
    \caption{The LDR analysis for UNet (top row) and DiT (second) learning Quarter-MNIST dataset. $\sqrt{\bar{\alpha}_t}=0.1$ and the reference region is top-left quarter. From left to right columns are $x_t, x_0$ and LDR heatmap.}
    \label{fig:11}
\end{figure}

\subsubsection{English word and Chinese characters.}
Does hallucination in real-world text distribution also stem from local generation bias? We run experiments to verify this mechanism with image distribution contains common English words and Chinese characters. These English words are

[a, abandon, ability, able, about, above, accept, according, account, across, act, action,activity, actually, add, address, administration, admit, adult, affect, after, again,against, age, agency, agent, ago, agree, agreement, ahead, air, all, allow, almost, alone,along, already, also, although, always, American, among, amount, analysis, and, animal, another, answer, any, anyone, anything, appear, apply, approach, area, argue, arm, around,arrive, art, article, artist, as, ask, assume, at, attack, attention, attorney, audience,author, authority, available, avoid, away, baby, back, bad, bag, ball, bank, bar, base, be,beat,beautiful, because, become, bed, before, begin, behavior, behind, believe, benefit, best,better, between, beyond, big, bill, billion, bit, black, blood, blue, board, body,book,born, both, box, boy, break, bring, brother, budget, build, building, business, but,buy,by, call, camera, campaign, can, cancer, candidate, capital, car, card, care, career,carry,case, catch, cause, cell, center, central, century, certain, certainly, chair, challenge,chance, change, character, charge, check, child, choice, choose, church, citizen, city,civil,claim, class, clear, clearly, close, coach, cold, collection, college, color, come,commercial,common, community, company, compare, computer, concern, condition, conference, Congress,consider,consumer, contain, continue, control, cost, could, country, couple, course, court, cover,create, crime, cultural, culture, cup, current, customer, cut, dark, data, daughter,day,dead, deal, death, debate, decade, decide, decision, deep, defense, degree, Democrat,democratic,describe, design, despite, detail, determine, develop, development, die, difference,different,difficult, dinner, direction, director, discover, discuss, discussion, disease, do, doctor,dog, door, down, draw, dream, drive, drop, drug, during, each, early, east, easy,eat,economic, economy, edge, education, effect, effort, eight, either, election, else,employee,end, energy, enjoy, enough, enter, entire, environment, environmental, especially,establish,even, evening, event, ever, every, everybody, everyone, everything, evidence, exactly,example,executive, exist, expect, experience, expert, explain, eye, face, fact, factor, fail,fall,family, far, fast, father, fear, federal, feel, feeling, few, field, fight, figure,fill,film, final, finally, financial, find, fine, finger, finish, fire, firm, first, fish,five,floor, fly, focus, follow, food, foot, for, force, foreign, forget, form, former,forward,four, free, friend, from, front, full, fund, future, game, garden, gas, general,generation,get, girl, give, glass, go, goal, good, government, great, green, ground, group, grow,growth, guess, gun, guy, hair, half, hand, hang, happen, happy, hard, have, he,head,health, hear, heart, heat, heavy, help, her, here, herself, high, him, himself, his,history,hit, hold, home, hope, hospital, hot, hotel, hour, house, how, however, huge, human,hundred,husband, I, idea, identify, if, image, imagine, impact, important, improve, in, include,including, increase, indeed, indicate, individual, industry, information, inside, instead,institution,interest, interesting, international, interview, into, investment, involve, issue, it, item,its,itself, job, join, just, keep, key, kid, kill, kind, kitchen, know, knowledge, land,language,large, last, late, later, laugh, law, lawyer, lay, lead, leader, learn, least, leave,left,leg, legal, less, let, letter, level, lie, life, light, like, likely, line, list,listen,little, live, local, long, look, lose, loss, lot, love, low, machine, magazine, main,maintain,major, majority, make, man, manage, management, manager, many, market, marriage, material,matter,may, maybe, me, mean, measure, media, medical, meet, meeting, member, memory, mention,message,method, middle, might, military, million, mind, minute, miss, mission, model, modern,moment,money, month, more, morning, most, mother, mouth, move, movement, movie, Mr, Mrs,much, music,must, my, myself, name, nation, national, natural, nature, near, nearly, necessary,need, network,never, new, news, newspaper, next, nice, night, no, none, nor, north, not, note,nothing, notice,now, n't, number, occur, of, off, offer, office, officer, official, often, oh, oil,ok, old,on, once, one, only, onto, open, operation, opportunity, option, or, order,organization, other,others, our, out, outside, over, own, owner, page, pain, painting, paper, parent,part, participant,particular, particularly, partner, party, pass, past, patient, pattern, pay, peace,people, per,perform, performance, perhaps, period, person, personal, phone, physical, pick, picture,piece, place,plan, plant, play, player, PM, point, police, policy, political, politics, poor,popular, population,position, positive, possible, power, practice, prepare, present, president, pressure,pretty, prevent,price, private, probably, problem, process, produce, product, production, professional,professor, program,project, property, protect, prove, provide, public, pull, purpose, push, put, quality,question, quickly,quite, race, radio, raise, range, rate, rather, reach, read, ready, real, reality,realize, really,reason, receive, recent, recently, recognize, record, red, reduce, reflect, region,relate, relationship,religious, remain, remember, remove, report, represent, Republican, require, research,resource, respond,response, responsibility, rest, result, return, reveal, rich, right, rise, risk, road,rock, role,room, rule, run, safe, same, save, say, scene, school, science, scientist, score, sea,season, seat,second, section, security, see, seek, seem, sell, send, senior, sense, series, serious,serve, service,set, seven, several, sex, sexual, shake, share, she, shoot, short, shot, should,shoulder, show, side,sign, significant, similar, simple, simply, since, sing, single, sister, sit, site,situation, six, size,skill, skin, small, smile, so, social, society, soldier, some, somebody, someone,something, sometimes,son, song, soon, sort, sound, source, south, southern, space, speak, special, specific,speech, spend,sport, spring, staff, stage, stand, standard, star, start, state, statement, station,stay, step, still,stock, stop, store, story, strategy, street, strong, structure, student, study, stuff,style, subject, success,successful, such, suddenly, suffer, suggest, summer, support, sure, surface, system,table, take, talk, task,tax, teach, teacher, team, technology, television, tell, ten, tend, term, test, than,thank, that, the, their,them, themselves, then, theory, there, these, they, thing, think, third, this, those,though, thought, thousand,threat, three, through, throughout, throw, thus, time, to, today, together, tonight,too, top, total, tough, toward,town, trade, traditional, training, travel, treat, treatment, tree, trial, trip, trouble,true, truth, try, turn,TV, two, type, under, understand, unit, until, upon, use, usually, value, various,very, victim, view, violence,visit, voice, vote, wait, walk, wall, want, war, watch, water, way, we, weapon,wear, week, weight, well, west,western, what, whatever, when, where, whether, which, while, white, who, whole, whom,whose, why, wide, wife, will,win, wind, window, wish, with, within, without, woman, wonder, word, work, worker,world, worry, would, write,writer, wrong, yard, yeah, year, yes, yet, you, young, your, yourself].

\begin{figure}[ht]
    \vspace{-5 pt}
    \centering
    \begin{minipage}[c]{0.4\textwidth}
        \centering
        \includegraphics[width=\textwidth]{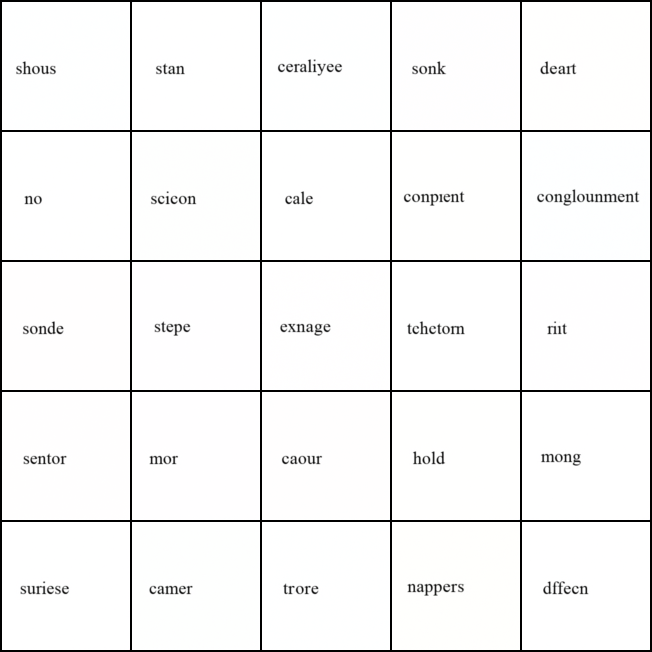} % Replace with your image
        \label{fig:14-1}
    \end{minipage}
    \hspace{0.05\textwidth}
    \begin{minipage}[c]{0.4\textwidth}
        \centering
        \includegraphics[width=\textwidth]{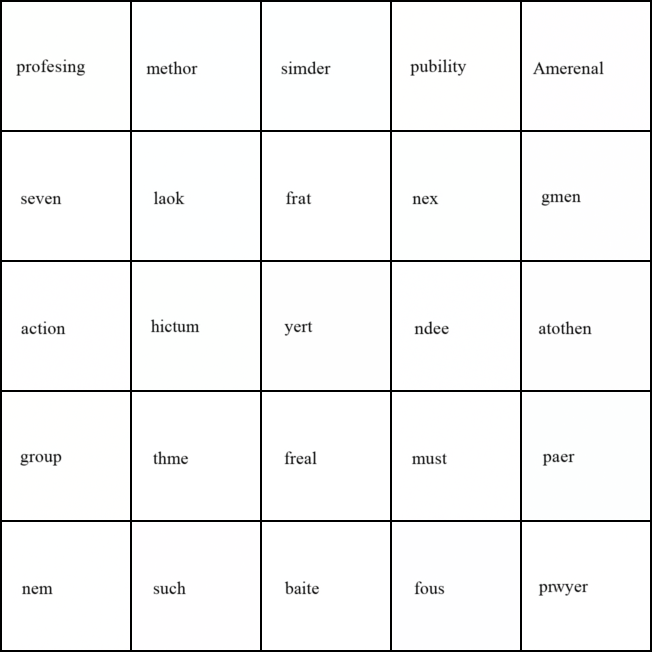} % Replace with your image
        \label{fig:14-2}
    \end{minipage} \\
    \vspace{-5 pt}
    \begin{minipage}[c]{0.4\textwidth}
        \centering
        \includegraphics[width=\textwidth]{images/character_hallucination_1.png} % Replace with your image
        \label{fig:14-3}
    \end{minipage}
    \hspace{0.05\textwidth}
    \begin{minipage}[c]{0.4\textwidth}
        \centering
        \includegraphics[width=\textwidth]{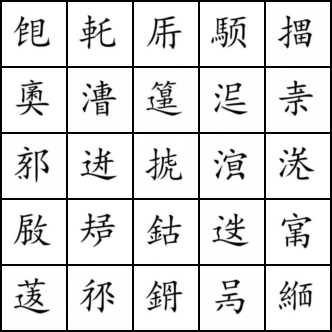} % Replace with your image
        \label{fig:14-4}
    \end{minipage}
    \caption{Diffusion generated results when trained on English common words' image (first row) and Chinese characters (second row). We find similar misspelling phenomenon for English generation and glyph by randomly assembling radicals in Chinese characters.}
    \label{fig:14}
\end{figure}

Also we construct a dataset using 3,000 common Chinese characters and render them in Kai font images. We use UNet to learn to generate images of these texts. The early stage generation results are shown in \hyperref[fig:14]{figure 14}. Interestingly, we observe very similar pattern as in modern large scale diffusion models like StableDiffusion and Midjourney in our synthetic experiment. We probe denoising model at stage when it has hallucination, and finds that they all have very high LDR, indicating they generate letter or radicals independently and combine them.

\begin{figure}[ht]
    \vspace{-5 pt}
    \centering
    \begin{minipage}[c]{0.15\textwidth}
        \centering
        \includegraphics[width=\textwidth]{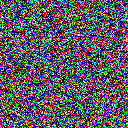} % Replace with your image
        \label{fig:15-1}
    \end{minipage}
    \begin{minipage}[c]{0.15\textwidth}
        \centering
        \includegraphics[width=\textwidth]{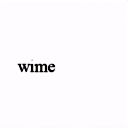} % Replace with your image
        \label{fig:15-2}
    \end{minipage}
    \begin{minipage}[c]{0.28\textwidth}
        \centering
        \includegraphics[width=\textwidth]{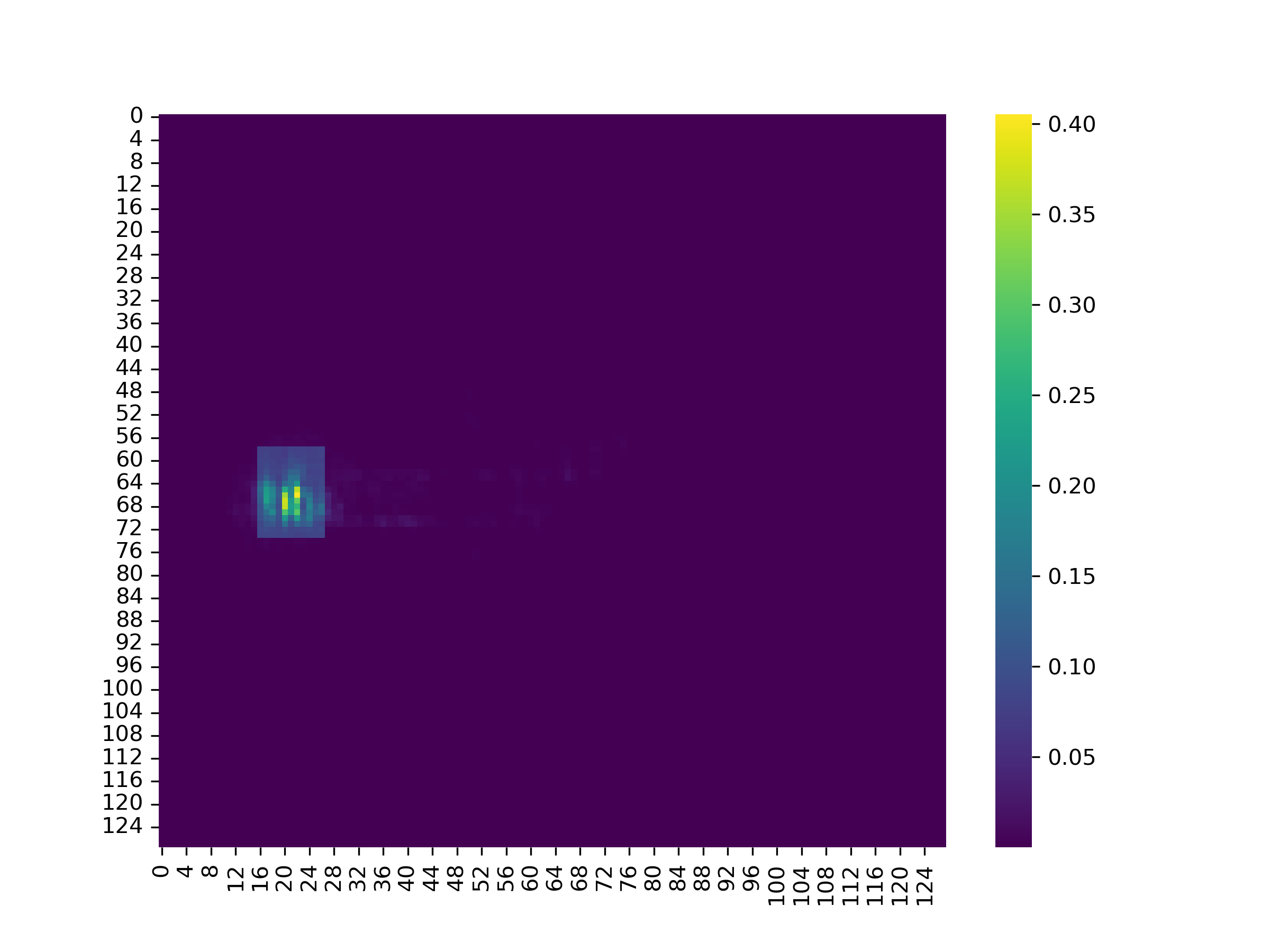} % Replace with your image
        \label{fig:15-3}
    \end{minipage}\\
    \vspace{-15 pt}
    \begin{minipage}[c]{0.15\textwidth}
        \centering
        \includegraphics[width=\textwidth]{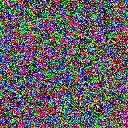} % Replace with your image
        \label{fig:15-4}
    \end{minipage}
    \begin{minipage}[c]{0.15\textwidth}
        \centering
        \includegraphics[width=\textwidth]{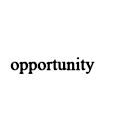} % Replace with your image
        \label{fig:15-5}
    \end{minipage}
    \begin{minipage}[c]{0.28\textwidth}
        \centering
        \includegraphics[width=\textwidth]{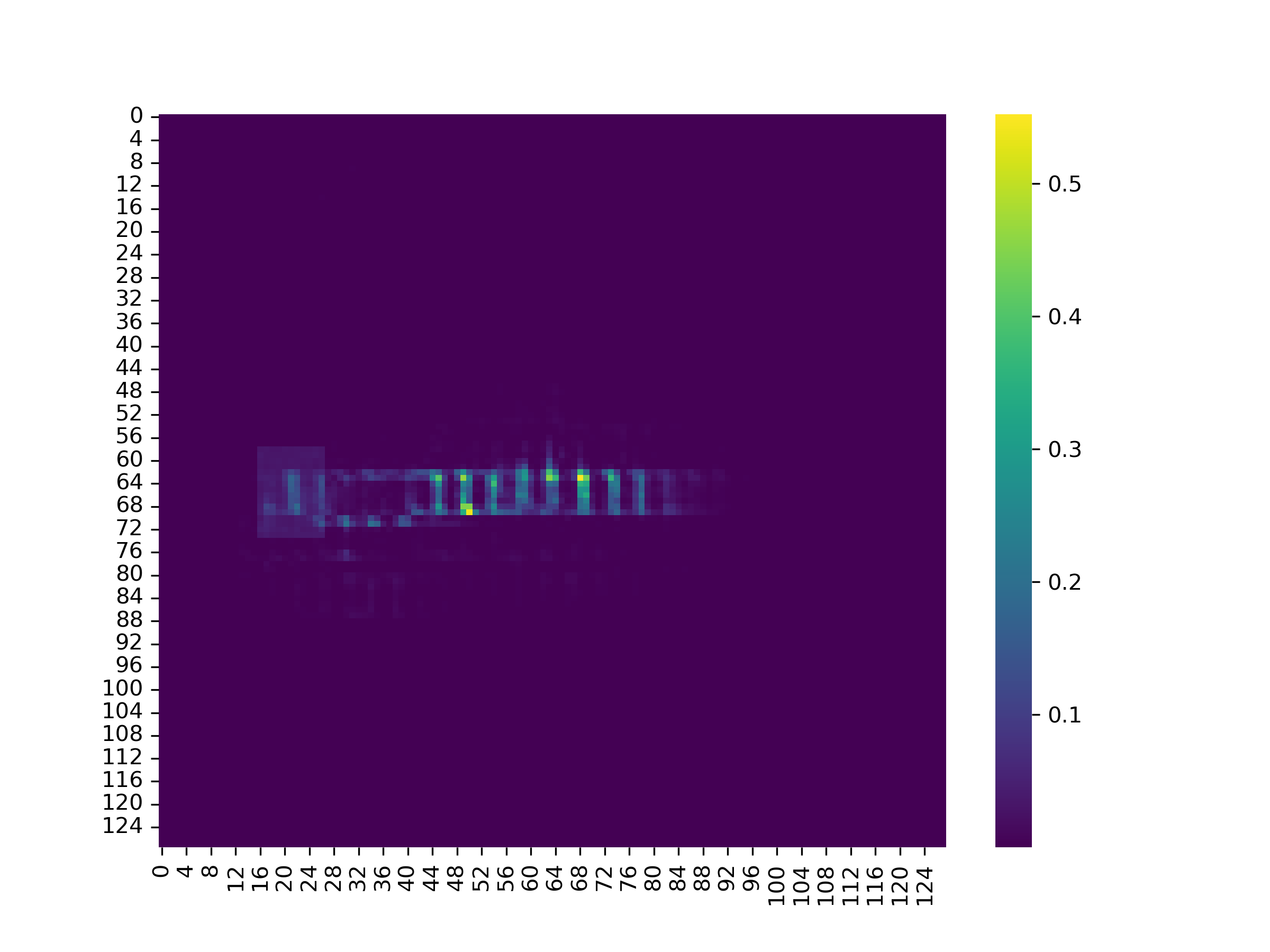} % Replace with your image
        \label{fig:15-6}
    \end{minipage}
    \vspace{-15 pt}
    \caption{The LDR analysis for model at 20k steps (first row) and 200k steps (second row) learning on common English words dataset. $\sqrt{\bar{\alpha}_t}=0.1$ and the reference region is the first and second letter. We can see that when model hallucinates, it only attends to local region, therefore randomly spelling the letters. It will account globally when overfitting to reproduce words within the training dataset. From left to right columns are $x_t, x_0$ and LDR heatmap.}
    \label{fig:15}
\end{figure}
\subsection{Validation of Theoretical Findings}
In this section, we corroborate our theoretical findings by experiments. We set $d=8,16$ and learn just the first dimension of score function for parity points using a two-layer ReLU-activated MLP. The model has $2000$ hidden neurons and we set initialization scheme $\sigma_{init}=1e-3, b_i(0)=0, a_i=\|w_i\|^2$. We train with small learning rate $\eta=1e-6$ and discover following interesting phenomena.

\begin{itemize}
    \item The loss curves exhibit a stair-like shape, meaning it has three phases. 
    \item These three phases correspond to best linear interpolation, best univariate interpolation, and optimal approximator.
    \item At initial stage, the network's weight $w_i$ aligns well with $e_1$, and stick with this state through the first and second stage.
\end{itemize}

\begin{figure}[ht]
    \vspace{-5pt}
    \centering
    \subfigure[Three stairs shape loss. Each stage represents a saddle point.]{
        \includegraphics[width=0.45\linewidth]{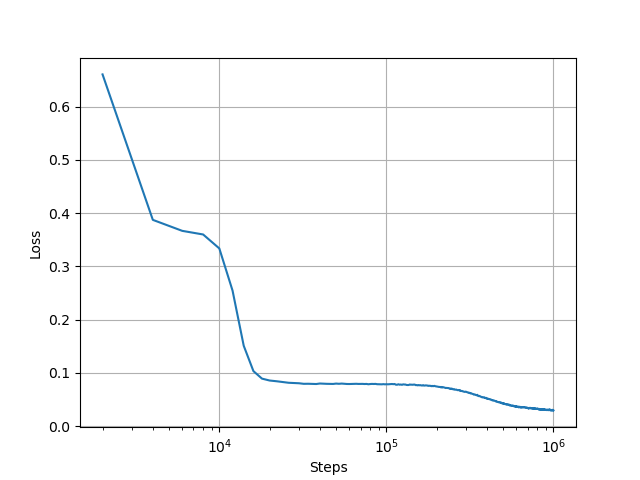} % Replace with your image
        \label{fig:loss}
    }
    \hfill
    \subfigure[The average norm for the first dimension and the rest of weight parameter among hidden neurons. In the first stage, the model only extracts the input's first dimension's information, resulting in a local and sparse input dependency.]{
        \includegraphics[width=0.45\linewidth]{images/weight_evolve.png} % Replace with your image
        \label{fig:w_evolve}
    }
    % \caption{}
\end{figure}
\begin{figure}[htbp]
    \vspace{-5pt}
    \centering
    \subfigure[Phase 1: optimal linear interpolation.]{
        \includegraphics[width=0.3\linewidth]{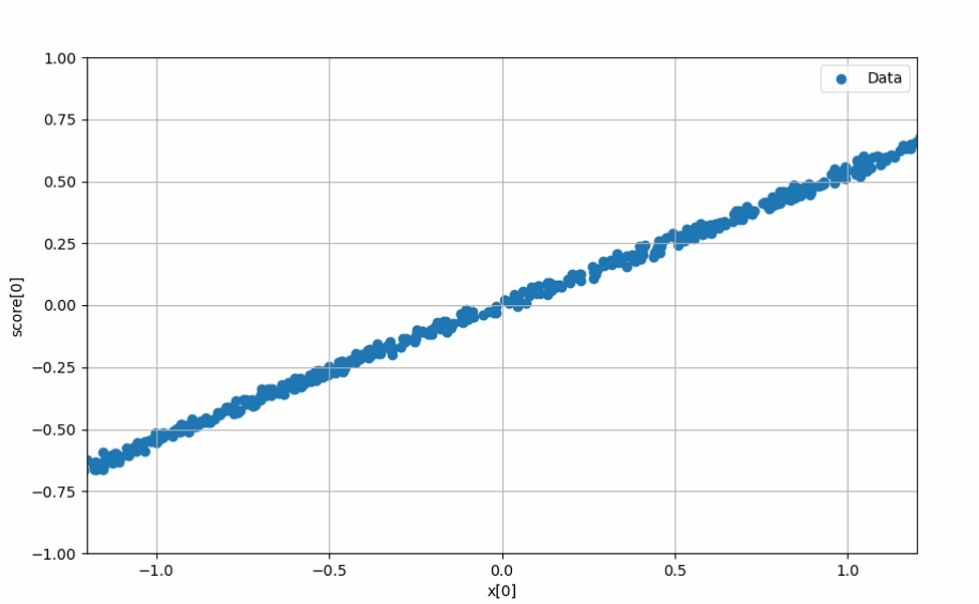} % Replace with your image
        \label{fig:phase1}
    }
    \hfill
    \subfigure[Phase 2: optimal univariate interpolation.]{
        \includegraphics[width=0.3\linewidth]{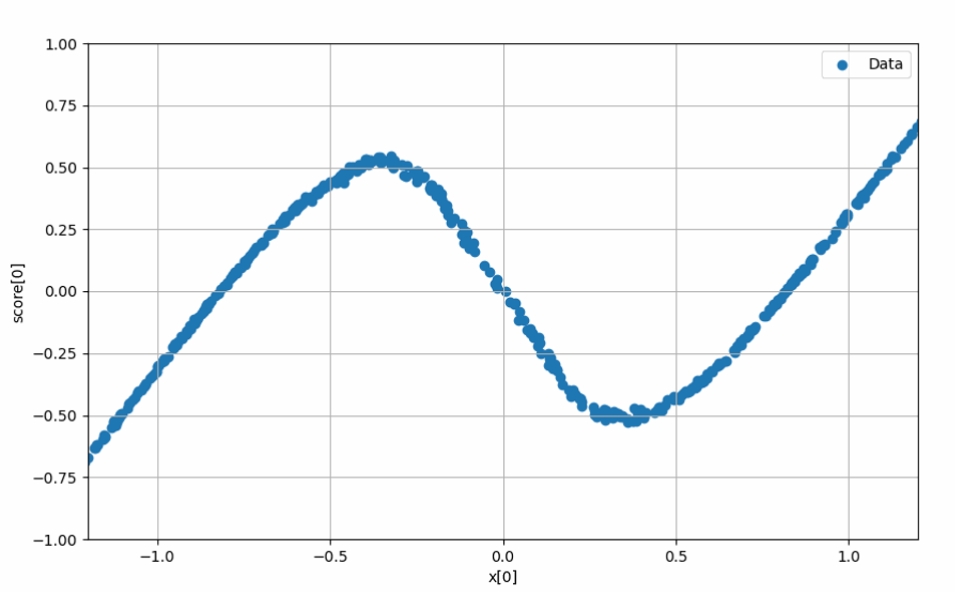} % Replace with your image
        \label{fig:phase2}
    }
    \hfill
    \subfigure[Phase 3: global optimal multivariate approximation.]{
        \includegraphics[width=0.3\linewidth]{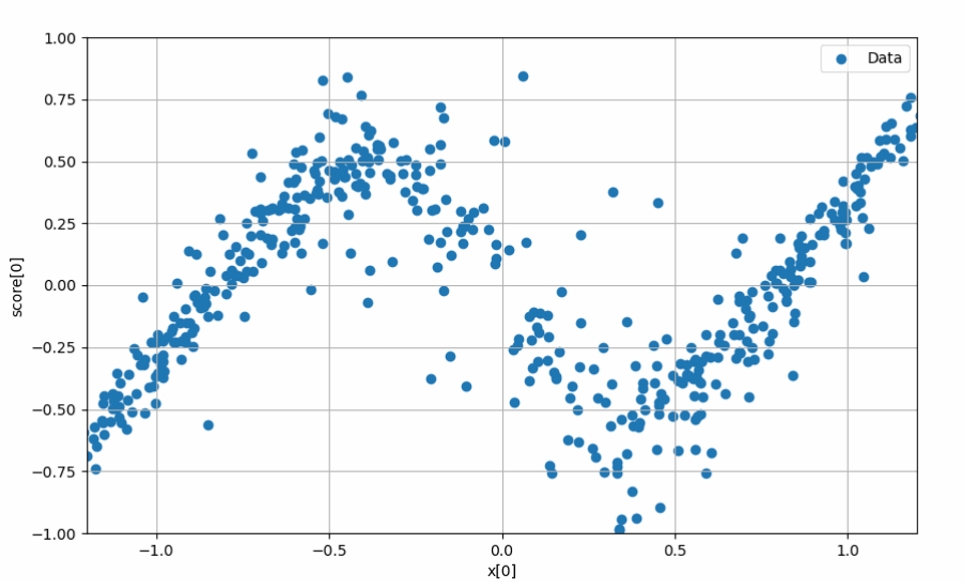} % Replace with your image
        \label{fig:phase2}
    }
    \caption{Three-phase functionality of learned MLP score network. The x-axis is $x^{(1)}$. The MLP performs local generation, if its output against $x^{(i)}$ is nearly a function curve with no ambiguity in mapping.}
\end{figure}

As shown in \hyperref[fig:mlp-learn]{figure 17}. While the ground truth score function is not a univariate function of $x^{(1)}$ as in left. The shaded area near the origin means the score function has also dependency on other input dimensions $x^{(j)}, j>1$. However,The MLP is biased towards learning a univariate function. Even though MLP has access to value from all input dimensions. This results in a local generation bias and let the model independently sample each dimension. Therefore this model essentially samples on all vertices on hypercube rather than parity points.

\begin{figure}[ht]
    \centering
    \includegraphics[width=0.95\linewidth]{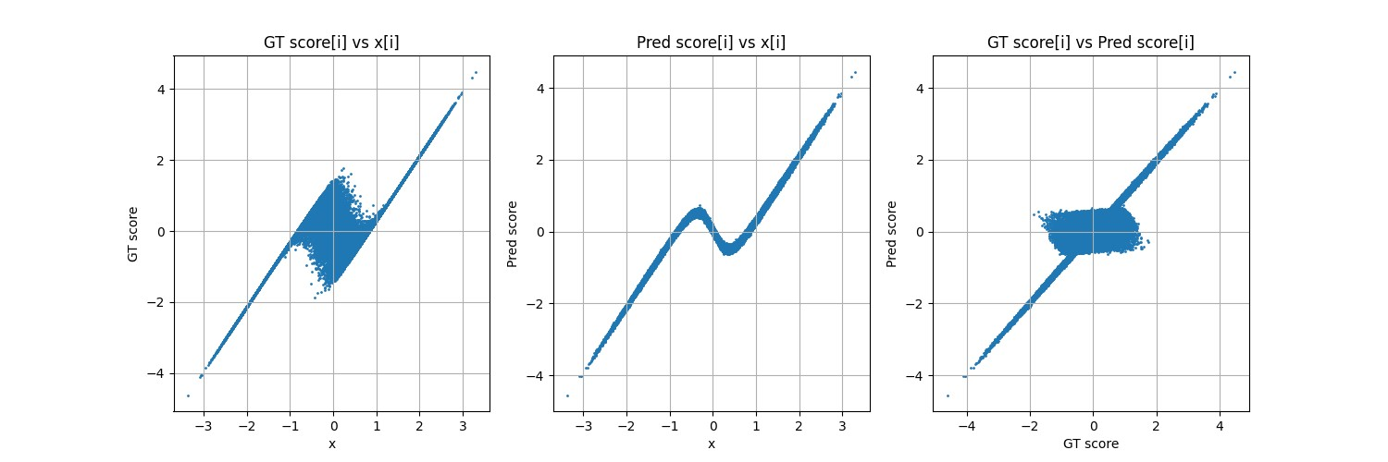}
    \caption{The local generation bias in MLP. }
    \label{fig:mlp-learn}
\end{figure}

\end{document}